\newcommand{\diag}[0]{\text{diag}}
\newcommand{\R}[0]{\mathbb{R}}
\theoremstyle{remark}
\newtheorem{thm}{Theorem}
\newtheorem{lem}[thm]{Lemma}
\newtheorem{cor}[thm]{Corollary}
\newtheorem{rem}[thm]{Remark}
\newtheorem{ex}[thm]{Example}
\newtheorem{cla}[thm]{Claim}
\renewcommand{\leq}{\leqslant} 
\renewcommand{\geq}{\geqslant}
\def\qed{ \hfill $\blacksquare$}  
\newcommand{\rd}{\mathrm{d}}
\newcommand{\cP}{\mathcal{P}}
\newcommand{\vone}{\mathbf{1}}
\newcommand{\bR}{\mathbb{R}}
\title{Wasserstein $K$-means for clustering \\ probability distributions}
\author{%
  Yubo~Zhuang,~~Xiaohui~Chen,~~Yun~Yang \\
  Department of Statistics\\
  University of Illinois at Urbana-Champaign\\
  \texttt{\{yubo2,xhchen,yy84\}@illinois.edu} \\
}
\begin{document}

\maketitle

\begin{abstract}
  Clustering is an important exploratory data analysis technique to group objects based on their similarity. The widely used $K$-means clustering method relies on some notion of distance to partition data into a fewer number of groups. In the Euclidean space, centroid-based and distance-based formulations of the $K$-means are equivalent. In modern machine learning applications, data often arise as probability distributions and a natural generalization to handle measure-valued data is to use the optimal transport metric. Due to non-negative Alexandrov curvature of the Wasserstein space, barycenters suffer from regularity and non-robustness issues. The peculiar behaviors of Wasserstein barycenters may make the centroid-based formulation fail to represent the within-cluster data points, while the more direct distance-based $K$-means approach and its semidefinite program (SDP) relaxation are capable of recovering the true cluster labels. In the special case of clustering Gaussian distributions, we show that the SDP relaxed Wasserstein $K$-means can achieve exact recovery given the clusters are well-separated under the $2$-Wasserstein metric. Our simulation and real data examples also demonstrate that distance-based $K$-means can achieve better classification performance over the standard centroid-based $K$-means for clustering probability distributions and images.
\end{abstract}

\section{Introduction}
\label{sec:intro}

Clustering is a major tool for unsupervised machine learning problems and exploratory data analysis in statistics. Suppose we observe a sample of data points $X_1,\dots,X_n$ taking values in a metric space $({\cal X}, \|\cdot\|)$. Suppose there exists a clustering structure $G_1^\ast,\dots,G_K^\ast$ such that each data point $X_i$ belongs to exactly one of the unknown cluster $G_k^\ast$. The goal of clustering analysis is to recover the true clusters $G_1^\ast,\dots,G_K^\ast$ given the input data $X_1,\dots,X_n$. In the Euclidean space ${\cal X} = \bR^p$, the $K$-means clustering is a widely used  method that achieves the empirical success in many applications~\citep{MacQueen1967_kmeans}. In modern machine learning and data science problems such as computer graphics~\citep{Solomon_2015}, data exhibits complex geometric features and traditional clustering methods developed for Euclidean data may not be well suited to analyze such data.

In this paper, we consider the clustering problem of probability measures $\mu_1,\dots,\mu_n$ into $K$ groups. As a motivating example, the MNIST dataset contains images of handwritten digits 0-9. Normalizing the greyscale images into histograms as probability measures, a common task is to cluster the images. One can certainly apply the Euclidean $K$-means to the vectorized images. However, this would lose important geometric information of the two-dimensional data. On the other hand, theory of optimal transport~\citep{Villani2003_topics-in-ot} provides an appealing framework to model measure-valued data as probabilities in many statistical tasks~\citep{Domazakis2019ClusteringMD,ChenLinMuller2021,Bigot2017_GPCA,SeguyCuturi2015_NIPS,RigolletWeed2019,HutterRigollet2019,CazellesSeguyBigotCuturiPapadakis2018}.


{\bf Background on $K$-means clustering.} Algorithmically, the $K$-means clustering have two equivalent formulations in the Euclidean space -- centroid-based and distance-based -- in the sense that they both yield the same partition estimate for the true clustering structure. Given the Euclidean data $X_1,\dots,X_n \in \R^p$, the \emph{centroid-based} formulation of standard $K$-means can be expressed as
\begin{equation}
    \label{eqn:euclidean_kmeans_center-based}
    \min_{\beta_1,\dots,\beta_K \in \R^d} \sum_{i=1}^n \min_{k \in [K]} \|X_i-\beta_k\|_2^2=\min_{G_1,\dots,G_K} \Big\{ \sum_{k=1}^K \sum_{i\in G_k} \|X_i - \bar X_k\|_2^2 : \bigsqcup_{k=1}^K G_k = [n] \Big\},
\end{equation}
where clusters $\{G_k\}_{k=1}^K$ are determined by the Voronoi diagram from $\{\beta_k\}_{k=1}^K$, $\bar X_k = |G_k|^{-1}\sum_{i\in G_k} X_i$ denotes the centroid of cluster $G_k$, $\bigsqcup$ denotes the disjoint union and $[n] = \{1,\dots,n\}$. Heuristic algorithm for solving~\eqref{eqn:euclidean_kmeans_center-based} includes Lloyd's algorithm~\citep{Lloyd1982_TIT}, which is an iterative procedure alternating the partition and centroid estimation steps. Specifically, given an initial centroid estimate $\beta_1^{(1)},\dots,\beta_K^{(1)}$, one first assigns each data point to its nearest centroid at the $t$-th iteration according to the Voronoi diagram, i.e.,
\begin{equation}
    \label{eqn:voronoi_diagram}
    G_k^{(t)} = \left\{ i \in [n] : \|X_i-\beta_k^{(t)}\|_2 \leq  \|X_i-\beta_j^{(t)}\|_2, \ \forall j \in [K] \right\},
\end{equation}
and then update the centroid for each cluster
\begin{equation}
    \label{eqn:lloyd_update}
    \beta_k^{(t+1)} = {1 \over |G_k^{(t)}|} \sum_{i \in G_k^{(t)}} X_i,
\end{equation}
where $|G_k^{(t)}|$ denotes the cardinality of $G_k^{(t)}$. Step~\eqref{eqn:voronoi_diagram} and step~\eqref{eqn:lloyd_update} alternate until convergence.

The \emph{distance-based} (sometimes also referred as \emph{partition-based}) formulation directly solves the following constrained optimization problem without referring to the estimated centroids:
\begin{equation}
    \label{eqn:euclidean_kmeans_partition-based}
        \min_{G_1,\dots,G_K} \Big\{ \sum_{k=1}^K {1\over|G_k|} \sum_{i,j \in G_k} \|X_i-X_j\|_2^2 : \bigsqcup_{k=1}^K G_k = [n] \Big\}.
\end{equation}
Observe that~\eqref{eqn:euclidean_kmeans_center-based} with nearest centroid assignment and~\eqref{eqn:euclidean_kmeans_partition-based} are equivalent for the clustering purpose due to the following identity, which extends the parallelogram law from two points to $n$ points,
\begin{align}\label{eqn:sum_of_squares}
   \sum_{i,j=1}^n\|X_i-X_j\|_2^2 =2n \sum_{i=1}^n \|X_i - \bar X\|_2^2, \quad\mbox{with}\quad \bar X=\frac{1}{n}\sum_{i=1}^n X_i \quad\mbox{and}\quad X_i\in\R^p.
\end{align}
Consequently, the two criteria yield the same partition estimate for $G_1^\ast,\dots,G_K^\ast$. The key identity~\eqref{eqn:sum_of_squares} establishing the equivalence relies on two facts of the Euclidean space: (i) it is a vector space (i.e., vectors can be averaged in the linear sense); (ii) it is flat (i.e., zero curvature), both of which are unfortunately not true for the Wasserstein space $(\cP_2(\R^p), W_2)$ that endows the space $\cP_2(\R^p)$ of all probability distributions with finite second moments with the $2$-Wasserstein metric $W_2$~\citep{ambrosio2005gradient}. In particular, the 2-Wasserstein distance between two distributions $\mu$ and $\nu$ in $\cP_2(\bR^p)$ is defined as
\begin{equation}
    \label{eqn:kantorovich_problem}
    W_2^2(\mu, \nu) := \min_{\gamma} \Big\{ \int_{\bR^p \times \bR^p} \|x - y\|_2^2 \; \rd \gamma(x,y) \Big\},
\end{equation}
where minimization over $\gamma$ runs over all possible couplings with marginals $\mu$ and $\nu$. It is well-known that the Wasserstein space is a metric space (in fact a geodesic space) with non-negative curvature in the Alexandrov sense~\citep{Lott2008}.

{\bf Our contributions.} We summarize our main contributions as followings: (i) we provide evidence for pitfalls (irregularity and non-robustness) of barycenter-based Wasserstein $K$-means, both theoretically and empirically, and (ii) we generalize the distance-based formulation of $K$-means to the Wasserstein space and establish the exact recovery property of its SDP relaxation for clustering Gaussian measures under a separateness lower bound in the 2-Wasserstein distance.


{\bf Existing work.} Since the $K$-means clustering is a worst-case NP-hard problem~\citep{aloise2009np}, approximation algorithms have been extensively studied in literature including: Llyod's algorithm~\citep{Lloyd1982_TIT}, spectral methods~\citep{vanLuxburg2007_spectralclustering,Meila01learningsegmentation,NgJordanWeiss2001_NIPS}, semidefinite programming (SDP) relaxations~\citep{PengWei2007_SIAMJOPTIM}, non-convex methods via low-rank matrix factorization~\citep{BererMonteiro2003}. Theoretic guarantees of those methods are established for statistical models on Euclidean data~\citep{LuZhou2016,vonLuxburgBelkinBousquet2008_AoS,Vempala04aspectral,FeiChen2018,GiraudVerzelen2018,chen2021cutoff,pmlr-v151-zhuang22a}.

 The concept of clustering general measure-valued data is introduced by~\cite{Domazakis2019ClusteringMD}, where the authors proposed the centroid-based Wasserstein K-means algorithm. It replaced the Euclidean norm and sample means by the Wasserstein distance and barycenters respectively.  \cite{VerdinelliWasserman2019} proposed a modified Wasserstein distance for distribution clustering. And after that,~\cite{10.1214/21-EJS1834} proposed a method in Clustering of measures via mean measure quantization by first vectorizing the measures in a finite Euclidean space followed by an efficient clustering algorithm such as single-linkage clustering with $L_\infty$ distance. The vectorization methods could improve the computational efficiency but might not capture the properties of probability measures well compared to clustering algorithms based directly on Wasserstein space.

\section{Wasserstein $K$-means clustering methods}
\label{sec:wasserstein_Kmeans}

In this section, we generalize the Euclidean $K$-means to the Wasserstein space. Our starting point is to mimic the standard $K$-means methods for Euclidean data. Thus we may define two versions of the Wasserstein $K$-means clustering formulations: \emph{centroid-based} and \emph{distance-based}.
As we mentioned in Section~\ref{sec:intro}, when working with Wasserstein space $(\cP_2(\R^p), W_2)$, the corresponding centroid-based criterion~\eqref{eqn:euclidean_kmeans_center-based} and the distance-based criterion~\eqref{eqn:euclidean_kmeans_partition-based}, where the Euclidean metric $\|\cdot\|_2$ is replaced with the $2$-Wasserstein metric $W_2$, may lead to radically different clustering schemes. To begin with, we would like to argue that due to the irregularity and non-robustness of barycenters in the Wasserstein space, the centroid-based criterion may lead to unreasonable clustering schemes that lack physical interpretations and are sensitive to small data perturbations.

\subsection{Clustering based on barycenters}
\label{subsec:centroid-based_Wasserstein_Kmeans}

The centroid-based Wasserstein $K$-means for extending the Lloyd's algorithm into the Wasserstein space has been recently considered by~\cite{Domazakis2019ClusteringMD}. Specifically, it is an iterative algorithm proceeds as following. Given an initial centroid estimate $\nu_1^{(1)},\dots,\nu_K^{(1)}$, one first assigns each probability measure $\mu_1,\dots,\mu_n$ to its nearest centroid in the Wasserstein geometry at the $t$-th iteration according to the Voronoi diagram:
\begin{equation}
    \label{eqn:voronoi_diagram_wasserstein}
    G_k^{(t)} = \left\{ i \in [n] : W_2(\mu_i, \nu_k^{(t)}) \leq  W_2(\mu_i, \nu_j^{(t)}), \quad \forall j \in [K] \right\},
\end{equation}
and then update the centroid for each cluster
\begin{equation}
    \label{eqn:lloyd_update_wasserstein}
    \nu_k^{(t+1)} = \arg\min_{\nu \in \cP_2(\R^d)} {1\over|G_k^{(t)}|} \sum_{i \in G_k^{(t)}} W_2^2(\mu_i, \nu).
\end{equation}
Note that $\nu_k^{(t+1)}$ in~\eqref{eqn:lloyd_update_wasserstein} is referred as \emph{barycenter} of probability measures $\mu_i, i \in G_k^{(t)}$, a Wasserstein analog of the Euclidean average or mean~\citep{AguehCarlier2011}. We will also ex-changeably use barycenter-based $K$-means to mean the centroid-based K-means in the Wasserstein space.
Even though the Wasserstein barycenter is a natural notion of averaging probability measures, it may exhibit peculiar behaviours and fail to represent the within-cluster data points, partly due to the violation of the generalized parallelogram law~\eqref{eqn:sum_of_squares} (for non-flat spaces) if the Euclidean metric $\|\cdot\|_2$ is replaced with the $2$-Wasserstein metric $W_2$.

\smallskip
\begin{ex}[\bf Irregularity of Wasserstein barycenters]
\label{ex:wasserstein_barycenter_support}
Wasserstein barycenter has much less regularity than the sample mean in the Euclidean space~\citep{KIM2017640}. In particular,~\cite{SANTAMBROGIO2016152} constructed a simple example of two probability measures that are supported on line segments in $\bR^2$, whereas the support of their barycenter obtained as the displacement interpolation the two endpoint probability measures is not convex (cf. left plot in Figure~\ref{fig:wasserstein_barycenter_support}). In this example, the probability density $\mu_0$ and $\mu_1$ are supported on the line segments $L_0 = \{(s, a s) : s \in [-1, 1]\}$ and $L_1 = \{(s, -a s) : s \in [-1, 1]\}$ respectively. We choose $a \in (0, 1)$ to identify the orientation of $L_0$ and $L_1$ based on the $x$-axis. Moreover, we consider the linear density functions $\mu_0(s) = (1-s)/2$ and $\mu_1(s) = (1+s)/2$ for $s \in [-1, 1]$ supported on $L_0$ and $L_1$ respectively. Then the optimal transport map $T:=T_{\mu_0 \to \mu_1}$ from $\mu_0$ to $\mu_1$ is given by
\begin{equation}
    \label{eqn:OT_map_irregularity}
    T(x, a x) = \big( -1+\sqrt{4-(1-x)^2}, \ \ -a \cdot (-1+\sqrt{4-(1-x)^2}) \big),
\end{equation}
and the barycenter corresponds to the displacement interpolation $\mu_t = [(1-t)\text{id} + t T]_\sharp \mu_0$ at $t = 0.5$~\citep{MCCANN1997153}. For self-contained purpose, we give the proof of~\eqref{eqn:OT_map_irregularity} in Appendix~\ref{app:subsec:counterexample_irregularity}.
Fig.~\ref{fig:wasserstein_barycenter_support} on the left shows the support of barycenter $\mu_{0.5}$ is not convex (in fact part of an ellipse boundary) even though the supports of $\mu_0$ and $\mu_1$ are convex. This example shows that the barycenter functional is not geodesically convex in the Wasserstein space. As barycenters turn out to be essential in centroid-based Wasserstein $K$-means and irregularity of the barycenter may fail to represent the cluster (see more details in Example~\ref{ex:failure_barycenter_Wasserstein_Kmeans} and Remark~\ref{rem:further_insight} below), this counter-example is our motivation to seek alternative formulation. \qed

\begin{figure}[h!]
   \centering
   \subfigure{\includegraphics[trim={0.1cm 0.1cm 0.1cm 1cm},clip,scale=0.45]{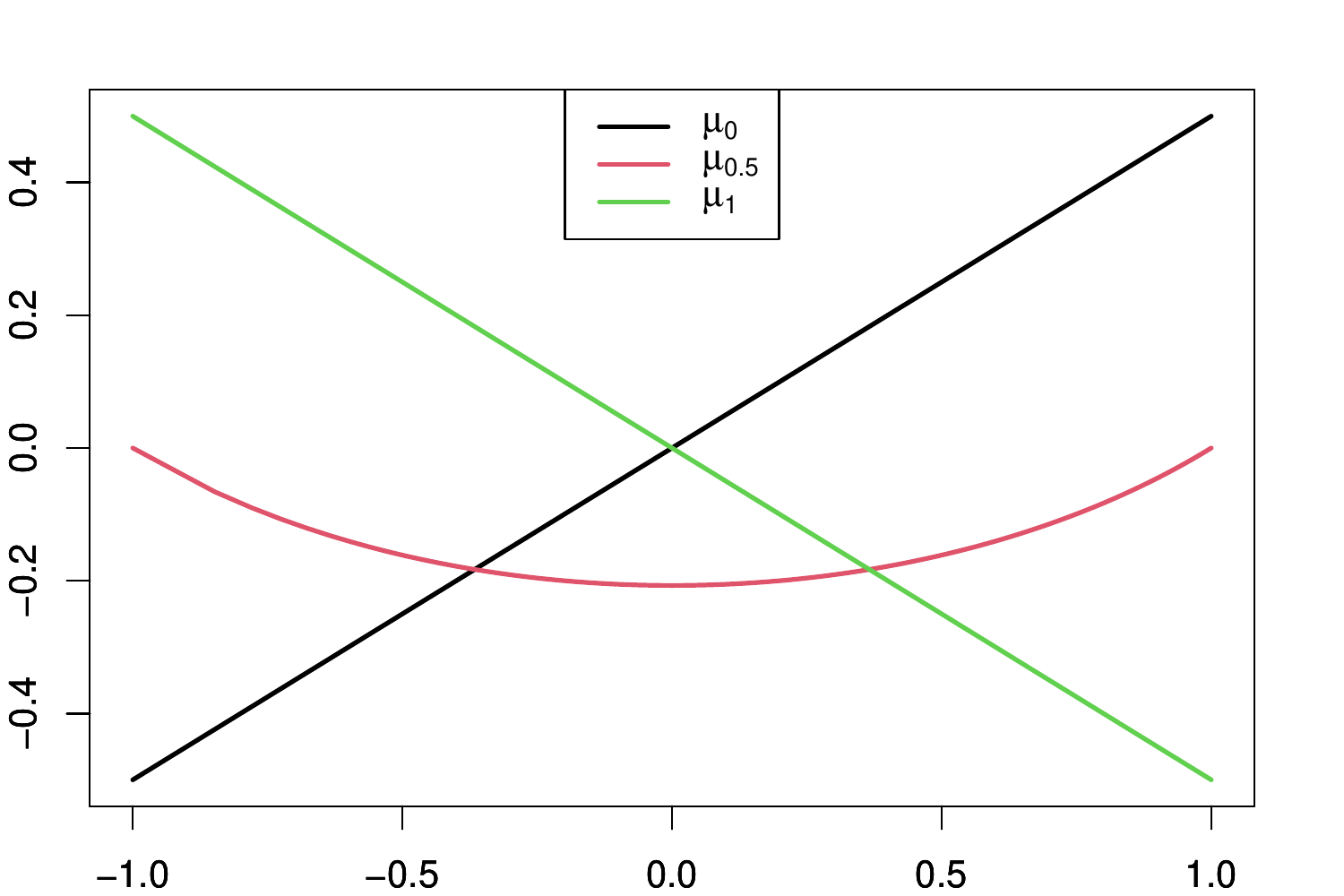}} 
    \subfigure{\includegraphics[trim={0.1cm 0.1cm 0.1cm 1cm},clip,scale=0.45]{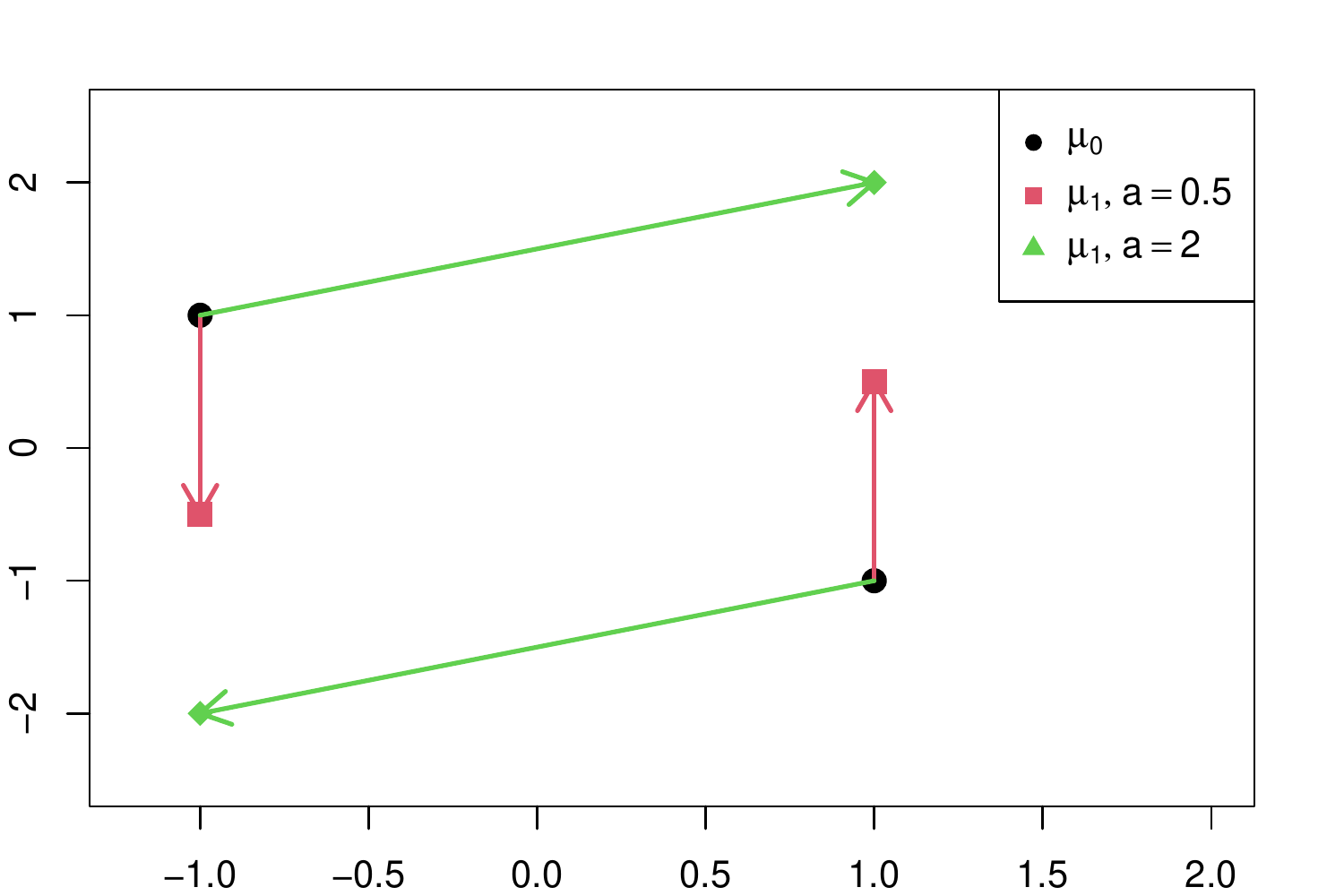}} 
   \caption{Left: support of the Wasserstein barycenter as the displacement interpolation between $\mu_0$ and $\mu_1$ at $t=0.5$ in Example~\ref{ex:wasserstein_barycenter_support}. Right: non-robustness of the optimal transport map (arrow lines) and Wasserstein barycenter w.r.t. small perturbation around $a=1$ for the target measure in Example~\ref{eg:non_robust}.}
   \label{fig:wasserstein_barycenter_support}
\end{figure}
\end{ex}

\smallskip
\begin{ex}[\bf Non-robustness of Wasserstein barycenters]\label{eg:non_robust}
Another unappealing feature of the Wasserstein barycenter is its sensitivity to data perturbation: a small (local) change in one contributing probability measure may lead to large (global) changes in the resulting barycenter. See Fig.~\ref{fig:wasserstein_barycenter_support} on the right for such an example. In this example, we take the source measure as $\mu_0 = 0.5 \, \delta_{(-1,1)} + 0.5 \,\delta_{(1,-1)}$ and the target measure as $\mu_1 = 0.5 \,\delta_{(-1,-a)} + 0.5 \,\delta_{(1,a)}$ for some $a > 0$. It is easy to see that the optimal transport map $T := T_{\mu_0 \to \mu_1}$ has a dichotomy behavior:
\begin{equation}
    T(-1, 1) = \left\{
    \begin{matrix}
    (-1, -a) & \text{if } 0 < a < 1 \\
    (1, a) & \text{if } a > 1 \\
    \end{matrix}
    \right. \quad \text{and} \quad
    T(1, -1) = \left\{
    \begin{matrix}
    (1, a) & \text{if } 0 < a < 1 \\
    (-1, -a) & \text{if } a > 1 \\
    \end{matrix}
    \right. .
\end{equation}
Thus the Wasserstein barycenter determined by the displacement interpolation $\mu_t = [(1-t)\text{id} + t T]_\sharp \mu_0$ is a discontinuous function at $a=1$. This non-robustness can be attribute to the discontinuity of the Wasserstein barycenter as a function of its input probability measures; in contrast, the Euclidean mean is a globally Lipchitz continuous function of its input points. \qed
\end{ex}


  

Because of these pitfalls of the Wasserstein barycenter shown in Examples~\ref{ex:wasserstein_barycenter_support} and~\ref{eg:non_robust}, the centroid-based Wasserstein $K$-means approach described at the beginning of this subsection may lead to unreasonable and unstable clustering schemes. In addition, an ill-conditioned configuration may significantly slow down the convergence of commonly used barycenter approximating algorithms such as iterative Bregman projections~\citep{benamou2015iterative}. Below, we give a concrete example of such phenomenon in the clustering context.

\smallskip
\begin{ex}[\bf Failure of centroid-based Wasserstein $K$-means]
\label{ex:failure_barycenter_Wasserstein_Kmeans}
In a nutshell, the failure in this example is due to the counter-intuitive phenomenon illustrated in the right panel of Fig.~\ref{fig:counterexample1}, where some distribution $\mu_3$ in the Wasserstein space may have larger $W_2$ distance to Wasserstein barycenter $\mu^\ast_1$ than every distribution $\mu_i$ ($i=1,2$) that together forms it. As a result of this strange configuration, even though $\mu_3$ is closer to $\mu_1$ and $\mu_2$ from the first cluster with barycenter $\mu^\ast_1$ than $\mu_4$ coming from a second cluster with barycenter $\mu_2^\ast$, it will be incorrectly assigned to the second cluster using the centroid-based criterion~\eqref{eqn:voronoi_diagram_wasserstein}, since $W_2(\mu_3,\mu^\ast_1) > W_2(\mu_3,\mu^\ast_2) > \max\big\{W_2(\mu_3,\mu_1),\, W_2(\mu_3,\mu_2)\big\}$. In contrast, for Euclidean spaces due to the following equivalent formulation of the generalized parallelogram law~\eqref{eqn:sum_of_squares},
\begin{align*}
    \sum_{i=1}^n \|X - X_i\|_2^2 = n \|X-\bar X\|_2^2 + \sum_{i=1}^n \|X_i-\bar X\|_2^2\geq n \|X-\bar X\|_2^2,\quad \mbox{for any }X\in\R^p,
\end{align*}
there is always some point $X_{i^\dagger}$ satisfying $\|X-X_{i^\dagger}\|_2 \geq \|X-\bar X\|_2$, that is, further away from $X$ than the mean $\bar X$; thereby excluding counter-intuitive phenomena as the one shown in Fig.~\ref{fig:counterexample1}.

\begin{figure}[h!] 
   \centering
      \subfigure{\includegraphics[trim={0.1cm 1cm 0.1cm 1cm},clip,scale=0.19]{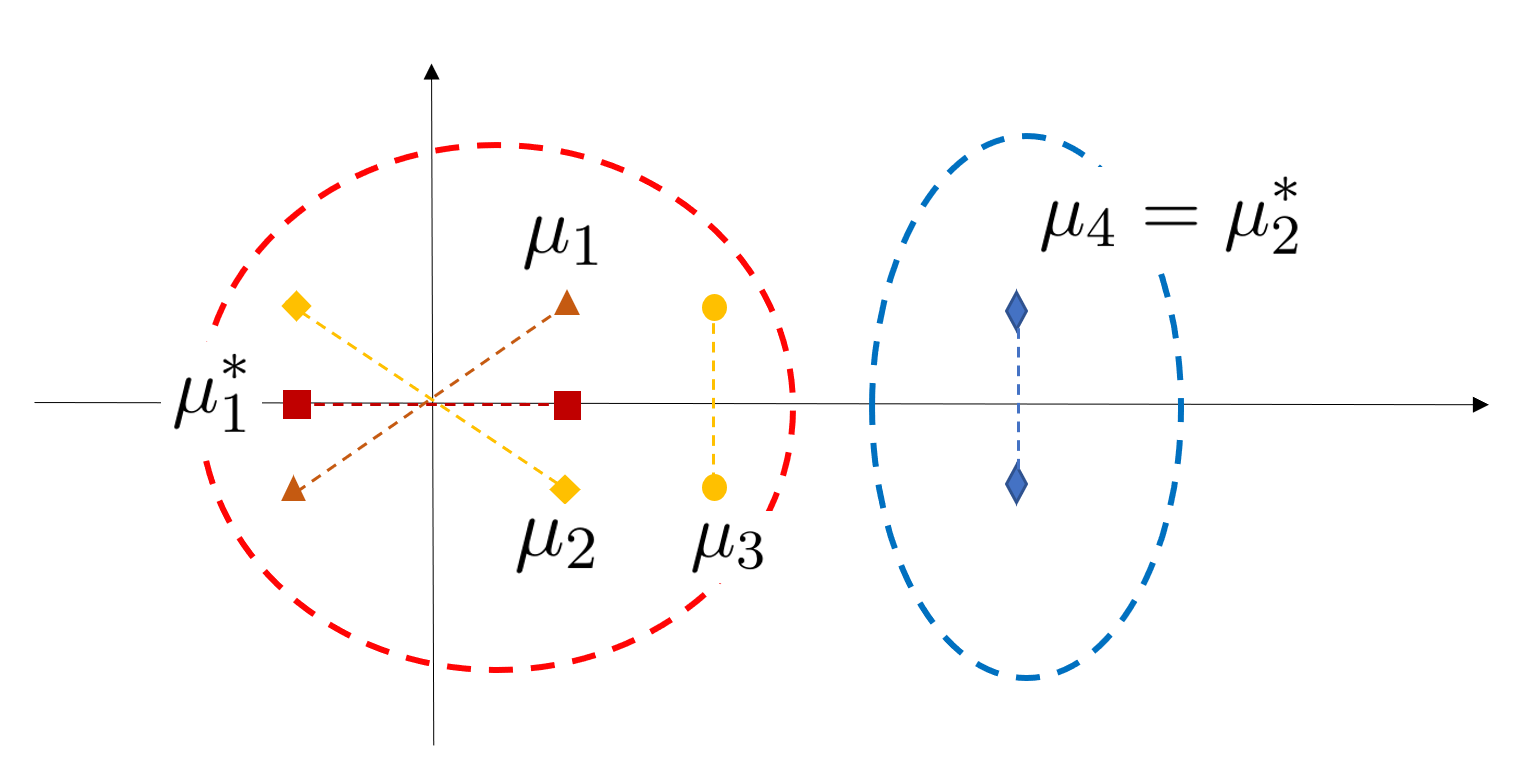}} 
      \subfigure{\includegraphics[trim={0.1cm 1cm 0.1cm 1cm},clip,scale=0.13]{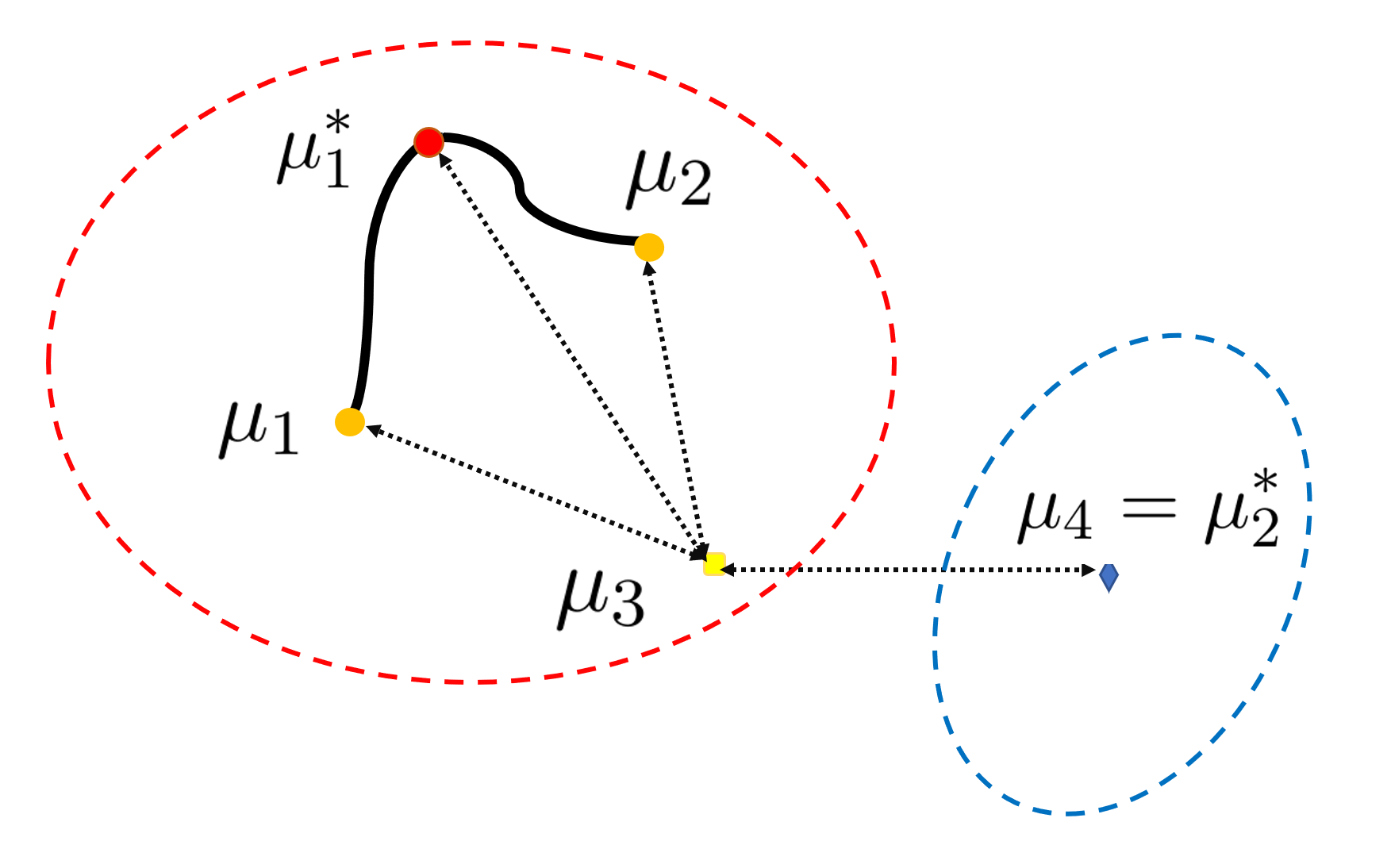}} \\[-3ex]
      \vspace{.1in}
   \caption{Left: visualization of Example~\ref{ex:failure_barycenter_Wasserstein_Kmeans} in $\mathbb{R}^2$ and Wasserstein space. Right: the black curve connecting $\mu_1$ and $\mu_2$ depicts the geodesic between them.}
   \label{fig:counterexample1}
   \vspace{-0.4cm}
\end{figure}

Concretely, the first cluster $G_1^\ast$ is shown in the left panel of Fig.~\ref{fig:counterexample1} highlighted by a red circle, consisting of $m$ copies of $(\mu_1,\mu_2 )$ pairs and one copy of $\mu_3$; the second cluster $G_2^\ast$ containing copies of $\mu_4$ is highlighted by a blue circle. Each distribution assigns equal probability mass to two points, where the two supporting points are connected by a dashed line for easy illustration. More specifically, we set
\begin{align*}
    \mu_1&=0.5\,\delta_{(x,y)}+0.5\,\delta_{(-x,-y)}, \quad \mu_2=0.5\,\delta_{(x,-y)}+0.5\,\delta_{(-x,y)}, \\
    \mu_3&=0.5\,\delta_{(x+\epsilon_1,y)}+0.5\,\delta_{(x+\epsilon_1,-y)}, \quad\mbox{and}\quad  \mu_4=0.5\,\delta_{(x+\epsilon_1+\epsilon_2,y)}+0.5\,\delta_{(x+\epsilon_1+\epsilon_2,-y)},
\end{align*}
where $\delta_{(x,y)}$ denotes the point mass measure at point $(x,y)$, and $(x,y,\epsilon_1,\epsilon_2)$ are positive constants. The property of this configuration can be summarized by the following lemma.

\smallskip
\begin{lem}[\bf Configuration characterization]\label{lem:counter_example}
If $(x,y,\epsilon_1,\epsilon_2)$ satisfies
\[
y^2<\min\{x^2, 0.25\,\Delta_{\epsilon_1,x} \} \quad\mbox{and}\quad \Delta_{\epsilon_1,x}<\epsilon_2^2<\Delta_{\epsilon_1,x}+y^2,
\]
where $\Delta_{\epsilon_1,x}:=\epsilon_1^2+2x^2+2x\epsilon_1$, then for all sufficiently large $m$ (number of copies of $\mu_1$ and $\mu_2$),
\[
W_2(\mu_3, \mu^\ast_2)<W_2(\mu_3, \mu^\ast_1) \qquad\mbox{and} \quad \underbrace{\max_{k=1,2}\max_{i,j\in G_k} W_2(\mu_i, \mu_j)}_{\mbox{\small largest within-cluster distance}} \! < \! \underbrace{\min_{i\in G_1, j\in G_2} W_2(\mu_i, \mu_j),}_{\mbox{\small least between-cluster distance}}
\]
where $\mu^\ast_k$ denotes the Wasserstein barycenter of cluster $G_k$ for $k=1,2$.
\end{lem}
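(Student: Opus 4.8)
The whole statement reduces to a short list of explicit two-point optimal transport computations, so the plan is to (i) tabulate every pairwise $W_2$ distance among $\mu_1,\mu_2,\mu_3,\mu_4$, (ii) identify the two barycenters $\mu^\ast_1,\mu^\ast_2$, and (iii) read off the two asserted inequalities from the hypotheses. For any two measures $\frac12\delta_{p}+\frac12\delta_{q}$ and $\frac12\delta_{p'}+\frac12\delta_{q'}$ the squared distance is the minimum over the only two admissible matchings, $\frac12\min\{\|p-p'\|_2^2+\|q-q'\|_2^2,\ \|p-q'\|_2^2+\|q-p'\|_2^2\}$. Carrying this out gives $W_2^2(\mu_1,\mu_2)=4y^2$ (the matching pairing equal $x$-coordinates wins because $y^2<x^2$), $W_2^2(\mu_1,\mu_3)=W_2^2(\mu_2,\mu_3)=\Delta_{\epsilon_1,x}$, $W_2^2(\mu_3,\mu_4)=\epsilon_2^2$, and $W_2^2(\mu_1,\mu_4)=W_2^2(\mu_2,\mu_4)=\Delta_{\epsilon_1+\epsilon_2,x}$, i.e.\ the same expression as $\Delta_{\epsilon_1,x}$ with $\epsilon_1$ replaced by $\epsilon_1+\epsilon_2$.

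Next I would pin down the two centroids. Since $G_2$ consists only of copies of $\mu_4$, its barycenter is $\mu^\ast_2=\mu_4$. For $G_1$, the mass carried by the single copy of $\mu_3$ is $\frac{1}{2m+1}\to 0$, so as $m\to\infty$ the barycenter converges to the equal-weight barycenter of $\{\mu_1,\mu_2\}$, which is the midpoint of the unique $W_2$-geodesic between them. The optimal coupling of $\mu_1$ and $\mu_2$ pairs points of equal $x$-coordinate, so the displacement-interpolation midpoint is $\mu^\ast_1=\tfrac12\delta_{(x,0)}+\tfrac12\delta_{(-x,0)}$. A further two-point computation then yields $W_2^2(\mu_3,\mu^\ast_1)=\Delta_{\epsilon_1,x}+y^2$ and $W_2^2(\mu_3,\mu^\ast_2)=W_2^2(\mu_3,\mu_4)=\epsilon_2^2$.

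With these values the two conclusions follow at once. The inequality $\epsilon_2^2<\Delta_{\epsilon_1,x}+y^2$ is exactly $W_2(\mu_3,\mu^\ast_2)<W_2(\mu_3,\mu^\ast_1)$. For the separation claim, the hypothesis $y^2<0.25\,\Delta_{\epsilon_1,x}$ forces $W_2(\mu_1,\mu_2)=2y<\sqrt{\Delta_{\epsilon_1,x}}$, so the largest within-cluster distance is $\sqrt{\Delta_{\epsilon_1,x}}$ (attained by the $\mu_3$-to-$\mu_1$/$\mu_2$ pairs, the within-$G_2$ distances being zero); and because $\Delta_{\epsilon_1+\epsilon_2,x}\ge(\epsilon_1+\epsilon_2)^2>\epsilon_2^2$, the least between-cluster distance equals $\epsilon_2$. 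Hence $\Delta_{\epsilon_1,x}<\epsilon_2^2$ gives $\sqrt{\Delta_{\epsilon_1,x}}<\epsilon_2$, which is precisely largest within-cluster $<$ least between-cluster distance.

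The one genuinely delicate point, and the reason for the ``sufficiently large $m$'' qualifier, is justifying that the finite-$m$ barycenter of $G_1$ may be replaced by its limit $\tfrac12\delta_{(x,0)}+\tfrac12\delta_{(-x,0)}$. I would handle this through stability of Wasserstein barycenters under perturbation of the mixing weights: the finite-$m$ centroid differs from the limit by $O(1/m)$ in $W_2$, and since every inequality above is \emph{strict} in the limit, continuity of $W_2(\mu_3,\cdot)$ preserves all of them for all large enough $m$. Establishing this stability estimate, equivalently ruling out a jump of the barycenter as the $\mu_3$-weight crosses zero, is the main obstacle; the remaining steps are routine planar algebra.
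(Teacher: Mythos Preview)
Your pairwise $W_2$ computations, the identification $\mu^\ast_2=\mu_4$, the limiting barycenter $\mu_0=\tfrac12\delta_{(x,0)}+\tfrac12\delta_{(-x,0)}$, and the reading-off of both asserted inequalities from the hypotheses are all correct and coincide with what the paper does; in particular the within/between-cluster inequality is handled identically (the paper simply says it is ``direct to check'').

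The one place your proposal diverges from the paper is exactly the step you flag as the main obstacle: passing from the limiting barycenter $\mu_0$ to the actual finite-$m$ barycenter $\mu^\ast_1$. You propose to invoke a generic stability result for Wasserstein barycenters under perturbation of mixing weights, together with continuity of $W_2(\mu_3,\cdot)$. The paper does \emph{not} appeal to any such black-box stability; instead it proves the single one-sided estimate it needs, $W_2^2(\mu_3,\mu^\ast_1)\ge W_2^2(\mu_3,\mu_0)-\varepsilon$ for large $m$, by hand. Concretely: from the barycenter optimality
\[
m\big[W_2^2(\mu^\ast_1,\mu_1)+W_2^2(\mu^\ast_1,\mu_2)\big]+W_2^2(\mu^\ast_1,\mu_3)\ \le\ m\cdot 2y^2 + W_2^2(\mu_0,\mu_3),
\]
one gets $W_2^2(\mu^\ast_1,\mu_1)+W_2^2(\mu^\ast_1,\mu_2)\le 2y^2+C/m$. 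The paper then disintegrates the two optimal couplings $\mu^\ast_1\to\mu_1$ and $\mu^\ast_1\to\mu_2$ against each other to produce sub-measures $\xi_{i,j}$, and uses the algebraic identity $\|v-v_1\|^2+\|v-v_3\|^2=2\|v-(x,0)\|^2+2y^2$ to show that $\mu^\ast_1$ puts mass $\ge \tfrac12-O(1/m)$ in small balls around $(x,0)$ and $(-x,0)$, with second moments $O(1/m)$ there. A final H\"older/Cauchy--Schwarz bound on the transport cost from $\mu_3$ then yields the desired lower bound on $W_2^2(\mu_3,\mu^\ast_1)$.

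Your abstract route is viable in principle---since $y<x$ the optimal $\mu_1$-to-$\mu_2$ coupling is unique, so $\mu_0$ is the \emph{unique} equal-weight barycenter, and a $\Gamma$-convergence argument would then force $\mu^\ast_1\to\mu_0$ in $W_2$---but it is not an off-the-shelf citation, and quoting ``stability of Wasserstein barycenters'' is risky in a paper whose Example~\ref{eg:non_robust} is precisely about their \emph{non}-robustness. The paper's explicit coupling argument is more elementary, fully self-contained, and gives the quantitative $O(1/m)$ control you asserted but did not derive; it is worth seeing how the variational inequality for the barycenter is converted directly into concentration of $\mu^\ast_1$ without ever characterizing $\mu^\ast_1$ structurally.
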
 
Note that the condition of Lemma~\ref{lem:counter_example} implies $y<x$. Therefore, the barycenter between $\mu_1$ and$ \mu_2$ is $\tilde{\mu}^\ast_1:=0.5\,\delta_{(x,0)}+0.5\,\delta_{(-x,0)}$ lying on the horizontal axis. By increasing $m$, the barycenter $\mu^\ast_1$ of cluster $G_1^\ast$ can be made arbitrarily close to $\tilde{\mu}^\ast$.  The second inequality in Lemma~\ref{lem:counter_example} shows that all within-cluster distances are strictly less than the between-cluster distances; therefore, clustering based on pairwise distances is able to correctly recover the cluster label of $\mu_3$. However, since $\mu_3$ is closer to the barycenter $\mu^\ast_2$ of cluster $G_2^\ast$ according to the first inequality in Lemma~\ref{lem:counter_example}, it will be mis-classified into $G_2^\ast$ using the centroid-based criterion. We emphasize that cluster positions in this example are generic and do exist in real data; see Remark~\ref{rem:further_insight} and Section~\ref{subsec:realdata} for further discussions on our experiment results on MNIST data. Moreover, similar to Example~\ref{eg:non_robust}, a small change in the orientation of distribution $\mu_1$ may completely alter the clustering membership of $\mu_3$ based on the centroid criterion. Specifically, if we slightly increase $x$ to make it exceed $y$, then the barycenter between $\mu_1$ and $\mu_2$ becomes $\bar{\mu}^\ast_1:=0.5\,\delta_{(0,y)}+0.5\,\delta_{(0,-y)}$ that lies on the vertical axis. Correspondingly, if based on centroids, then $\mu_3$ should be clustered into $G_1^\ast$ as it is closer to the barycenter $\mu_1^\ast$ of $G_1^\ast$ than the barycenter $\mu_2^\ast$ of $G_2^\ast$. Therefore, the centroid-based criterion can be unstable against data perturbations. In comparison, a pairwise distances based criterion always assigns $\mu_3$ into cluster $G_2^\ast$ no matter $x<y$ or $x>y$. \qed

\end{ex}




\subsection{Clustering based on pairwise distances}
Due to the irregularity and non-robustness of centroid-based Wasserstein $K$-means, we instead propose and advocate the use of distance-based Wasserstein $K$-means below, which extends the Euclidean distance-based $K$-means formulation~\eqref{eqn:euclidean_kmeans_partition-based} into the Wasserstein space,
\begin{equation}
    \label{eqn:wasserstein_kmeans_partition}
        \min_{G_1,\dots,G_K} \Big\{ \sum_{k=1}^K {1\over|G_k|} \sum_{i,j \in G_k} W_2^2(\mu_i, \mu_j) : \bigsqcup_{k=1}^K G_k = [n] \Big\}.
\end{equation}
Correspondingly, we can analogously design a greedy algorithm resembling the Wasserstein Lloyd's algorithm described in Section~\ref{subsec:centroid-based_Wasserstein_Kmeans} that solves the centroid-based Wasserstein $K$-means.
Specifically, the greedy algorithm proceeds in an iterative manner as following. Given an initial cluster membership estimate $G_1^{(1)},\dots,G_K^{(1)}$, one assigns each probability measure $\mu_1,\dots,\mu_n$ based on minimizing the averaged squared $W_2$ distances to all current members in every cluster, leading to an updated cluster membership estimate
\begin{equation}
    \label{eqn:cluster_mem_wasserstein}
    G_k^{(t+1)} = \bigg\{ i \in [n] : \frac{1}{|G_k^{(t)}|} \sum_{s\in G_k^{(t)}} W^2_2(\mu_i, \mu_s) \leq  \frac{1}{|G_j^{(t)}|} \sum_{s\in G_j^{(t)}} W^2_2(\mu_i, \mu_s), \quad \forall j \in [K] \bigg\}.
\end{equation}
We arbitrarily select among the least $W_2$ distance clusters in the case of a tie.
We highlight that the center-based and distance-based Wasserstein $K$-means formulations may not necessarily be equivalent to yield the same cluster labels (cf.~Example~\ref{ex:failure_barycenter_Wasserstein_Kmeans}).  Below, we shall give some example illustrating connections to the standard $K$-means clustering in the Euclidean space.

\smallskip
\begin{ex}[\bf Degenerate probability measures]
\label{ex:dirac_delta}
If the probability measures are Dirac at point $X_i \in \R^p$, i.e., $\mu_i = \delta_{X_i}$, then the Wasserstein $K$-means is the same as the standard $K$-means since $W_2(\mu_i, \mu_j) = \|X_i-X_j\|_2$.\qed
\end{ex}

\smallskip
\begin{ex}[\bf Gaussian measures]
\label{ex:gaussian_measures}
If $\mu_i = N(m_i, V_i)$ with positive-definite covariance matrices $\Sigma_i \succ 0$, then the squared $2$-Wasserstein distance can be expressed as the sum of the squared Euclidean distance on the mean vector and
\begin{equation}
\label{eqn:bures_metric_covmat}
    d^2(V_i, V_j) = \mbox{\rm Tr}\left[V_i+V_j-2\left(V_i^{1/2} V_j V_i^{1/2}\right)^{1/2}\right],
\end{equation}
the squared \emph{Bures distance} on the covariance matrix~\citep{BHATIA2019165}. Here, we use $V^{1/2}$ to denote the unique symmetric square root matrix of $V \succ 0$. That is,
\begin{equation}
    \label{eqn:wasserstein_bures_metric}
    W_2^2(\mu_i, \mu_j) = \|m_i-m_j\|_2^2 + d^2(V_i, V_j).
\end{equation}
Then the Wasserstein $K$-means, formulated either in~\eqref{eqn:voronoi_diagram_wasserstein} or~\eqref{eqn:wasserstein_kmeans_partition}, can be viewed as a \emph{covariance-adjusted} Euclidean $K$-means by taking account into the shape or orientation information in the (non-degenerate) Gaussian inputs.\qed
\end{ex}

\smallskip
\begin{ex}[\bf One-dimensional probability measures]
If $\mu_i$ are probability measures on $\R$ with cumulative distribution function (cdf) $F_i$, then the Wasserstein distance can be written in terms of the \emph{quantile transform}
\begin{equation}
    \label{eqn:quantile_transform}
    W_2^2(\mu_i, \mu_j) = \int_0^1 [F_i^-(u) - F_j^-(u)]^2 \; \rd u,
\end{equation}
where $F^-$ is the generalized inverse of the cdf $F$ on $[0,1]$ defined as $F^-(u) = \inf \{x \in \bR : F(x) > u\}$ (cf. Theorem 2.18~\citep{Villani2003_topics-in-ot}). Thus the one-dimensional probability measures in Wasserstein space can be isometrically embedded in a flat $L^2$ space, and we can bring back the equivalence of the Wasserstein and Euclidean $K$-means clustering methods.\qed
\end{ex}

\section{SDP relaxation and its theoretic guarantee}
\label{sec:SDP_relaxation}
Note that Wasserstein Lloyd's algorithm requires to use and compute the barycenter in~\eqref{eqn:voronoi_diagram_wasserstein} and~\eqref{eqn:lloyd_update_wasserstein} at each iteration, which can be computationally expensive when the domain dimension $d$ is large or the configuration is ill-conditioned (cf.~Example~\ref{eg:non_robust}). On the other hand, it is known that solving the distance-based $K$-means~\eqref{eqn:euclidean_kmeans_partition-based} is worst-case NP-hard for Euclidean data. Thus we expect solving the distance-based Wasserstein $K$-means~\eqref{eqn:wasserstein_kmeans_partition} is also computationally hard. A common way is to consider convex relaxations to approximate the solution of~\eqref{eqn:wasserstein_kmeans_partition}. 
It is known that certain SDP relaxation is information-theoretically tight for~\eqref{eqn:euclidean_kmeans_partition-based} when the data $X_1,\dots,X_n \in \bR^p$ are generated from a Gaussian mixture model with isotropic known variance~\citep{chen2021cutoff}. In this paper, we extend the idea into Wasserstein setting for solving~\eqref{eqn:wasserstein_kmeans_partition}. 

A typical SDP relaxation for Euclidean data uses pairwise inner products to construct an affinity matrix for clustering~\citep{PengWei2007_SIAMJOPTIM}; unfortunately, due to the non-flatness nature, a globally well-defined inner product does not exist for Wasserstein spaces with dimension higher than one. Therefore, we will derive a Wasserstein SDP relaxation to the combinatorial optimization problem~\eqref{eqn:euclidean_kmeans_partition-based} using the squared distance matrix $A_{n \times n} = \{a_{ij}\}$ with $a_{ij} = W_2^2(\mu_i, \mu_j)$. Concretely, we can one-to-one reparameterize any partition $(G_1,\dots,G_K)$ as a binary \emph{assignment matrix} $H = \{h_{ik}\} \in \{0,1\}^{n \times K}$ such that $h_{ik}=1$ if $i \in G_k$ and $h_{ik}=0$ otherwise. Then~\eqref{eqn:wasserstein_kmeans_partition} can be expressed as a nonlinear 0-1 integer program,
\begin{equation} 
    \min \Big\{ \langle A, H B H^\top \rangle : H \in \{0,1\}^{n \times K}, \, H \vone_K = \vone_n \Big\},
\end{equation}
where $\vone_n$ is the $n \times 1$ vector of all ones and $B = \diag(|G_1|^{-1},\dots,|G_K|^{-1})$. Changing of variable to the \emph{membership matrix} $Z = H B H^\top$, we note that $Z_{n \times n}$ is a symmetric positive semidefinite (psd) matrix $Z \succeq 0$ such that $\mbox{\rm Tr}(Z) = K, Z \vone_n = \vone_n$, and $Z \geq 0$ entrywise. Thus we obtain the SDP relaxation of~\eqref{eqn:wasserstein_kmeans_partition} by only preserving these convex constraints:
\begin{equation}
    \label{eqn:wasserstein_kmeans_SDP}
    \begin{gathered}
        \min_{Z \in \R^{n \times n}} \Big\{ \langle A, Z \rangle : Z^\top = Z, \, Z \succeq 0, \, \mbox{\rm Tr}(Z) = K, \, Z \vone_n = \vone_n, \, Z \geq 0 \Big\}.
    \end{gathered}
\end{equation}


To theoretically justify the SDP formulation~\eqref{eqn:wasserstein_kmeans_SDP} of Wasserstein $K$-means, we consider the scenario of clustering Gaussian distributions in Example~\ref{ex:gaussian_measures}, where the Wasserstein distance~\eqref{eqn:wasserstein_bures_metric} contains two separate components: the Euclidean distance on mean vector and the Bures distance~\eqref{eqn:bures_metric_covmat} on covariance matrix. Without loss of generality, we focus on mean-zero Gaussian distributions since optimal separation conditions for exact recovery based on the Euclidean mean component have been established in~\citep{chen2021cutoff}. Suppose we observe Gaussian distributions $\nu_i\sim N(0,V_i), \ i\in [n]$ from $K$ groups $G_1^*,\cdots,G_K^*$, where cluster $G_k^\ast$ contains $n_k$ members, and the covariance matrices have the following clustering structure: if $i\in G_k^*$, then
\begin{equation}
\label{eqn:stat_model}
    V_i=(I+tX_i)V^{(k)}(I+tX_i) \quad \text{with } X_1, \dots , X_n \overset{i.i.d.}{\sim}SymN(0, 1),
\end{equation}
where the psd matrix $V^{(k)}$ is the center of the $k$-th cluster, $SymN(0, 1)$ denotes the symmetric random matrix with i.i.d. standard normal entries, and $t$ is a small perturbation parameter such that  $(I+tX_i)$ is psd with high probability. For zero-mean Gaussian distributions, we have $W_2(N(0,V),N(0,U))=d(V,U)$ according to~\eqref{eqn:wasserstein_bures_metric}. Note that on the Riemannian manifold of psd matrices, the geodesic emanating from $V^{(k)}$ in the direction $X$ as a symmetric matrix can be linearized by $V = (I+tX)V^{(k)}(I+tX)$ in a small neighborhood of $t$, thus motivating the parameterization of our statistical model in~\eqref{eqn:stat_model}. The next theorem gives a separation lower bound to ensure exact recovery of the clustering labels for Gaussian distributions. 

\begin{thm}[\bf Exact recovery for clustering Gaussians]
\label{thm:separation_upper_bound_gaussians}
Let  $\Delta^2:=\min_{k\neq l} d^2(V^{(k)},V^{(l)})$ denote the minimal pairwise separation among clusters, $\bar{n}:=\max_{k\in[K]}n_k$ (and $\underline{n}:=\min_{k\in[K]}n_k$) the maximum (minimum) cluster size, and $m:=\min_{k\ne l}\frac{2n_kn_l}{n_k+n_l}$ the minimal pairwise harmonic mean of cluster sizes. Suppose the covariance matrix $V_i$ of Gaussian distribution $\nu_i=N(0, V_i)$ is independently drawn from model~\eqref{eqn:stat_model} for $i=1,2,\ldots,n$. Let $\beta\in(0,1)$. If the separation $\Delta^2$ satisfies
\begin{equation}
    \Delta^2 >  \bar{\Delta}^2:\,=\frac{C_1 t^2} {\min\{(1-\beta)^2,\beta^2\}}\, \mathcal{V}\, p^2\log n,
\end{equation}
then the SDP~\eqref{eqn:wasserstein_kmeans_SDP} achieves exact recovery with probability at least $1-C_2 n^{-1}$, provided that
$$ \underline{n} \ge C_3 \log^2 n , \ \ t\le C_4\sqrt{\log n}/\big[(p+\log \bar{n})  \mathcal{V}^{1/2}T_v^{1/2}\big], \ \ n/m\le C_5\log n,$$
where  $\mathcal{V}=\max_k \left\|V^{(k)} \right\|_{\mbox{\scriptsize \rm op}}$, $T_v=\max_k \mbox{\rm Tr}\big[\big(V^{(k)}\big)^{-1}\big]$, and $C_i,i=1,2,3,4,5$ are constants.
\end{thm}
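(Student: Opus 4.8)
The plan is to prove exact recovery via the standard primal--dual certificate approach for $K$-means SDPs, adapting the argument of~\citep{chen2021cutoff} from the Euclidean to the Bures--Wasserstein setting. Let $Z^\ast=\sum_{k=1}^K n_k^{-1}\ind_{G_k^\ast}\ind_{G_k^\ast}^\top$ be the membership matrix of the true partition; it is feasible for~\eqref{eqn:wasserstein_kmeans_SDP}. I would show $Z^\ast$ is the \emph{unique} optimum by exhibiting dual variables $\lambda\in\R$, $\mvalpha\in\R^n$, an entrywise-nonnegative matrix $B$ supported on the off-diagonal (between-cluster) blocks, and a psd slack matrix $Q\succeq0$ satisfying the KKT stationarity relation $A-\lambda I-\tfrac12(\mvalpha\vone_n^\top+\vone_n\mvalpha^\top)-B=Q$ together with complementary slackness ($Q\,\ind_{G_k^\ast}=0$ for all $k$, and $B$ vanishing on the within-cluster blocks). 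The existence of such a certificate reduces exact recovery to two deterministic conditions on the squared-distance matrix $A=\{d^2(V_i,V_j)\}$: (a) a \emph{margin} condition, that each point is closer in averaged squared Bures distance to its own cluster than to any other, and (b) a \emph{spectral} condition, that the centered within-cluster block of $A$ is small in operator norm relative to the separation.

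The second step is to understand the entries of $A$ under model~\eqref{eqn:stat_model}. Because $I+tX_i$ is symmetric and psd with high probability, it is exactly the Brenier (optimal transport) map pushing $N(0,V^{(k)})$ to $N(0,V_i)$, so that the Bures distance to the cluster center is computed \emph{exactly}, namely $d^2(V^{(k)},V_i)=t^2\,\tr[X_i^2 V^{(k)}]$ for $i\in G_k^\ast$. A first-order expansion along the two geodesics then gives, for $i,j$ in a common cluster $k$,
\begin{equation*}
 d^2(V_i,V_j)=t^2\,\tr\!\big[(X_i-X_j)^2 V^{(k)}\big]+\Rem_{ij},
\end{equation*}
with a remainder $\Rem_{ij}$ I would bound using Lipschitz estimates for the matrix square root and the spectral control $t\,\|X_i\|_{\op}\le1$. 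Since $\E[X_i^2]=pI$, the leading terms have means of order $t^2 p\,\tr[V^{(k)}]\le t^2 p^2\mathcal V$, which pins down the $t^2\mathcal V p^2$ scale in $\bar\Delta^2$; the between-cluster distances are controlled from below by the triangle inequality $d(V_i,V_j)\ge\Delta-d(V^{(k)},V_i)-d(V^{(l)},V_j)$.

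Third, I would install the probabilistic bounds. The leading within-cluster terms $\tr[(X_i-X_j)^2V^{(k)}]$ are quadratic forms in independent Gaussian matrices, so the Hanson--Wright inequality controls their fluctuations uniformly over the $O(n^2)$ pairs, producing the $\log n$ factor; operator-norm bounds for GOE-type matrices, $\|X_i\|_{\op}\lesssim\sqrt p+\sqrt{\log\bar n}$, validate the geodesic regime and the remainder control, which is where the conditions on $t$, $\mathcal V$, and $T_v=\max_k\tr[(V^{(k)})^{-1}]$ enter (the inverse trace measuring how the square-root expansion degrades as $V^{(k)}$ becomes ill-conditioned). The size conditions $\underline n\ge C_3\log^2 n$ and $n/m\le C_5\log n$ guarantee that the cluster-averaged distances defining $\mvalpha$ concentrate, and $\beta$ is the free split between within- and between-cluster slack in the certificate, whence the factor $\min\{(1-\beta)^2,\beta^2\}$.

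I expect the main obstacle to be verifying the psd/spectral condition (b): one must show $Q\succeq0$, equivalently that the between-cluster signal dominates the largest eigenvalue of the centered within-cluster noise block of $A$. Unlike the Euclidean case, the entries of $A$ are \emph{nonlinear} functions (matrix square roots) of the Gaussian perturbations, so the clean Hanson--Wright concentration applies only to the linearized part and the remainder $\Rem_{ij}$ must be shown negligible at the operator-norm level --- not merely entrywise --- which is the delicate part of controlling the spectrum of the perturbed Bures distance matrix and is precisely what forces the somewhat stringent upper bound on $t$.
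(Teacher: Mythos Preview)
Your high-level strategy---the primal--dual certificate of \citep{chen2021cutoff} together with the exact identity $d^2(V^{(k)},V_i)=t^2\tr[X_i^2V^{(k)}]$ and Gaussian concentration---matches the paper, and your scaling intuition for $\bar\Delta^2$ is correct. The execution, however, differs from the paper in two technical respects worth noting.

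First, you propose to expand $d^2(V_i,V_j)$ for within-cluster pairs and then worry about controlling the remainder $\Rem_{ij}$ at the operator-norm level. The paper avoids this entirely. Instead of a two-sided expansion of pairwise distances, it linearizes the \emph{differences} $d^2(V_j,V^{(k)})-d^2(V^{(l)},V^{(k)})$ via the first Fr\'echet derivative of the matrix square root, and uses that the second-order remainder in the square-root Taylor expansion is \emph{negative semidefinite}. This yields one-sided inequalities of the form
\[
d^2(V_j,V^{(k)})-d^2(V^{(l)},V^{(k)})\ \geq\ \big\langle \mathcal{A}(V^{(l)},V^{(k)}),\ t(X_jV^{(l)}+V^{(l)}X_j)+t^2X_jV^{(l)}X_j\big\rangle - d^2(V_j,V^{(l)}),
\]
where $\mathcal{A}(U,V):=I-U^{1/2}(U^{1/2}VU^{1/2})^{-1/2}U^{1/2}$ satisfies the exact identity $\|\mathcal{A}(U,V)V^{1/2}\|_F^2=d^2(U,V)$. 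This operator $\mathcal{A}$ is the key device: it converts the nonlinear Bures quantities into inner products whose magnitudes are controlled by the very separation $d(V^{(l)},V^{(k)})$ one is trying to compare against. The upshot is that no remainder term ever needs spectral control---only scalar Gaussian and sub-exponential tail bounds are used.

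Second, for the psd condition you describe (b) as an operator-norm bound on the centered within-cluster block of $A$. The paper instead constructs $B$ block-by-block with a \emph{rank-one} structure, $B_{G_lG_k}^{\#}:=r^{(k,l)}(c^{(k,l)})^\top/t^{(k,l)}$, so that $v^\top Bv$ factors into products of \emph{linear} functionals $\sum_{j\in G_l}v_jc_j^{(k,l)}$. Since $v\perp \ind_{G_l}$ annihilates the constant leading term $d^2(V^{(l)},V^{(k)})$, only the fluctuation terms survive, and these are bounded directly via Hoeffding/Bernstein and Adamczak's supremum inequality---no matrix concentration is required. Your operator-norm route would likely work but is strictly harder; the paper's rank-one $B$ plus one-sided linearization is the shortcut that makes the argument close without wrestling with the spectrum of a nonlinear perturbation.
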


\begin{rem}[{\bf Further insight on pitfalls of barycenter-based Wasserstein $K$-means}]
\label{rem:further_insight}
Theorem~\ref{thm:separation_upper_bound_gaussians} suggests that different from Euclidean data, distributions after centering can be clustered if scales and rotation angles vary (i.e., covariance-adjusted). We further illustrate the rotation and scale effects on the MNIST data that may mislead the centroid-based Wasserstein $K$-means, thus providing a real data support for Example~\ref{ex:failure_barycenter_Wasserstein_Kmeans}. Here we randomly sample two unbalanced clusters with 200 numbers of "0" and 100 numbers of "5". Fig.~\ref{fig:kmforMNIST} shows the clustering results for the centroid-based Wasserstein $K$-means and its \emph{oracle} version where we replace the estimated barycenters $\mu_1, \mu_2$ with the true barycenters $\mu_1^*, \mu_2^*$ computed on the true labels. Comparing the Wasserstein distances $W_2(\mu_0, \mu_1^*)$ and $W_2(\mu_0, \mu_2^*)$, we see that the image $\mu_0$ (containing digit "0") is closer to $\mu_2^*$ (true barycenter of digit "5") and thus it cannot be classified correctly based on the nearest true barycenter (cf. Fig.~\ref{fig:kmforMNIST} on the left). Moreover, Wasserstein $K$-means based on estimated barycenters $\mu_1, \mu_2$ yields two clusters of mixed "0" and "5". In both cases, the misclassification error is characterized by grouping similar degrees of angle and/or stretch. Since there are two highly unbalanced clusters of distributions, Wasserstein $K$-means is likely to enforce larger cluster to separate into two clusters and absorb those around centers (cf. Fig.~\ref{fig:kmforMNIST} on the right), leading to larger classification errors. We shall see that in Section~\ref{subsec:realdata} the distance-based Wasserstein $K$-means and its SDP relaxation have much smaller classification error rate on MNIST for the reason that we explained in Example~\ref{ex:failure_barycenter_Wasserstein_Kmeans} (cf. Lemma~\ref{lem:counter_example}). \qed
\end{rem}

\begin{figure}[t] 
   \centering
    \includegraphics[trim={0cm 0cm 0cm 1cm},clip,scale=0.29]{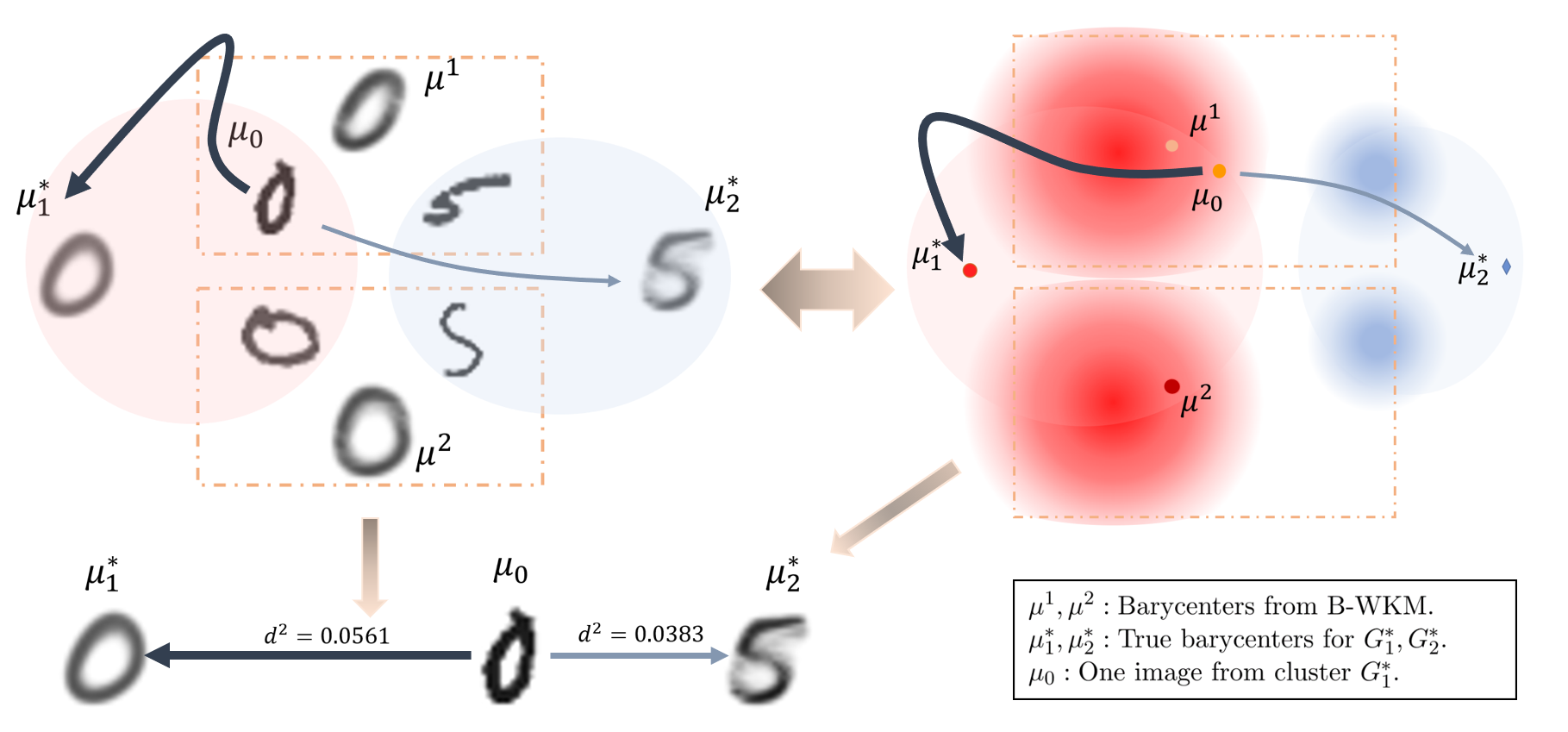}
    \caption{Visualization of misclassification for the barycenter-based Wasserstein $K$-means (B-WKM) on a randomly sampled subset from MNIST ($200$ digit "0" and $100$ digit "5"). The plot at the bottom is a example of misclassified image. The right plot is the abstraction of the images in the Wasserstein space. The color depth indicates the frequency of the distributions. Red and blue colors stand for distributions belong to true clusters "0" and "5".}
    \label{fig:kmforMNIST}%
    \vspace{-0.4cm}
\end{figure}

\section{Experiments}
\label{sec:sim}



\subsection{Counter-example in Example~\ref{ex:failure_barycenter_Wasserstein_Kmeans} revisited}
\label{subsec:counterexample}
Our first experiment is to back up the claim about the failure of centroid-based Wasserstein $K$-means in Example~\ref{ex:failure_barycenter_Wasserstein_Kmeans} through simulations. Instead of using point mass measures that may results in instability for computing the barycenters, we use Gaussian distributions with small variance as a smoothed version.  We consider $K=2$, where cluster $G_1^*$ consists of $m_1$ many copies of $(\mu_1,\mu_2) $ pairs and $m_2$ many $\mu_3$, and cluster $G_2^*$ consists of $m_3$ many copies of $\mu_4.$ We choose $\mu_i$ as the following two-dimensional mixture of Gaussian distributions $\mu_i=0.5\, N(a_{i,1},\Sigma_{i,1})+0.5\,N(a_{i,2},\Sigma_{i,2})$ for $i=1,2,3,4$. Due to the space limit, detailed simulation setups and parameters are given in Appendix~\ref{app:sub:sim_setup}. From Table~\ref{tab:results_conterexample2}, we can observe that Wasserstein SDP has achieved exact recovery for all cases while barycenter-based Wasserstein $K$-means has only around $40\%$ exact recovery rate among all repetitions. In addition, Wasserstein SDP is more stable than distance-based Wasserstein $K$-means. Denote $\Delta_k:=W^2(\mu_3, \mu^\ast_k)$ as the squared distance between $\mu_3$ and $\mu_k^\ast$ for $k=1,2$, where $\mu_k^*$ is the barycenter of $G_k^*$. Let $\Delta_\ast:=\max_{k=1,2}\max_{i,j\in G_k} W_2(\mu_i, \mu_j)$ and $\Delta^\ast:=\min_{i\in G_1, j\in G_2} W^2(\mu_i, \mu_j)$ be the maximum within-cluster distance and the minimum between-cluster distance respectively. From Table~\ref{tab:results_conterexample1} in the Appendix, we can observe that $\Delta_\ast< \Delta^\ast$, from which we can expect Wasserstein SDP to correctly cluster all data points in the Wassertein space. Moreover, Table~\ref{tab:results_conterexample2} shows that about $25\%$ times that the distributions (as $\mu_3$) in $G_1^\ast$ satisfy $\Delta_1>\Delta_2$, implying those $\mu_3$ to be likely assigned to the wrong cluster, which is consistent with Example~\ref{ex:failure_barycenter_Wasserstein_Kmeans}. The experiment results also show that any copy of $\mu_3$ is misclassified whenever exact recovery fails for B-WKM, which means the misclassified rate for $\mu_3$ equals to $(1-\gamma)$, where $\gamma$ is the exact recovery rate for B-WKM shown in Table~\ref{tab:results_conterexample2}. Table~\ref{tab:results_conterexample12} in the appendix further reports the run time comparison, from which we see that distance-based approaches are more computationally efficient than the barycenter-based one in our settings.


\begin{table}[h]
\caption{Exact recovery rates and frequency of $\Delta_1 >\Delta_2$ for B-WKM among total $50$ repetitions in the counter example. W-SDP: Wasserstein SDP, D-WKM: Distance-based Wasserstein $K$-means, B-WKM: Barycenter-based Wasserstein $K$-means. $n$: total number of distributions.} 
\vspace{-0.2cm}
\label{tab:results_conterexample2}
\begin{center}
\begin{tabular}{ccccc}
\toprule
$n$  &   W-SDP &  D-WKM & B-WKM & Frequency of $\Delta_1 >\Delta_2$\\
\hline
101 & 1.00 & 0.82 & 0.40 & 0.32 \\
202 & 1.00 & 0.84 & 0.34 & 0.26\\
303 & 1.00 & 0.72 & 0.46 & 0.20  \\
 \bottomrule
\end{tabular}
\vspace{-0.5cm}
\end{center}
\end{table}

\subsection{Gaussian distributions}
\label{subsec:simulation}


Next, we simulate random Gaussian measures from model~\eqref{eqn:stat_model} with $K=4$ and all cluster size equal. We set the centers of each cluster of Gaussians such that all pairwise distances among the barycenters are all equal, i.e., $W_2^2(N(0,V^{(k_1)}),N(0,V^{(k_2)})) \equiv D$ for all $k_1,k_2\in \{1,2,3,4\}$ with $\mathcal{V}=\max_k \|V^{(k)} \|_{\mbox{\scriptsize \rm op}}\in [4.5,5.5].$ We fix the dimension $p=10$ and vary the sample size $n=200,400,600$. And we set the perturbation parameter $t=10^{-3}$ on the covariance matrix. The simulation results are reported over $100$ times in each setting.  Fig.~\ref{fig:comparison for different n} shows the misclassification rate (log-scale) versus the squared distance $D$ between centers. We observe that when the distance between centers of clusters are larger than certain threshold (squared distance $D>10^{-3}$ in this case), then Wasserstein SDP can achieve exact recovery for different $n,$ while the misclassification rate for the two Wasserstein $K$-means are stably around $10\%$. And when the distance between centers of clusters are relatively small, the two Wasserstein $K$-means and SDP behave similarly.


\begin{figure}[h!] 
   \centering
     \subfigure{\includegraphics[trim={0.0cm 0cm 0.0cm 0.0cm},clip,scale=0.25]{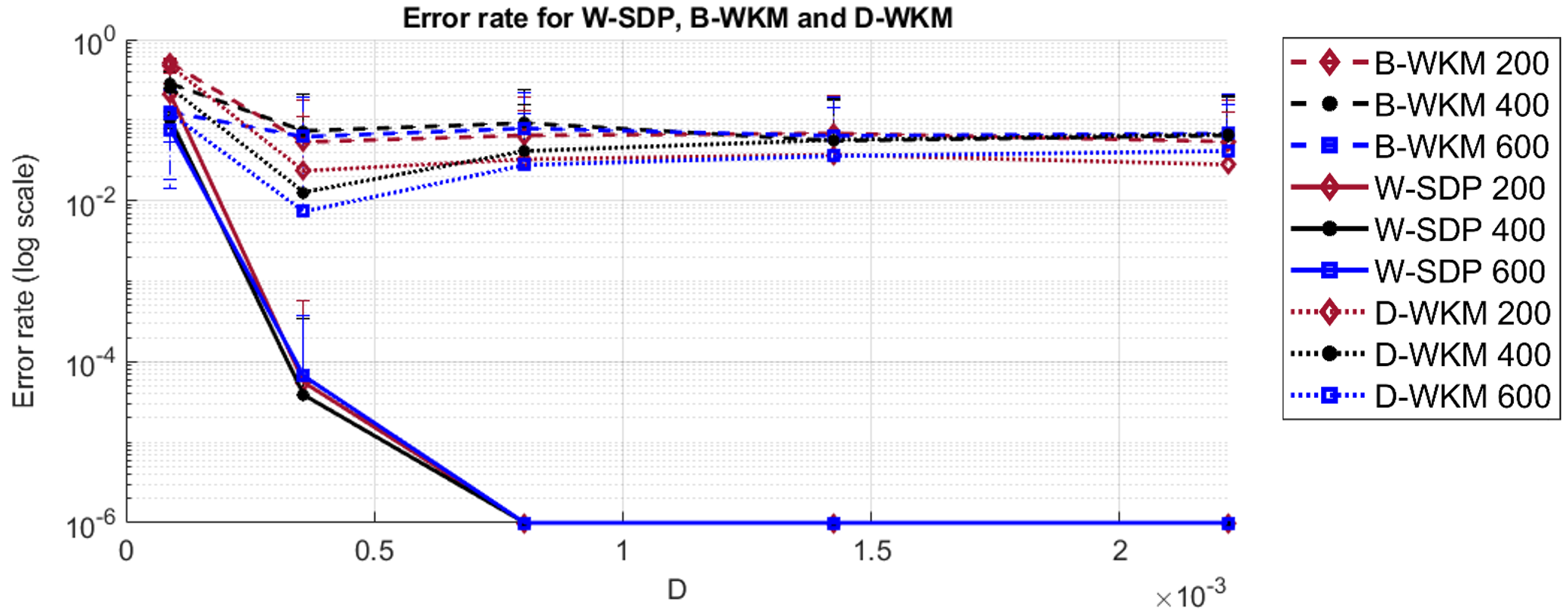}} 
   \caption{Mis-classification error versus squared distance $D$ from Wasserstein SDP (W-SDP) and barycenter/distance-based Wasserstein $K$-means (B-WKM and D-WKM) for clustering Gaussians under $n\in\{200,400,600\}$. Due to the log-scale, $10^{-6}$ corresponds to exact recovery.}
   \label{fig:comparison for different n}
   \vspace{-0.2cm}
\end{figure}

\subsection{Real-data applications}
\label{subsec:realdata}
We consider three benchmark image datasets: MNIST, Fashion-MNIST and USPS handwriting digits. Due to complexity issues, we consider subsets of the whole datasets and randomly choose fixed number of images from each clusters based on $10$ replicates for each cases. Here we used Sinkhorn divergence to calculate the Wasserstein distance and the Bregman projection with $100$ iterations for computing the barycenters, which is efficient and stable for non-degenerate case in practice. For both Wasserstein $K$-means methods, we use the initialization method in analogue to the $K$-means++ for Euclidean data, i.e., the first cluster barycenter is chosen uniformly at random as one of the distributions, after which each subsequent cluster barycenter is chosen from the remaining distributions with probability proportional to its squared Sinkhorn divergence from the distribution's closest existing cluster barycenter. Codes using MATLAB and Python implementing W-SDP and D-WKM are available at: \url{https://github.com/Yubo02/Wasserstein-K-means-for-clustering-probability-distributions}.

For MNIST dataset, we choose and fix two clusters: $G_1^\ast$ containing the number "0" and $G_2^\ast$ containing the number "5" for two cases. (1) In Case 1, we randomly draw $200$ number "0" and $100$ number of "5" for each repetition. (2) In Case 2, we double the number and randomly draw $400$ number "0" and $200$ number of "5" instead. For Fashion-MNIST and USPS handwriting digits dataset, we consider $K=3$: $G_1^\ast$ containing the "T-shirt/top" (or the number "0" for USPS handwriting), $G_2^\ast$ containing the "Trouser" (or the number "5" for USPS handwriting) and $G_3^\ast$ containing the "Dress" (or the number "7" for USPS handwriting). The cluster sizes are unbalanced where we randomly choose $200, 100$ and $100$ number from $G_1^\ast,G_2^\ast$ and $G_3^\ast$ respectively. The error rates are shown in Table~\ref{tab:Realdata}. Detailed setups and more results about $F_1$ scores (analogous to error rates) as well as time costs are placed at Appendix~\ref{app:real_setup} due to space limit.

From Table~\ref{tab:Realdata}, we can observe that the performances for Wasserstein SDP (W-SDP) and distance-based Wasserstein $K$-means (D-WKM) are better compared with barycenter-based Wasserstein $K$-means (B-WKM) for all cases. And the results from Table~\ref{tab:Realdata_f1} in Appendix~\ref{app:real_setup} by using $F_1$ score are consistent with the results from Table~\ref{tab:Realdata}. The original $K$-means method behaves similar as barycenter-based Wasserstein $K$-means in some cases and behaves less preferable for cases such as our experiment on USPS handwriting digits. In particular, the visualization of the clustering results for case 1 has been shown in Fig.~\ref{fig:kmforMNIST}. From this figure we can find that the classification criterion for B-WKM will end up with the closeness to certain shape of "0", which is characterized by certain angle or the degree of stretch. And this will lead to the high misclassification error for barycenter-based or centroid-based Wasserstein $K$-means. 

\begin{table}[h]
\caption{Error rate (SD) for clustering three benchmark datasets: MNIST, Fashion-MNIST and USPS handwriting digits. MNIST$_1$ (MNIST$_2$) refers to the results of Case 1 (Case 2) for MNIST dataset.}
\vspace{-0.2cm}
\label{tab:Realdata}
\begin{center}
\begin{tabular}{ccccc} 
\toprule
   & W-SDP & D-WKM  & B-WKM  & KM\\
\hline
MNIST$_1$  & 0.235 (0.045) & 0.156 (0.057) & 0.310 (0.069) & 0.295 (0.066)\\
MNIST$_2$  & 0.279 (0.050) & 0.185 (0.097) & 0.324 (0.032) & 0.362 (0.033)\\
Fashion-MNIST  & 0.082 (0.020) & 0.056 (0.014) & 0.141 (0.059) & 0.138 (0.099)\\
USPS handwriting & 0.206 (0.020)& 0.159 (0.061) & 0.240 (0.045) & 0.284 (0.025)\\
 \bottomrule
\end{tabular}
\vspace{-0.2cm}
\end{center}
\end{table}

\section{Discussion}
\label{sec:discussion}

In this paper, we observed and analyzed the peculiar behaviors of Wasserstein barycenters and their results in clustering probability distributions. After that, we proposed the distance-based K-means approach (D-WKW) and its semidefinite program  relaxation (W-SDP) by showing the exact recovery results for Gaussians theoretically and numerically. For several real benchmark datasets, we showed results where D-WKM and W-SDP could outperform  barycenter-based K-means approach (B-WKM). And we focused on one unbalanced case of two clusters from MNIST and analyzed its behavior through visualization of misclassification for the barycenter-based Wasserstein K-means. The corresponding time costs for B-WKM, D-WKM and W-SDP for benchmark datasets suggest that the scalability for them and especially our approaches could be serious when sample size is large. One of our future goals would be addressing the corresponding computational complexity issues for real data. Another goal is to find out more conclusive results where our approaches are preferable within or out of the realm of unbalanced clusters for real datasets.

\begin{ack}
Xiaohui Chen was partially supported by NSF CAREER grant DMS-1752614. Yun Yang was partially supported by NSF grant DMS-2210717.
\end{ack}

\newpage
\bibliographystyle{plainnat}
\bibliography{refs} 


\newpage
\appendix
\section{Additional details on application for real datasets in Section~\ref{subsec:realdata}}
\label{app:real_setup}
In this section, we provide more details of setups and results for real applications in
Section~\ref{subsec:realdata}. The results of error rates, $F_1$ scores and time costs are shown in Table~\ref{tab:Realdata}, Table~\ref{tab:Realdata_f1} and Table~\ref{tab:Realdata_tc} respectively, which are based on $10$ replicates.\footnote{We run all the simulations and experiments except for USPS datasets on the machine with Intel Core i7-10700K 3.80 GHz 64 bit 8-core 16 Tread Processor and 16 GB DDR4 Memory; run experiments on USPS datasets with 1.6 GHz Dual-Core Intel Core i5 and 8 GB 2133 MHz LPDDR3 Memory.} 

\begin{table}[h]
\caption{$F_1$ score (SD) for clustering three benchmark datasets: MNIST, Fashion-MNIST and USPS handwriting digits. MNIST$_1$ (MNIST$_2$) refers to the results of Case 1 (Case 2) for MNIST dataset.}
\vspace{-0.2cm}
\label{tab:Realdata_f1}
\begin{center}
\begin{tabular}{ccccc} 
\toprule
   & W-SDP & D-WKM  & B-WKM  & KM\\
\hline
MNIST$_1$  &  0.771 (0.044) & 0.842 (0.056) & 0.698 (0.067) & 0.708 (0.063)\\
MNIST$_2$  & 0.729 (0.049) & 0.814 (0.093) & 0.685 (0.031) & 0.647 (0.032)\\
Fashion-MNIST  & 0.919 (0.018) & 0.934 (0.036) & 0.817 (0.117) & 0.791(0.168)\\
USPS handwriting & 0.799 (0.019) &  0.835 (0.081)& 0.761 (0.060) & 0.689 (0.093)\\
 \bottomrule
\end{tabular}
\vspace{-0.2cm}
\end{center}
\end{table}

\begin{table}[h]
\caption{Time cost (SD) for clustering three benchmark datasets: MNIST, Fashion-MNIST and USPS handwriting digits. MNIST$_1$ (MNIST$_2$) refers to the results of Case 1 (Case 2) for MNIST dataset.}
\vspace{-0.2cm}
\label{tab:Realdata_tc}
\begin{center}
\begin{tabular}{ccccc} 
\toprule
   & W-SDP & D-WKM  & B-WKM  & KM\\
\hline
MNIST$_1$  & 525.13 (4.70) & 524.80 (4.92) & 388.87 (647.15)  & 0.01 (0.01)\\
MNIST$_2$  & 2187.66 (74.67) & 2160.91 (7.26) & 693.67 (142.57) & 0.02 (0.00)\\
Fashion-MNIST  & 849.24 (7.09) & 852.49 (8.60) & 463.28 (176.32) & 0.01 (0.00)\\
USPS handwriting & 1100.87 (19.13) & 1098.05 (16.41) & 317.12 (113.94) & 0.02 (0.01)\\
 \bottomrule
\end{tabular}
\vspace{-0.2cm}
\end{center}
\end{table}

First we run our Wasserstein SDP algorithm against Wasserstein $K$-means on the MNIST dataset for two cases. (1) For the first case, we choose two clusters: $G_1^\ast$ containing the number "0" and $G_2^\ast$ containing the number "5", so that the number of clusters is $K=2$ in the algorithms. The cluster sizes are unbalanced with $|G_1^\ast|/|G_2^\ast|=2,$ where we randomly choose $200$ number "0" and $100$ number of "5" for each repetition. (2) For the second case, we follow the same settings as case 1 except that randomly choose $400$ number "0" and $200$ number of "5" for each repetition. 

Next we considered benchmark dataset Fashion-MNIST 28$\times$28 containing $10$ clusters of 28$\times$28 greyscale images of clothes. Here we choose three clusters: $G_1$ containing the "T-shirt/top", $G_2$ containing the "Trouser" and $G_3$ containing the "Dress", so that the number of clusters is $K=3$ in the algorithms. The cluster sizes are unbalanced where we randomly choose $200, 100$ and $100$ number from $G_1,G_2$ and $G_3$ respectively for each repetition.

Finally, we consider the USPS handwriting dataset, analogous to MNIST, which contains digits automatically scanned from envelopes by the U.S. Postal Service containing a total of 9,298 16$\times$16 pixel grayscale samples. We choose three clusters: $G_1$ containing the number "0", $G_2$ containing the number "5" and $G_3$ containing the number "7", so that the number of clusters is $K=3$ in the algorithms. The cluster sizes are unbalanced where we randomly choose $200, 100$ and $100$ number from $G_1,G_2$ and $G_3$ respectively for each repetition.


Now if we look at the time cost for two cases on MNIST (MNIST$_1$ and MNIST$_2$) in Table~\ref{tab:Realdata_tc}, we can see that Wasserstein SDP (W-SDP), distance-based Wasserstein $K$-means (D-WKM) and barycenter-based Wasserstein $K$-means (B-WKM) all have time complexity issues when we enlarge $n$. The large variance for B-WKM for MNIST$_1$ is due to the convergence of the algorithm. The total iterations for B-WKM in case 1 achieves maximum iteration 100 for 1 replicate out of 10 total replicates. More arguments for time complexity can be found in Appendix~\ref{app:sub:sim_setup}.

\section{Additional details on simulation studies in Section~\ref{subsec:counterexample}}
\label{app:sub:sim_setup}

In this section, we provide more details of our simulation setups and results for Gaussian mixtures in Section~\ref{subsec:counterexample}.

We set $m_1=40\cdot r, m_2= r,m_3=20\cdot r$ which means that there are total $81\cdot r$ number of distributions in $G_1,$ $20\cdot r$ distributions in $G_2.$ The $r$ is set to be $1,2,3,$ where we have $n=101,202,303$ respectively. The mean for the Gaussian distributions are shown in the table below. The entries of covariance matrices for the Gaussian distributions are chosen to be $O(10^{-3})$ for $\mu_1,\mu_2$ and they are chosen to be $O(10^{-6})$ for $\mu_3$ and $\mu_4$. Then we scale down the distribution with scaling parameter equals $0.5.$  This ensures that with high probability, all the distributions will fall into the bounded range $[0,1]\times [0,1].$ 

The algorithm we use to get the barycenter is Frank-Wolfe algorithm with $200$ iterations. And we use Sinkhorn divergence to calculate the Wasserstein distance. The regularization parameters for both algorithms are chosen to be $10^{-3}$. To approximate the true distribution, first we divide $[0,1]\times [0,1]$ range into $80\times 80$ grids, then we randomly sample $600$ samples each time and count the number of times it falls into certain grid to approximate the distribution. The results show us that for each $n$ and each iterations among $50$ repetitions, all the distributions in $G_2^\ast$ will be assigned to same cluster, so it will be reasonable to define that $\mu_3$ is misclassified if any copy of them are in the same cluster of an arbitrarily chosen $\mu$ from $G_2^\ast.$

The arrangement of mean for Gaussian mixture models shown in Table~\ref{tab:initial_weights2} indicates that the distributions are set based on Example~\ref{ex:failure_barycenter_Wasserstein_Kmeans}. Recall that in Section~\ref{subsec:counterexample}, $\Delta_\ast:=\max_{k=1,2}\max_{i,j\in G_k} W_2(\mu_i, \mu_j)$ and $\Delta^\ast:=\min_{i\in G_1, j\in G_2} W^2(\mu_i, \mu_j)$ are the maximum within-cluster distance and the minimum between-cluster distance respectively. Table~\ref{tab:results_conterexample1} shows that $\Delta_\ast< \Delta^\ast$ on average and $\Delta_\ast< \Delta^\ast$ for around $80\%$ among $50$ repetitions. So we can expect Wasserstein SDP to correctly cluster all data points in the Wassertein space. From Table~\ref{tab:results_conterexample12} we can observe that in our settings the time cost for Wasserstein SDP and distance-based Wasserstein $K$-means is relatively lower than the time cost for barycenter-based Wasserstein $K$-means. But we can see that as $n$ increases, the time cost for B-WKM grows almost linearly w.r.t. $n$ while almost quadratically for W-SDP and D-WKM. Thus we should expect relatively higher time cost for W-SDP and D-WKM when $n$ is sufficiently large, where we can consider several methods to bring down the time cost (e.g., subsampling-based method for SDP from ~\cite{pmlr-v151-zhuang22a}).

Computationally speaking, the calculations of Wasserstein distances and barycenters are usually based on one-step discretization and one-step application of entropic regularization methods such as Sinkhorn (\cite{pmlr-v84-genevay18a}, ~\cite{pmlr-v119-janati20a}). \cite{https://doi.org/10.48550/arxiv.2010.04677} shows that the complexity of calculating barycenters should be of the order $O(nd^2/\epsilon^2))$ or $O(ng^4/\epsilon^2)),$ where $n$ is the total number of distributions, $d=g^2$ is the discretization size, e.g. $g=28$ for MNIST datasets and $\epsilon$ is the numerical accuracy; while~\cite{NEURIPS2021_b80ba738} gives a $O(d^2/\epsilon))$ or $O(g^4/\epsilon))$ complexity algorithm for calculating the Wasserstein distance on robust optimal transport.

\begin{table}[h]
\caption{Positions $(x,y)\in \mathbb{R}^2$ of means for two-dimensional mixture of Gaussian distributions for the counter example in Section~\ref{subsec:counterexample}. }
\label{tab:initial_weights2}
\vspace{-0.2cm}
\begin{center}
\begin{tabular}{ccccccccc} 
\toprule
  & $a_{1,1}$ & $a_{1,2}$ & $a_{2,1}$ & $a_{2,2}$ & $a_{3,1}$ & $a_{3,2}$ & $a_{4,1}$ & $a_{4,2}$ \\
\hline
$x$ & 0.75 & 0.25 & 0.75 & 0.25 & 0.9 & 0.9 & 1.3 & 1.3 \\
$y$ & 1.15 & 0.85 & 0.85  & 1.15  & 0.85  & 1.15 & 0.75 & 1.25 \\[-0.6ex]
 \bottomrule
\end{tabular}
\end{center}

\end{table}

\begin{table}[h]
\caption{The time cost with standard deviation shown in parentheses for the counter example. TC: Time cost, W-SDP: Wasserstein SDP, D-WKM: Distance-based Wasserstein $K$-means, B-WKM: Barycenter-based Wasserstein $K$-means.}
\label{tab:results_conterexample12}
\vspace{-0.2cm}
\begin{center}
\begin{tabular}{cccc} 
\toprule
$n$  & TC for W-SDP (SD) &TC for D-WKM&  TC for B-WKM (SD)\\
\hline
101 &  14.50\ (0.5873) & 14.15\ (0.5132)  & 181.1\ (372.4)  \\
202 &  56.94\ (1.490)  & 54.98\ (1.516)   &341.0\ (136.2) \\
303  & 128.4\ (3.640)  & 123.9\ (3.606) &549.2\ (200.2) \\[-0.6ex]
 \bottomrule
\end{tabular}
\end{center}
\end{table}

\begin{table}[h]
\caption{Estimated Wasserstein distances with standard deviation shown in parentheses and frequency of $\Delta^\ast >\Delta_\ast$ for the counter example.}
\label{tab:results_conterexample1}
\vspace{-0.2cm}
\begin{center}
\begin{tabular}{cccc} 
\toprule
$n$  &  $\Delta_\ast$ & $\Delta^\ast$ & Frequency of $\Delta_\ast<\Delta^\ast$ \\
\hline
101 & 0.1978\ (0.0055) & 0.2046\ (0.0050) & 0.8200\\
202 &  0.1990\ (0.0058)  & 0.2050\ (0.0051)& 0.8200\\
303 & 0.1996\ (0.0067)  & 0.2052\ (0.0050)& 0.7600\\[-0.6ex]
 \bottomrule
\end{tabular}
\end{center}
\end{table}

\section{Background on optimal transport}
\label{app:sec:background}

The optimal transport (OT) problem (a.k.a. the Monge problem) is to find an optimal map $T^* : \bR^p \to \bR^p$ for transporting a source distribution $\mu_0$ to a target distribution $\mu_1$ that minimizes some cost function $c : \bR^p \times \bR^p \to \bR$:
\begin{equation}
    \label{eqn:monge_problem}
    \min_{T : \bR^p \to \bR^p} \left\{ \int_{\bR^p} c(x, T(x)) d \mu_0(x) :  T_\sharp \mu_0 = \mu_1 \right\},
\end{equation}
where $T_\sharp \mu$ denotes the pushforward measure defined by $(T_\sharp \mu) (B) = \mu(T^{-1}(B))$ for measurable subset $B \subset \bR^p$. A standard example of the cost function is the quadratic cost $c(x,y) = \|x-y\|_2^2$. The Monge problem~\eqref{eqn:monge_problem} with the quadratic cost induces a metric, known as the \emph{2-Wasserstein distance}, on the space $\cP_2(\bR^p)$ of probability measures on $\bR^p$ with finite second moments. In particular, the 2-Wasserstein distance can be expressed in the relaxed Kantorovich form:
\begin{equation}
    \label{eqn:kantorovich_problem}
    W_2^2(\mu_0, \mu_1) := \min_{\gamma} \left\{ \int_{\bR^p \times \bR^p} \|x - y\|_2^2 d \gamma(x,y) \right\},
\end{equation}
where minimization over $\gamma$ runs over all possible couplings with marginals $\mu_0$ and $\mu_1$~\citep{Villani2003_topics-in-ot}. It is well-known from Brenier's theorem~\citep{Brenier1991} that if the source measure $\mu_0$ does not charge on small subsets of $\bR^p$ (i.e., subsets of Hausdorff dimension at most $p-1$), then there exists a unique $\mu_0$-almost everywhere OT map $T^*$ solving~\eqref{eqn:monge_problem}. That is, $T^*_\sharp \mu_0 = \mu_1$ and
\begin{equation*}
    W_2^2(\mu_0, \mu_1) = \int_{\bR^p} \|x - T^*(x) \|_2^2 d \mu_0(x).
\end{equation*}
Let $(\mu_t)_{t=0}^1$ be the constant-speed geodesic connecting $\mu_0, \mu_1 \in \cP_2(\bR^p)$. Then for any $\nu \in \cP_2(\bR^p)$ and $t \in [0, 1]$, we have
\begin{equation}
    \label{eqn:semiconcavity_ineq}
    W_2^2(\mu_t, \nu) \geq (1-t) W_2^2(\mu_0, \nu) + t W_2^2(\mu_1, \nu) - t (1-t) W_2^2(\mu_0, \mu_1).
\end{equation}
The above semiconcavity inequality~\eqref{eqn:semiconcavity_ineq} can be interpreted as that the Wasserstein space $\cP_2(\bR^p)$ is a \emph{positive curved} metric space (PC-space) in the sense of Alexandrov (cf. Section 7.3 and Section 12.3 in~\cite{ambrosio2005gradient}).

\section{Additional proofs}
In this section, we will give detailed proofs for Example~\ref{ex:wasserstein_barycenter_support}, Lemma~\ref{lem:counter_example} and Theorem~\ref{thm:separation_upper_bound_gaussians}. For the proof of Theorem~\ref{thm:separation_upper_bound_gaussians}, we will first introduce the main part and put the rest proofs of corresponding lemmas at the end of this section to make it clear.

\subsection{Proof of Example~\ref{ex:wasserstein_barycenter_support}}
\label{app:subsec:counterexample_irregularity}
Recall that $\mu_0(s) = (1-s) / 2$ and $\mu_1(s) = (1+s) / 2$ are probability densities supported on the line segments $L_0 = \{ (s, a s) : s \in [-1, 1] \}$ and $L_1 = \{ (s, -a s) : s \in [-1, 1] \}$ for some $a \in (0, 1)$, respectively. To derive the optimal transport (OT) map $T$ from $\mu_0$ to $\mu_1$, it suffices to consider the one-dimensional OT problem by parameterization of $T : [-1, 1] \to [-1, 1]$ identified via $(s, a s) \mapsto (T(s), -a T(s))$. Then our goal is to find the solution to the following optimization problem
\begin{equation*}
    \begin{gathered}
     \min_{T : T_\sharp \mu_0 = \mu_1} \int_{-1}^1 \|(s, -a s) - (T(s), -a T(s)) \|_2^2 d \mu_0(s) \\
     = \min_{T : T_\sharp \mu_0 = \mu_1} \int_{-1}^1 [ (s - T(s))^2 + a^2 (s + T(s))^2 ] d \mu_0(s) \\
     =  (1 - a^2) \times \min_{T : T_\sharp \mu_0 = \mu_1} \int_{-1}^1 \left[ \sqrt{1+a^2 \over 1-a^2} T(s) - s \right]^2 d \mu_0(s) + \text{constant},
    \end{gathered}
\end{equation*}
where the constant does not depend on $T$. Now rescale the distribution density $\mu_1$ to
\begin{equation*}
    \tilde\mu_1(s) = \sqrt{1-a^2 \over 1+a^2} \; \mu_1\left( \sqrt{1-a^2 \over 1+a^2} s \right) \quad \text{for } s \in \left[ -\sqrt{1+a^2 \over 1-a^2}, \; \sqrt{1+a^2 \over 1-a^2} \right],
\end{equation*}
and define the transport map $\tilde{T} = \sqrt{1+a^2 \over 1-a^2} \; T$ on $[-1, 1]$. To find the OT map $T$ such that $T_\sharp \mu_0 = \mu_1$, it suffices to find the OT map $\tilde{T}$ such that $\tilde{T}_\sharp \mu_0 = \tilde{\mu}_1$, i.e.,
\begin{equation*}
    \min_{\tilde{T} : \tilde{T}_\sharp \mu_0 = \tilde{\mu}_1} \int_{-1}^1 [ \tilde{T}(s) - s ]^2 d \mu_0(s),
\end{equation*}
whose solution is known as the \emph{quantile transform} for one-dimensional distributions. Specifically, let
\begin{equation*}
    F_0(s) = \int_{-1}^s \mu_0(t) d t = {1\over2} \left( s - {1\over2} s^2 + {3 \over 2} \right) \quad \text{for } s \in [-1, 1],
\end{equation*}
be the cumulative distribution function (cdf) of the density $\mu_0$ and
\begin{equation*}
    \tilde{F}_1(s) = \int_{-\sqrt{1+a^2 \over 1-a^2}}^s \tilde{\mu}_1(t) d t = {1\over2} \left( \sqrt{1-a^2 \over 1+a^2} s + {1\over2} \cdot {1-a^2 \over 1+a^2} s^2 + {1\over2} \right) \quad \text{for } s \in \left[ -\sqrt{1+a^2 \over 1-a^2}, \; \sqrt{1+a^2 \over 1-a^2} \right],
\end{equation*}
be the cdf of the density $\tilde{\mu}_1$. It is easy to find that
\begin{equation*}
    \tilde{F}_1^{-1}(y) = \sqrt{1+a^2 \over 1-a^2} (\sqrt{4y} - 1) \quad \text{for } y \in [0, 1].
\end{equation*}
Then the OT map $\tilde{T}$ from $\mu_0$ to $\tilde{\mu}_1$ is given by
\begin{equation*}
    \tilde{T}(s) = \tilde{F}_1^{-1} \circ F_0(s) = \sqrt{1+a^2 \over 1-a^2} [-1 + \sqrt{4 - (1-s)^2}], \quad s \in [-1, 1].
\end{equation*}
This gives the OT map $T$ from $\mu_0$ to $\mu_1$ (in the one-dimensional parameterization form) as
\begin{equation}
    T(s) = -1 + \sqrt{4 - (1-s)^2}.
\end{equation}
Thus, the OT map $T$ from $\mu_0$ to $\mu_1$ as (degenerate) probability distribution in $\bR^2$ is given by
\begin{equation*}
    T(s, a s) = \big( -1+\sqrt{4-(1-s)^2}, \ \ -a \cdot (-1+\sqrt{4-(1-s)^2}) \big).
\end{equation*}

\subsection{Proof of Lemma~\ref{lem:counter_example} in Section~\ref{subsec:centroid-based_Wasserstein_Kmeans}}
Recall the settings as following
\begin{align*}
    \mu_1&=0.5\,\delta_{(x,y)}+0.5\,\delta_{(-x,-y)}, \quad \mu_2=0.5\,\delta_{(x,-y)}+0.5\,\delta_{(-x,y)}, \\
    \mu_3&=0.5\,\delta_{(x+\epsilon_1,y)}+0.5\,\delta_{(x+\epsilon_1,-y)}, \quad\mbox{and}\quad  \mu_4=0.5\,\delta_{(x+\epsilon_1+\epsilon_2,y)}+0.5\,\delta_{(x+\epsilon_1+\epsilon_2,-y)},
\end{align*}
where $\delta_{(x,y)}$ denotes the point mass measure at point $(x,y)\in \mathbb{R}^2$, and $(x,y,\epsilon_1,\epsilon_2)$ are positive constants.

\emph{Lemma~\ref{lem:counter_example}} (\textbf{Configuration characterization}).
If $(x,y,\epsilon_1,\epsilon_2)$ satisfies
\[
y^2<\min\{x^2, 0.25\,\Delta_{\epsilon_1,x} \} \quad\mbox{and}\quad \Delta_{\epsilon_1,x}<\epsilon_2^2<\Delta_{\epsilon_1,x}+y^2,
\]
where $\Delta_{\epsilon_1,x}:=\epsilon_1^2+2x^2+2x\epsilon_1$, then for all sufficiently large $m$ (number of copies of $\mu_1$ and $\mu_2$),
\[
W_2(\mu_3, \mu^\ast_2)<W_2(\mu_3, \mu^\ast_1) \qquad\mbox{and} \quad \underbrace{\max_{k=1,2}\max_{i,j\in G_k} W_2(\mu_i, \mu_j)}_{\mbox{\small largest within-cluster distance}} \! < \! \underbrace{\min_{i\in G_1, j\in G_2} W_2(\mu_i, \mu_j),}_{\mbox{\small least between-cluster distance}}
\]
where $\mu^\ast_k$ denotes the Wasserstein barycenter of cluster $G_k$ for $k=1,2$.
 
\emph{Proof.} For any $w_i\in \mathbb{R}^2,i=1,2,3,4,$ let $\mu=0.5\, \delta_{w_1}+0.5\, \delta_{w_2},\ \nu=0.5\, \delta_{w_3}+0.5\, \delta_{w_4}$.  By definition of Wasserstein distance we can show that
\[
W_2^2(\mu, \nu)=0.5\min\{\|w_1-w_3\|^2+\|w_2-w_4\|^2,\|w_1-w_4\|^2+\|w_2-w_3\|^2\}.
\]
Let $\mu_0=0.5\,\delta_{(x,0)}+0.5\,\delta_{(-x,0)}$, by algebraic calculation it is direct to check
\[
W_2(\mu_3, \mu^\ast_2)<W_2(\mu_3, \mu_0) \qquad\mbox{and} \quad \underbrace{\max_{k=1,2}\max_{i,j\in G_k} W_2(\mu_i, \mu_j)}_{\mbox{\small largest within-cluster distance}} \! < \! \underbrace{\min_{i\in G_1, j\in G_2} W_2(\mu_i, \mu_j),}_{\mbox{\small least between-cluster distance}}
\]
once plugging in the assumptions. So we only need to show that $\forall \varepsilon, \exists M$, s.t. when $m>M$ we have $W^2_2(\mu_3, \mu^\ast_1)\ge W^2_2(\mu_3, \mu_0)-\varepsilon$. For notation simplicity, let $v_x=(x,0),v_{-x}=(-x,0),v_1=(x,y),v_2=(-x,-y),v_3=(x,-y),v_4=(x,-y).$ By definition we know there exist measures $\xi_i,i=1,2,3,4,$ s.t. 
\[
W_2^2(\mu^\ast_1,\mu_1)=\int \|v-v_1\|^2 d \xi_1(v) +\int \|v-v_2\|^2 d \xi_2(v), 
\]
\[
W_2^2(\mu^\ast_1,\mu_2)=\int \|v-v_3\|^2 d \xi_3(v) +\int \|v-v_4\|^2 d \xi_4(v), 
\]
where $\mu^\ast_1= \xi_1+\xi_2= \xi_3+\xi_4$ with $\xi_i(\mathbb{R}^2)=0.5,\forall i.$ Furthermore, if we define $\xi_{i,j}=\xi_i\cdot \xi_j/\mu^\ast_1,i\in\{1,2\},j\in\{3,4\},$ then $\xi_i=\xi_{i,3}+\xi_{i,4}, \xi_j=\xi_{1,j}+\xi_{2,j},i\in\{1,2\},j\in\{3,4\}.$ Thus
\begin{align*}
W_2^2(\mu^\ast_1,\mu_1)+W_2^2(\mu^\ast_1,\mu_2)&=\sum_{i=1}^4 \int \|v-v_i\|^2 d \xi_i(v)\\
&=\sum_{i\in\{1,2\},j\in\{3,4\}} \int \|v-v_i\|^2 +\|v-v_j\|^2 d \xi_{i,j}(v).
\end{align*}
Now suppose $t=\|v-v_x\|$, by algebraic calculation we can get 
\[
\|v-v_1\|^2+\|v-v_3\|^2=t^2+2y^2.
\]
Choose $T>0$ s.t. $ T^2< \min\{2x^2-2y^2,y^2\},$ then we have
\begin{align*}
&W_2^2(\mu^\ast_1,\mu_1)+W_2^2(\mu^\ast_1,\mu_2)=\sum_{i\in\{1,2\},j\in\{3,4\}} \int \|v-v_i\|^2 +\|v-v_j\|^2 d \xi_{i,j}(v)\\
&\le \int_{B_T(v_{x})} \|v-v_1\|^2 +\|v-v_3\|^2 d \xi_{1,3}(v) +\int_{B_T(v_{-x})} \|v-v_2\|^2 +\|v-v_4\|^2 d \xi_{2,4}(v)\\
&+(T^2+2y^2)(1-\xi_{1,3}(B_T(v_{x}))-\xi_{2,4}(B_T(v_{-x})))\\
&= \int_{B_T(v_{x})} t_1(v)^2 d \xi_{1,3}(v) +\int_{B_T(v_{-x})} t_2(v)^2 d \xi_{2,4}(v)+2y^2+T^2(1-\xi_{1,3}(B_T(v_{x}))-\xi_{2,4}(B_T(v_{-x}))),
\end{align*}
where $B_t(v)$ stands for the ball with radius $t$ centered at $v$, $t_1(v):=\|v-v_x\|,t_2(v):=\|v-v_{-x}\|.$ On the other hand, by definition we know that
\begin{align*}
&m\cdot W_2^2(\mu^\ast_1,\mu_1)+m\cdot W_2^2(\mu^\ast_1,\mu_2) +W_2^2(\mu^\ast_1,\mu_3)\\
&\le m\cdot W_2^2(\mu_0,\mu_1)+m\cdot W_2^2(\mu_0,\mu_2) +W_2^2(\mu_0,\mu_3)\\
&=m\cdot (2y^2)+C,
\end{align*}
where $C:=W_2^2(\mu_0,\mu_3).$ So we have $W_2^2(\mu^\ast_1,\mu_1)+W_2^2(\mu^\ast_1,\mu_2)\le 2y^2+C/m.$ i.e.,
\[
\int_{B_T(v_{x})} t_1(v)^2 d \xi_{1,3}(v) +\int_{B_T(v_{-x})} t_2(v)^2 d \xi_{2,4}(v)+T^2(1-\xi_{1,3}(B_T(v_{x}))-\xi_{2,4}(B_T(v_{-x})))\le \frac{C}{m}.
\]
So we have
\[ 
\int_{B_T(v_{x})} t_1(v)^2 d \xi_{1,3}(v)\le \frac{C}{m},\ \  \int_{B_T(v_{-x})} t_2(v)^2 d \xi_{2,4}(v)\le \frac{C}{m}, 
\]
\[
0.5-\xi_{1,3}(B_T(v_{x})) \le \frac{C}{T^2m},\ \ 0.5-\xi_{2,4}(B_T(v_{-x})) \le \frac{C}{T^2m}.
\]
Now suppose $v_{\epsilon_1}:=(x+\epsilon_1,y), v_{-\epsilon_1}:=(x+\epsilon_1,-y),$ note that $T^2< y^2<\epsilon_1^2+y^2$ and $W_2^2(\mu_3, \mu_0 )=0.5 \|v_{x}-v_{\epsilon_1}\|^2+0.5\|v_{-x}-v_{-\epsilon_1}\|^2$. By definition of Wasserstein distance and symmetry we have
\begin{align*}
W^2_2(\mu_3, \mu^\ast_1)&\ge \int_{B_T(v_{x})} (\|v_{x}-v_{\epsilon_1}\|-t_1(v))^2 d \xi_{1,3}(v)+\int_{B_T(v_{-x})} (\|v_{-x}-v_{\epsilon_1}\|-t_2(v))^2 d \xi_{2,4}(v)    \\
&\ge \|v_{x}-v_{\epsilon_1}\|^2 \xi_{1,3}(B_T(v_{x}))+\|v_{-x}-v_{\epsilon_1}\|^2 \xi_{2,4}(B_T(v_{-x}))\\
&-2\|v_{x}-v_{\epsilon_1}\|\int_{B_T(v_{x})} t_1(v) d \xi_{1,3}(v)-2\|v_{x}-v_{-\epsilon_1}\|\int_{B_T(v_{-x})} t_2(v) d \xi_{2,4}(v) \\
&\ge W_2^2(\mu_3, \mu_0 )-C^2_1\cdot \frac{C}{T^2m}-C^2_2\cdot \frac{C}{T^2m}\\
&-2C_1\int_{B_T(v_{x})} t_1(v) d \xi_{2,4}(v)-2C_2\int_{B_T(v_{-x})} t_2(v) d \xi_{2,4}(v),
\end{align*}
where $C_1=\|v_{x}-v_{\epsilon_1}\|,C_2=\|v_{-x}-v_{\epsilon_1}\|.$ Set $\forall \varepsilon>0$. Finally, by H\"older's inequality we have
\begin{align*}
W^2_2(\mu_3, \mu^\ast_1)&\ge W_2^2(\mu_3, \mu_0 )-C^2_1\cdot \frac{C}{T^2m}-C^2_2\cdot \frac{C}{T^2m}\\
&-2C_1\sqrt{\int_{B_T(v_{x})} t^2_1(v) d \xi_{2,4}(v)}-2C_2\sqrt{\int_{B_T(v_{-x})} t^2_2(v) d \xi_{2,4}(v)}\\
&\ge W_2^2(\mu_3, \mu_0 )-C^2_1\cdot \frac{C}{T^2m}-C^2_2\cdot \frac{C}{T^2m}-2C_1\sqrt{\frac{C}{m}}-2C_2\sqrt{\frac{C}{m}}\\
&\ge W_2^2(\mu_3, \mu_0 )-\varepsilon,
\end{align*}
for large $m$, as desired. \qed

\subsection{Proof of Theorem~\ref{thm:separation_upper_bound_gaussians} in Section~\ref{sec:SDP_relaxation}}
\emph{Theorem~\ref{thm:separation_upper_bound_gaussians}} (\textbf{Exact recovery for clustering Gaussians}). Let  $\Delta^2:=\min_{k\neq l} d^2(V^{(k)},V^{(l)})$ denote the minimal pairwise separation among clusters, $\bar{n}:=\max_{k\in[K]}n_k$ (and $\underline{n}:=\min_{k\in[K]}n_k$) the maximum (minimum) cluster size, and $m:=\min_{k\ne l}\frac{2n_kn_l}{n_k+n_l}$ the minimal pairwise harmonic mean of cluster sizes. Suppose the covariance matrix $V_i$ of Gaussian distribution $\nu_i=N(0, V_i)$ is independently drawn from model~\eqref{eqn:stat_model} for $i=1,2,\ldots,n$. Let $\beta\in(0,1)$. If the separation $\Delta^2$ satisfies
\[
    \Delta^2 >  \bar{\Delta}^2:\,=\frac{C_1 t^2} {\min\{(1-\beta)^2,\beta^2\}}\, \mathcal{V}\, p^2\log n,
\]
then the SDP~\eqref{eqn:wasserstein_kmeans_SDP} achieves exact recovery with probability at least $1-C_2 n^{-1}$, provided that
$$ \underline{n} \ge C_3 \log^2 n , \ \ t\le C_4\sqrt{\log n}/\big[(p+\log \bar{n})  \mathcal{V}^{1/2}T_v^{1/2}\big], \ \ n/m\le C_5\log n,$$
where  $\mathcal{V}=\max_k \left\|V^{(k)} \right\|_{\mbox{\scriptsize \rm op}}$, $T_v=\max_k \mbox{\rm Tr}\big[\big(V^{(k)}\big)^{-1}\big]$, and $C_i,i=1,2,3,4,5$ are constants.

\begin{lem}[\textbf{Dual argument for SDP} (Section B in \cite{chen2021cutoff})] 
\label{lem:Dual_argument_for_SDP}
The sufficient condition for $Z^*=\sum_{k\in[K]}\frac{1}{n_k}1_{G_k}1_{G_k}^T$ to be the unique solution of the SDP problem is to find $(\lambda,\alpha,B)$ s.t.
\begin{align*}
&(C_1) \ \ B\ge 0 \ (B_{G_kG_k}=0, B_{G_kG_l}>0, \forall k\ne l), \\
&(C_2) \ \ W_n:=\lambda Id+\frac{1}{2}(1\alpha^T+\alpha 1^T)-A-B\succeq 0, \\
&(C_3) \ \ \mbox{\rm Tr}(W_nZ^*)=0,\\
&(C_4) \ \ \mbox{\rm Tr}(BZ^*)=0,\\
\end{align*}
which implies that 
\[
\alpha_{G_k}=\frac{2}{n_k}A_{G_kG_k}1_{n_k}-\frac{\lambda}{n_k}1_{n_k}-\frac{1}{n_k^2}(1_{n_k}^TA_{G_kG_k} 1_{n_k}).
\]
\begin{align*}
[B_{G_lG_k}1_{n_k}]_j=-\frac{n_l+n_k}{2n_l}\lambda&+\frac{n_k}{2}\left[\frac{1}{n_l^2}\sum_{s,r\in G_l} d^2(V_s,V_r)- \frac{1}{n_k^2}\sum_{s,r\in G_k} d^2(V_s,V_r)\right]\\
&+n_k\left[\frac{1}{n_k}\sum_{r\in G_k} d^2(V_j,V_r)- \frac{1}{n_l}\sum_{r\in G_l} d^2(V_j,V_r)\right],
\end{align*}
for $k\ne l, \ j\in G_l$.

\end{lem}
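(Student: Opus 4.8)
The statement is a convex-duality certificate, so the plan is to treat $(C_1)$--$(C_4)$ as the Karush--Kuhn--Tucker conditions for the SDP~\eqref{eqn:wasserstein_kmeans_SDP}, verify directly that they force $Z^*$ to be the unique optimizer, and then read the closed forms of $\alpha$ and $B$ off the complementary-slackness relations. First I would establish optimality by weak duality. Writing the trace inner product $\langle M,N\rangle=\mathrm{Tr}(MN)$, the identity defining $W_n$ in $(C_2)$ expresses $A$ through $W_n$, $B$, and the two multiplier matrices $\lambda\,\Id$ (for $\mathrm{Tr}(Z)=K$) and $\tfrac12(\mathbf 1\alpha^\top+\alpha\mathbf 1^\top)$ (for $Z\mathbf 1_n=\mathbf 1_n$). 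Feeding this into the optimality gap at any competitor $Z$, the two multiplier terms cancel because $\mathrm{Tr}(Z)=\mathrm{Tr}(Z^*)=K$ and $Z\mathbf 1_n=Z^*\mathbf 1_n=\mathbf 1_n$ (using symmetry of $Z$), while $(C_3)$ and $(C_4)$ annihilate the inner products evaluated at $Z^*$. What remains is $\langle A,Z\rangle-\langle A,Z^*\rangle=\langle W_n,Z\rangle+\langle B,Z\rangle\ge 0$, the nonnegativity coming from $W_n\succeq 0,\ Z\succeq 0$ and from $B\ge 0,\ Z\ge 0$ entrywise. Hence $Z^*$ is optimal.

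Next I would extract the explicit expressions, which come entirely from converting $(C_3)$ and $(C_4)$ into pointwise equations. Since $W_n\succeq 0$ and $Z^*\succeq 0$, the scalar relation $\mathrm{Tr}(W_nZ^*)=0$ is equivalent to $W_nZ^*=0$, and because $\mathrm{col}(Z^*)=\mathrm{span}\{\mathbf 1_{G_1},\dots,\mathbf 1_{G_K}\}$ this is the same as $W_n\mathbf 1_{G_k}=0$ for every $k$. Similarly $\mathrm{Tr}(BZ^*)=0$ with $B\ge 0,\ Z^*\ge 0$ forces $B_{ij}=0$ whenever $i,j$ share a cluster, i.e.\ the diagonal blocks $B_{G_kG_k}=0$, which is exactly the equality part of $(C_1)$. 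It then remains to read the vector equation $W_n\mathbf 1_{G_k}=0$ off row by row. For a row index $i\in G_k$ the vanishing of $B_{G_kG_k}$ removes the $B$-contribution and the equation becomes linear in $\alpha$; averaging it over $i\in G_k$ pins down $\sum_{r\in G_k}\alpha_r$, and back-substitution yields the displayed formula for $\alpha_{G_k}$. For a row index $j\in G_l$ with $l\ne k$, the same equation instead isolates the off-diagonal row sum $\sum_{r\in G_k}B_{jr}=[B_{G_lG_k}\mathbf 1_{n_k}]_j$, and substituting the already-determined $\alpha$ produces its closed form. These are routine (if sign-sensitive) block computations once the matrix identity $W_n\mathbf 1_{G_k}=0$ is in hand.

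Finally I would prove uniqueness, which is where the genuine difficulty lies. Let $\tilde Z$ be any other optimizer; the optimality identity forces $\langle W_n,\tilde Z\rangle=0$ and $\langle B,\tilde Z\rangle=0$ separately. The second, combined with the strict positivity $B_{G_kG_l}>0$ for $k\ne l$ from $(C_1)$ and $\tilde Z\ge 0$, makes $\tilde Z$ block diagonal across clusters; the first gives $W_n\tilde Z=0$, so every column of $\tilde Z$ lies in $\ker W_n$. To conclude $\tilde Z=Z^*$ one needs $\ker W_n$ to equal $\mathrm{span}\{\mathbf 1_{G_k}\}$ \emph{exactly}, i.e.\ $\mathrm{rank}(W_n)=n-K$, and not merely to contain these vectors as $(C_3)$ guarantees. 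Granting this, each block-diagonal column of $\tilde Z$ supported on $G_k$ and lying in the indicator span must be a scalar multiple of $\mathbf 1_{G_k}$; symmetry of $\tilde Z$ together with $\tilde Z\mathbf 1_n=\mathbf 1_n$ then fixes the scalar at $1/n_k$, giving $\tilde Z=\sum_k n_k^{-1}\mathbf 1_{G_k}\mathbf 1_{G_k}^\top=Z^*$. The main obstacle is therefore not the duality bookkeeping but the nondegeneracy of the certificate---that $W_n$ attains the minimal rank $n-K$ with kernel precisely the cluster-indicator span---which is the property one must secure when the triple $(\lambda,\alpha,B)$ is actually constructed and which makes the sufficient condition genuinely yield \emph{uniqueness} rather than mere optimality.
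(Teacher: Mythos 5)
The paper itself offers no proof of this lemma---it is imported from Section~B of \cite{chen2021cutoff}, plus a one-line remark that $(C_3)$, $(C_4)$ hold automatically once $(\lambda,B)$ satisfy the displayed equations---so your proposal is measured against the standard dual-certificate argument it reconstructs. Your optimality and complementary-slackness steps follow that standard route, but you glossed over a sign problem: with $W_n$ exactly as printed in $(C_2)$ (the term $-A$), substituting $A=\lambda\,\Id+\frac{1}{2}(1\alpha^\top+\alpha 1^\top)-B-W_n$ into the gap gives $\langle A,Z\rangle-\langle A,Z^*\rangle=-\langle W_n,Z\rangle-\langle B,Z\rangle\le 0$, which certifies a \emph{maximizer} of $\langle A,\cdot\rangle$; your claimed identity $\langle A,Z\rangle-\langle A,Z^*\rangle=\langle W_n,Z\rangle+\langle B,Z\rangle\ge 0$ is the one for the corrected certificate $W_n=\lambda\,\Id+\frac{1}{2}(1\alpha^\top+\alpha 1^\top)+A-B$, which is what the minimization SDP~\eqref{eqn:wasserstein_kmeans_SDP} with a squared-distance matrix $A$ requires. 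Relatedly, your claim that both displayed formulas fall out of ``routine block computations'' from $W_n 1_{G_k}=0$ cannot be right as stated: carrying out those computations, the printed $\alpha_{G_k}$ formula is produced by the $-A$ convention, while the printed $B$ row-sum formula (the one actually used downstream in the proof of Theorem~\ref{thm:separation_upper_bound_gaussians}) is produced by the $+A$ convention, so no single choice of $W_n$ yields both. This mismatch is an inherited slip of the paper (whose source treats a maximization SDP with an affinity matrix), but a blind proof must fix one convention and would then discover that one of the two displayed formulas acquires flipped signs on its $A$-dependent terms.

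The genuine gap is in your uniqueness step. You assert that concluding $\tilde Z=Z^*$ \emph{needs} $\ker W_n=\mathrm{span}\{1_{G_1},\dots,1_{G_K}\}$, i.e.\ $\mathrm{rank}(W_n)=n-K$, a property not implied by $(C_1)$--$(C_4)$; as written, you have therefore proved a strictly weaker statement than the lemma, which asserts that $(C_1)$--$(C_4)$ alone guarantee uniqueness. The rank condition is unnecessary. Once $\langle B,\tilde Z\rangle=0$ together with $B_{G_kG_l}>0$ ($k\ne l$) and $\tilde Z\ge 0$ forces $\tilde Z$ to vanish on all off-diagonal blocks, each diagonal block $\tilde Z_{G_kG_k}$ is psd and satisfies $\tilde Z_{G_kG_k}1_{n_k}=1_{n_k}$ (from $\tilde Z 1_n=1_n$ and block-diagonality), hence has $1$ as an eigenvalue and $\mathrm{Tr}(\tilde Z_{G_kG_k})\ge 1$; summing over the $K$ blocks and invoking the constraint $\mathrm{Tr}(\tilde Z)=K$ pins $\mathrm{Tr}(\tilde Z_{G_kG_k})=1$ for every $k$, whence each block is the rank-one matrix $\frac{1}{n_k}1_{n_k}1_{n_k}^\top$ and $\tilde Z=Z^*$. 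So $W_n$ is needed only for optimality, never for uniqueness, and what you call the ``main obstacle'' is a misdiagnosis. (Incidentally, Step~2 of the paper's Theorem~\ref{thm:separation_upper_bound_gaussians} proof does show $v^\top W_n v>0$ on $\Gamma_K$, so your extra nondegeneracy would happen to hold for the constructed certificate---but the lemma must stand on $(C_1)$--$(C_4)$ alone, and with the trace-pinching argument it does.)
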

\emph{Remarks.} It can be justified that if we can find $(\lambda, B)$ satisfying above equations, then $(C_3),(C_4)$ will hold automatically. Details can be found in Section B in \cite{chen2021cutoff}.

Now we will proof the main theorem by two steps. First we will provide a lower bound for $[B_{G_lG_k}1_{n_k}]_j$. Similar to the argument from~\cite{chen2021cutoff}, we want to set $\lambda$ properly such that $(C_1)$ can hold.  In the next step we will try to verify that the choice of $(\lambda,\alpha,B)$ and the conditions on the signals could actually imply $(C_2)$. And since number of clusters $K$ is treated as fixed for most practical settings, we will not emphasize $K=O(1)$.
\subsubsection{Proof of main result.}
\emph{Step 1} (\textbf{Construct} $(\lambda, B)$). 
Recall $[B_{G_lG_k}1_{n_k}]_j=-\frac{n_l+n_k}{2n_l}\lambda+n_k L,$
where $L$ equals
\[
\frac{1}{2}\left[\frac{1}{n_l^2}\sum_{s,r\in G_l} d^2(V_s,V_r)- \frac{1}{n_k^2}\sum_{s,r\in G_k} d^2(V_s,V_r)\right]+\left[\frac{1}{n_k}\sum_{r\in G_k} d^2(V_j,V_r)- \frac{1}{n_l}\sum_{r\in G_l} d^2(V_j,V_r)\right].
\]
For $L$ defined above, by Lemma~\ref{lem:Lower_bound_for_L}, we have 
\begin{align*}
L\ge d^2(V^{(l)},V^{(k)})-d(V^{(l)},V^{(k)})K_1-K_2,
\end{align*}
w.p. at least ($1-c/n^2$), where
\begin{align*}
&K_1= C\sqrt{\log n} t \mathcal{V}^{1/2}+Ct^2(p+\log \bar{n})\mathcal{V}T_v^{1/2} ,\\
&K_2= Ct^2 p^2\log n \mathcal{V},
\end{align*}
for some constant $C,c.$ Now we chose $\beta\in (0,1)$ and let $m:=\min_{k\ne l}\frac{2n_kn_l}{n_k+n_l}$. If we suppose
\[
\Delta\ge Ctp\sqrt{\log n} \mathcal{V}^{1/2}/(1-\beta),\ \ t\le C'\sqrt{\log n}/\big[(p+\log \bar{n})  \mathcal{V}^{1/2}T_v^{1/2}\big],
\]
for some constant $C,C'$, then we have
\[
(1-\beta) d^2(V^{(l)},V^{(k)})-d(V^{(l)},V^{(k)})K_1-K_2\ge0, \forall k\ne l,
\]
which implies that
\[
L\ge \beta d^2(V^{(l)},V^{(k)}).
\]
Define for $k\ne l$,
\[c_j^{(k,l)}:=[B_{G_lG_k}1_{n_k}]_j,\ j \in G_l,\]
\[r_i^{(k,l)}:=[1^T_{n_l}B_{G_lG_k}]_i,\ i \in G_k, \]
\[t^{(k,l)}:=1^T_{n_l}B_{G_lG_k}1_{n_k}, \]
\[(B^\#_{G_lG_k})_{ij}:=r_i^{(k,l)}c_j^{(k,l)}/t^{(k,l)}.\]
And define $(B^\#_{G_lG_l})_{ij}:=0, \forall l.$ By setting $\lambda=\frac{\beta}{4}m\Delta^2,$ further we have
\[
c_j^{(k,l)}\ge \frac{\beta}{2}n_k d^2(V^{(l)},V^{(k)}), r_i^{(k,l)}\ge \frac{\beta}{2}n_l d^2(V^{(l)},V^{(k)}), t^{(k,l)}\ge\frac{\beta}{2}n_ln_k d^2(V^{(l)},V^{(k)}),
\]
which implies that $(B^\#_{G_lG_k})_{ij}>0,\forall i \in G_k, j \in G_l.$ And $
[B_{G_lG_k}1_{n_k}]_j=[B^\#_{G_lG_k}1_{n_k}]_j,$
which means we can construct $B^\#$ based on $[B_{G_lG_k}1_{n_k}]_j$ with $[B_{G_lG_k}1_{n_k}]_j=[B^\#_{G_lG_k}1_{n_k}]_j$. So essentially, they are the same in the sense that we only care about they quantity through $[B_{G_lG_k}1_{n_k}]_j$. And thus for notation simplicity, we will use the symbol $B$ instead of $B^\#.$

\emph{Step 2} (\textbf{Verify the condition for $W_n$ in $(C_2)$}). Next we would like to find sufficient condition for ($C_2$), i.e., 
$$v^T W_n v\ge 0, \forall v\in \Gamma_K:= \text{span}\{1_{G_k}:k\in[K]\}^\bot, \|v\|=1.$$
 Note that $v^T W_n v=\lambda -  v^T A v-v^T B v\ge \lambda -v^T B v.$ And by definition as well as simple calculation we have
\[
v^T B v=\sum_{k=1}^K\sum_{l\ne k}\frac{1}{t^{(k,l)}}\left(\sum_{i\in G_k} v_i r_i^{(k,l)} \right)\left(\sum_{j\in G_l} v_j c_j^{(k,l)} \right),
\]
\[
\sum_{j\in G_l} v_j c_j^{(k,l)} =n_k \sum_{j\in G_l}\left(\frac{1}{n_k}\sum_{r\in G_k} d^2(V_j,V_r)- \frac{1}{n_l}\sum_{r\in G_l} d^2(V_j,V_r)  \right)v_j.
\]
Further note that 
\[
\frac{1}{n_k}\sum_{r\in G_k} d^2(V_j,V_r)- \frac{1}{n_l}\sum_{r\in G_l} d^2(V_j,V_r)=d^2(V^{(l)},V^{(k)})+E_j^{(k,l)},
\]
where
\begin{align*}
E_j^{(k,l)}&=\left[\frac{1}{n_k}\sum_{r\in G_k} d^2(V_j,V_r)- d^2(V_j,V^{(k)})\right]+\left[d^2(V_j,V^{(k)})-d^2(V^{(l)},V^{(k)})\right]\\
&-\frac{1}{n_l}\sum_{r\in G_l} d^2(V_j,V_r).
\end{align*}
Then by triangle inequality and throwing away the last term of $E_j^{(k,l)},$ we have
\[\sum_{j\in G_l} v_j c_j^{(k,l)}=n_k \sum_{j\in G_l}E_j^{(k,l)}v_j\le n_k \sum_{j\in G_l}(E_{1,j}^{(k,l)}+E_{2,j}^{(k,l)})|v_j|, \]
where 
\[
E_{1,j}^{(k,l)}=\frac{1}{n_k}\sum_{r\in G_k} d^2(V^{(k)},V_r)+\left[\frac{2}{n_k}\sum_{r\in G_k} d(V^{(k)},V_r) d(V_j,V^{(k)})\right],
\]
\[
E_{2,j}^{(k,l)}=d^2(V^{(l)},V_j)+2d(V^{(l)},V_j)d(V^{(l)},V^{(k)}).
\]
If we set $\tilde{E}_{h,j}^{(k,l)}=E_{h,j}^{(k,l)}/d(V^{(l)},V^{(k)}), h=1,2,$ then the inequality can be written as
\[\sum_{j\in G_l} v_j c_j^{(k,l)}\le n_k d(V^{(l)},V^{(k)})\sum_{j\in G_l}(\tilde{E}_{1,j}^{(k,l)}+\tilde{E}_{2,j}^{(k,l)})|v_j|. \] 
By Lemma~\ref{lem:e1_upper_bound} we know
\[
\sum_{j\in G_l}\tilde{E}_{1,j}^{(k,l)}|v_j|\le C \mathcal{V}^{1/2} p t  \sqrt{n_l} \left(\sum_{j\in G_l} v_j^2 \right)^{1/2},
\]
w.p. $\ge 1-cn^{-2}.$ And by Lemma~\ref{lem:e2_upper_bound} we have
\[
\sum_{j\in G_l}\tilde{E}_{2,j}^{(k,l)}|v_j|\le Ct  \mathcal{V}^{1/2}p(\sqrt{n_l}+\log^2(n))\left(\sum_{j\in G_l} v_j^2 \right)^{1/2},
\]
w.p. $\ge 1-cn^{-1},$ for some constants $C,c.$ Now if we assume $\min_k n_k\ge C \log^2 n $ and notice that $ t^{(k,l)}\ge\frac{\beta}{2}n_ln_k d^2(V^{(l)},V^{(k)}),$ then further we can get
\begin{align*}
v^T B v &\le \sum_{k,l}\frac{n_kn_l}{ t^{(k,l)}}\sqrt{n_l}\sqrt{n_k}\left(\sum_{j\in G_l} v_j^2 \right)^{1/2}\left(\sum_{i\in G_k} v_i^2  \right)^{1/2}Ct^2  \mathcal{V}p^2\\
&\le \frac{Ct^2}{\beta} \left(\sum_l\sum_{j\in G_l} v_j^2 \right)^{1/2}\left(\sum_ln_l \right)^{1/2}\left(\sum_k\sum_{i\in G_k} v_i^2 \right)^{1/2}\left(\sum_kn_k \right)^{1/2} \mathcal{V}p^2\\
&=\frac{Ct^2}{\beta}p^2 n \mathcal{V},
\end{align*}
where the second inequality comes from Cauchy-Schwarz inequality.
So by assuming
\[
\Delta^2\ge \frac{C t^2}{\beta^2}   \mathcal{V} \cdot p^2 n/m ,
\]
for some constant $C,$
we have
 $$v^T W_n v\ge \lambda -v^T B v \ge\frac{\beta}{4}m\Delta^2-\frac{Ct^2}{\beta}p^2 n \mathcal{V}> 0.$$
Or it is sufficient to assume
\[
\Delta^2\ge \frac{C t^2}{\beta^2}   \mathcal{V} \cdot p^2 \log n ,
\]
if $n/m=O(\log n).$ To sum up, if we assume
\[
\Delta^2\ge \frac{C t^2}{\min\{1-\beta,\beta\}^2}  \mathcal{V} \cdot p^2 \log n ,
\]
then w.p. $\ge 1-c/n,$ we have $(C_1)-(C_4)$ hold by the construction of $(\lambda,B)$ for some constants $C,c.$ Finally by Lemma~\ref{lem:Dual_argument_for_SDP} we know the solution of SDP $Z^\ast$ exists uniquely, which is
\[
Z^*=\sum_{k\in[K]}\frac{1}{n_k}1_{G_k}1_{G_k}^T
\]
as desired.\qed

\emph{Remarks.} In our theorem, we focus on the relation between minimum cluster distance $\bar{\Delta}$ with number of distributions $n,$ which should be tight enough in the sense that $\bar{\Delta}\asymp{\sqrt{\log n}}$. This is the same order for the cut-off of exact recovery of SDP for Euclidean case from~\cite{chen2021cutoff}.

On the other hand, one sufficient condition for $V_i,i=1,\dots,n$ to be psd is $1-t\max_i \|X_i\|_{op}>0,$ which will hold w.p. $\ge 1-c/n^2$ if $t\le C/[\sqrt{p}+\sqrt{\log n}]$ for some constant $C,c.$ Recall from our assumption, $$t\le c\sqrt{\log n}/[(p+\log \bar{n})  \mathcal{V}^{1/2}T_v^{1/2}]\le c\sqrt{\log n}/(p+\log \bar{n}),$$
for some constant, since $T_v=\max_k \mbox{\rm Tr}((V^{(k)})^{-1})\ge p/\min_k \|V^{(k)}\|_{op}. $ This indicates that our bound for $t$ guarantees $V_i$ to be psd w.p. $\ge 1-c/n^2$ as $n\asymp\bar n.$ One may apply triangle inequality directly to Lemma~\ref{lem:Lower_bound_for_L} to get the upper bound of $t$ with less order in $p$, which is of less concern in our theorem, where we put more emphasis on the order in $n.$

\subsubsection{Proofs of lemmas.}
Before proving Lemma~\ref{lem:Lower_bound_for_L}, let us first look at the Taylor expansion for psd matrix.

\begin{lem}[\textbf{Taylor expansion for psd matrix} (Theorem 1.1 in~\cite{https://doi.org/10.48550/arxiv.1705.08561})]
\label{lem:Talyor_expansion_for_psd}
 The square root function $\varphi: \ Q\in \mathcal{S}_r^+ \mapsto Q^{1/2}$ is Fr\'echet differentiable at any order on  $\mathcal{S}_r^+$ with the first order derivative given for any $(A,H)\in \mathcal{S}_r^+ \times \mathcal{S}_r$ by the formula
 \[
 \nabla \varphi(A)\cdot H=\int_0^\infty e^{-t \varphi(A)} H e^{-t \varphi(A)} dt,
 \]
where $\mathcal{S}_r^+,  \mathcal{S}_r$ are the positive semi-definite matrix and symmetric matrix respectively. The higher order derivatives are defined inductively for any $n\ge 2$ by
\[
 \nabla^n \varphi(A)\cdot H=-\nabla \varphi(A) \cdot \left[  \sum_{p+q=n-2 \& p,q\ge0} \frac{n!}{(p+1)!(q+1)!} [\nabla^{p+1} \varphi(A)\cdot H ][\nabla^{q+1} \varphi(A)\cdot H] \right].
\]
Again from the same paper, we have the Taylor expansion for $\varphi(A)$:
\[
\varphi (A+H)=\varphi (A)+\sum_{1\le k \le n}\frac{1}{k!} \nabla^k \varphi(A)\cdot H +\bar{\nabla}^{n+1} \varphi[A,H],
\]
with 
\[
\bar{\nabla}^{n+1} \varphi[A,H]:=\frac{1}{n!} \int_0^1 (1-\epsilon)^n \nabla^{n+1} \varphi(A+\epsilon H)\cdot H d \epsilon.
\]
\end{lem}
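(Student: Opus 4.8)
The plan is to reduce every assertion to ordinary one-variable calculus by restricting the square-root map to a ray. Fix $A\succ0$, a direction $H\in\mathcal{S}_r$, and set $S:=\varphi(A)=A^{1/2}\succ0$ together with $f(t):=\varphi(A+tH)$, so that $f$ is a matrix-valued function of the scalar $t$ obeying the defining identity $f(t)^2=A+tH$. The single algebraic fact driving everything is that the Lyapunov operator $\mathcal{L}_S:Y\mapsto SY+YS$ is invertible on $\mathcal{S}_r$ whenever $S\succ0$, since its eigenvalues are the sums $s_i+s_j>0$ of eigenvalues of $S$. Differentiating $f(t)^2=A+tH$ repeatedly always leaves $\mathcal{L}_S$ acting on the highest-order derivative, and inverting $\mathcal{L}_S$ produces the claimed recursions.

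First I would establish that $\varphi$ is $C^\infty$ (indeed real-analytic) on the open cone of positive-definite matrices, so that the directional derivatives $\nabla^k\varphi(A)\cdot H=f^{(k)}(0)$ exist to all orders and coincide with the diagonal of the Fr\'echet derivatives. This follows from the implicit function theorem applied to $\Psi(X,A):=X^2-A$: the map $\Psi$ is smooth, $\Psi(S,A)=0$, and $\partial_X\Psi(S,A)=\mathcal{L}_S$ is invertible by the remark above, whence $\varphi$ is locally as smooth as $\Psi$. For the first-order formula I would differentiate $f(t)^2=A+tH$ at $t=0$ to get the Sylvester equation $SL+LS=H$ with $L=\nabla\varphi(A)\cdot H$, and then verify that $L=\int_0^\infty e^{-tS}He^{-tS}\,dt$ solves it: the integrand decays exponentially because $S\succ0$, and the identity $\tfrac{d}{dt}(e^{-tS}He^{-tS})=-(Se^{-tS}He^{-tS}+e^{-tS}He^{-tS}S)$ integrates to $\mathcal{L}_S L=H$. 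Uniqueness is immediate from invertibility of $\mathcal{L}_S$, so $\nabla\varphi(A)=\mathcal{L}_S^{-1}$.

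For the inductive formula I would apply the Leibniz rule to $f(t)\,f(t)=A+tH$. For $n\ge2$ the right-hand side has vanishing $n$-th derivative, so $\sum_{k=0}^n\binom{n}{k}f^{(k)}(0)f^{(n-k)}(0)=0$; isolating the boundary terms $k=0$ and $k=n$ yields $S\,f^{(n)}(0)+f^{(n)}(0)\,S=-\sum_{k=1}^{n-1}\binom{n}{k}f^{(k)}(0)f^{(n-k)}(0)$. The right-hand side is symmetric (it equals $\mathcal{L}_S f^{(n)}(0)$), and the substitution $p=k-1,\ q=n-k-1$ turns $\binom{n}{k}$ into $n!/((p+1)!(q+1)!)$ with $p+q=n-2$; applying $\mathcal{L}_S^{-1}=\nabla\varphi(A)$ recovers exactly the stated recursion for $\nabla^n\varphi(A)\cdot H$. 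The Taylor expansion is then the classical one-dimensional Taylor theorem with integral remainder for the $C^\infty$ function $f$ on $[0,1]$, namely $f(1)=f(0)+\sum_{1\le k\le n}\tfrac1{k!}f^{(k)}(0)+\tfrac1{n!}\int_0^1(1-\epsilon)^n f^{(n+1)}(\epsilon)\,d\epsilon$, after identifying $f^{(k)}(0)=\nabla^k\varphi(A)\cdot H$ and $f^{(n+1)}(\epsilon)=\nabla^{n+1}\varphi(A+\epsilon H)\cdot H$; this is legitimate provided the segment $\{A+\epsilon H:\epsilon\in[0,1]\}$ stays positive definite, which holds by convexity of the cone as soon as $A\succ0$ and $A+H\succ0$.

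I expect the main obstacle to be the analytic groundwork rather than the algebra: one must secure genuine Fr\'echet (not merely formal) differentiability and justify that the candidate $\int_0^\infty e^{-tS}He^{-tS}\,dt$ both converges and represents $\mathcal{L}_S^{-1}$, and both facts hinge on the strict positivity $S\succ0$. Since the square root loses differentiability at singular matrices, I would state and prove the lemma on the open positive-definite cone; the combinatorial reindexing in the inductive step and the symmetry of the right-hand side are routine once smoothness and the Lyapunov inversion are in place.
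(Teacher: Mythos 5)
Your proposal is correct, and it is worth noting at the outset that the paper itself offers no proof of this lemma at all: it is imported verbatim by citation to Theorem~1.1 of the referenced work, so there is no internal argument to compare against. Your self-contained derivation is sound on every step I can check: the Lyapunov operator $\mathcal{L}_S:Y\mapsto SY+YS$ does have eigenvalues $s_i+s_j>0$ on $\mathcal{S}_r$ when $S\succ 0$; the implicit function theorem applied to $\Psi(X,A)=X^2-A$ with $\partial_X\Psi(S,A)=\mathcal{L}_S$ gives smoothness (indeed analyticity) of $\varphi$ on the open cone, after the routine observation that the unique continuous local solution agrees with the positive-definite square root; the integral $\int_0^\infty e^{-tS}He^{-tS}\,dt$ converges precisely because $S\succ0$ and integrating the derivative identity shows it is $\mathcal{L}_S^{-1}H$; the Leibniz expansion of $f(t)^2=A+tH$ with the reindexing $p=k-1$, $q=n-k-1$ turns $\binom{n}{k}$ into $n!/((p+1)!(q+1)!)$ with $p+q=n-2$, reproducing the stated recursion exactly (and your symmetry check, pairing $k$ with $n-k$, is the right justification that $\mathcal{L}_S^{-1}$ may be applied); and the expansion with remainder is then one-variable Taylor with integral remainder applied to $f(\epsilon)=\varphi(A+\epsilon H)$, legitimate on the segment by convexity of the positive-definite cone. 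Two remarks. First, you correctly flag that the lemma as transcribed in the paper, which writes $\mathcal{S}_r^+$ for positive \emph{semi}-definite matrices, is too generous: $\varphi$ fails to be differentiable at singular matrices (already for $r=1$, $\sqrt{x}$ at $x=0$), so your restriction to the open positive-definite cone is not a weakening but a necessary correction, and it is harmless here since the lemma is only invoked for nondegenerate covariances $V^{(k)}$ and $V_i=(I+tX_i)V^{(k)}(I+tX_i)$ with $I+tX_i\succ0$. Second, your proof establishes the \emph{diagonal} values $\nabla^k\varphi(A)\cdot H=f^{(k)}(0)$, which is exactly what the statement asserts and what the paper uses (in Corollary~\ref{cor:Decomposition_of_ws}, only the first-order term and the sign of the second-order remainder matter); full multilinear Fr\'echet derivatives are recovered from smoothness plus polarization if one wants them, but that is not needed. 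I see no gap.
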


\begin{cor}[\bf Decomposition of Wasserstein distance for Gaussians]
\label{cor:Decomposition_of_ws}

If we choose $n=1$ in Lemma~\ref{lem:Talyor_expansion_for_psd}, we have for $k\ne l, j\in G_l^\ast,$ and under the assumptions in the Theorem, the following expansion holds.
\begin{align*}
&d^2(V_j,V^{(k)})-d^2(V_j,V^{(l)})\\
=&d^2(V^{(l)},V^{(k)})+\left<\mathcal{A}(V^{(l)},V^{(k)}), t(X_jV^{(l)}+V^{(l)}X_j)+t^2X_jV^{(l)}X_j\right>\\
-&d^2(V_j,V^{(l)})-\Delta_0,
\end{align*}
\begin{align*}
&\frac{1}{n_k}\sum_{r\in G_k} d^2(V_j,V_r)-d^2(V_j,V^{(k)})\\
=&\left<\mathcal{A}(V_j,V^{(k)}),\frac{1}{n_k}\sum_{r\in G_k} t(X_rV^{(k)}+V^{(k)}X_r)+t^2X_rV^{(k)}X_r\right>-\Delta_1,
\end{align*}
where $\mathcal{A}(U,V):= Id-U^{1/2}(U^{1/2}V U^{1/2} )^{-1/2}U^{1/2},$ for $U,V:$ psd. And $\Delta_0\le0, \Delta_1\le 0,$ which are extra terms (high order terms in Lemma~\ref{lem:Talyor_expansion_for_psd}).
\end{cor}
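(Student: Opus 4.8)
The plan is to derive both displayed identities as the first-order Fréchet--Taylor expansion of the squared Bures distance $d^2(\cdot,\cdot)$, obtained by applying Lemma~\ref{lem:Talyor_expansion_for_psd} with $n=1$ to the matrix square-root map $\varphi(A)=A^{1/2}$, and then to read off the sign of the remainders $\Delta_0,\Delta_1$ from the explicit second-derivative formula supplied by that lemma.

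First I would fix the correct linearization. For $j\in G_l^\ast$, model~\eqref{eqn:stat_model} gives $V_j=(I+tX_j)V^{(l)}(I+tX_j)=V^{(l)}+H_j$, where $H_j:=t(X_jV^{(l)}+V^{(l)}X_j)+t^2X_jV^{(l)}X_j$ is precisely the matrix appearing inside the inner product in the statement. Writing $d^2(V_j,V^{(k)})=\mathrm{Tr}(V_j)+\mathrm{Tr}(V^{(k)})-2\,\mathrm{Tr}[(V_j^{1/2}V^{(k)}V_j^{1/2})^{1/2}]$ and invoking the trace identity $\mathrm{Tr}[(V_j^{1/2}V^{(k)}V_j^{1/2})^{1/2}]=\mathrm{Tr}[((V^{(k)})^{1/2}V_j(V^{(k)})^{1/2})^{1/2}]$ (both matrices share the nonzero spectrum of $V_jV^{(k)}$), the perturbation $H_j$ now enters \emph{linearly} inside a single outer square root. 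I can therefore apply Lemma~\ref{lem:Talyor_expansion_for_psd} to $\varphi$ at the base point $A_0:=(V^{(k)})^{1/2}V^{(l)}(V^{(k)})^{1/2}$ with increment $E_j:=(V^{(k)})^{1/2}H_j(V^{(k)})^{1/2}$; both are psd, and $A_0$ is invertible because the cluster centers are positive definite.

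The first-order term evaluates, via the integral formula $\nabla\varphi(A_0)\cdot E_j=\int_0^\infty e^{-s\varphi(A_0)}E_j e^{-s\varphi(A_0)}\,ds$, to $\mathrm{Tr}[\nabla\varphi(A_0)\cdot E_j]=\tfrac12\mathrm{Tr}[A_0^{-1/2}E_j]$; after cyclicity of the trace and combining with the linear part $\mathrm{Tr}(H_j)$ coming from $\mathrm{Tr}(V_j)$, this collapses to $\langle\, I-(V^{(k)})^{1/2}A_0^{-1/2}(V^{(k)})^{1/2},\,H_j\rangle=\langle \mathcal{A}(V^{(l)},V^{(k)}),H_j\rangle$, recovering the stated gradient direction, while the zeroth-order part reassembles $d^2(V^{(l)},V^{(k)})$ (a one-dimensional sanity check on $d^2(u,w)=(\sqrt u-\sqrt w)^2$ fixes the ordering of the arguments of $\mathcal{A}$). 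The crux is the remainder $\Delta_0=2\,\mathrm{Tr}[\bar\nabla^{2}\varphi[A_0,E_j]]$: from the inductive formula in Lemma~\ref{lem:Talyor_expansion_for_psd}, $\nabla^{2}\varphi(A)\cdot H=-2\,\nabla\varphi(A)\cdot[(\nabla\varphi(A)\cdot H)^2]$, and since $M\mapsto\nabla\varphi(A)\cdot M$ is a positive linear map while $(\nabla\varphi(A)\cdot H)^2\succeq0$, we get $\nabla^{2}\varphi(A)\cdot H\preceq0$; integrating against the nonnegative weight $(1-\epsilon)$ in the remainder formula gives $\mathrm{Tr}[\bar\nabla^{2}\varphi[A_0,E_j]]\le0$, hence $\Delta_0\le0$ (equivalently, this is operator concavity of the square root). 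Subtracting $d^2(V_j,V^{(l)})$ from both sides then yields the first displayed identity.

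For the second identity I would run the identical expansion but perturb the \emph{second} argument: for $r\in G_k$ write $V_r=V^{(k)}+H_r$ and expand $d^2(V_j,V_r)$ around $V_r=V^{(k)}$ with $V_j$ held fixed, so that the gradient direction is now $\mathcal{A}(V_j,V^{(k)})$; averaging over $r\in G_k$ and collecting the nonpositive remainders into $\Delta_1$ produces the claimed formula. The hard part will be step three — correctly identifying the Fréchet derivative of the Bures distance as $\langle\mathcal{A},\cdot\rangle$ (the trick of pushing the perturbation linearly inside the outer root is what makes Lemma~\ref{lem:Talyor_expansion_for_psd} directly usable) and, above all, pinning the sign of the second-order remainder; the remaining steps are bookkeeping, with the psd and invertibility requirements secured by the smallness-of-$t$ hypothesis of the theorem.
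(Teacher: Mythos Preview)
Your proposal is correct and follows essentially the same route as the paper: rewrite $d^2$ so that the perturbation $H_j$ enters linearly inside a single matrix square root, apply the first-order Taylor expansion of Lemma~\ref{lem:Talyor_expansion_for_psd} at the base point $(V^{(k)})^{1/2}V^{(l)}(V^{(k)})^{1/2}$, and read off the sign of the remainder from the negative semidefiniteness of the second-order term (operator concavity of the square root). The paper's proof is a one-paragraph sketch stating exactly these three ingredients; your version spells out the trace computation that identifies the gradient as $\mathcal{A}$ and justifies the remainder sign explicitly, but there is no substantive difference in strategy.
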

\begin{proof}
By definition we know $d^2(V,U)=W_2^2(\nu,\mu),$ where $\nu\sim N(0, V), \mu\sim N(0,U).$ Thus
\[
d^2(V,U)=\mbox{\rm Tr}(V)+\mbox{\rm Tr}(U)-2\mbox{\rm Tr}[\sqrt{V^{1/2}UV^{1/2}}].
\]
So we have
\begin{align*}
&d^2(V_j,V^{(k)})-d^2(V^{(k)},V^{(l)})\\
&=\mbox{\rm Tr}[V_j-V^{(l)}]-2\mbox{\rm Tr}\left[\sqrt{(V^{(k)})^{1/2}V_j(V^{(k)})^{1/2}}-\sqrt{(V^{(k)})^{1/2}V^{(l)}(V^{(k)})^{1/2}}\right].
\end{align*}
On the other hand, by definition we know $V_j=(I+tX_j)V^{(l)}(I+tX_j)=V^{(l)}+t(X_jV^{(l)}+V^{(l)}X_j)+t^2X_jV^{(l)}X_j.$ Then by Lemma~\ref{lem:Talyor_expansion_for_psd} and note the second order remainder term is always negative semi-definite, we can directly get the results by first order Taylor expansion.
\end{proof}
\begin{lem}[\bf Norm for operator $\mathcal{A}$]
\label{lem:norm_for_operator_A}

We conclude that for any $U,V$: psd, we have
$$\|\mathcal{A}(U,V)\cdot V^{1/2} \|^2_F=\| V^{1/2}\cdot\mathcal{A}(U,V) \|^2_F=d^2(U,V).$$
\end{lem}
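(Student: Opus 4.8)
The plan is to exploit the symmetry of the operator $\mathcal{A}(U,V)$ and then collapse everything onto scalar trace identities via cyclicity. Write $M := (U^{1/2} V U^{1/2})^{1/2}$ for the psd square root, which is well defined and invertible under the positive-definiteness implicit in the model, so that $\mathcal{A}(U,V) = Id - U^{1/2} M^{-1} U^{1/2}$. This matrix is symmetric, since $U^{1/2}$ and $M^{-1}$ are both symmetric. Consequently
\[
\|\mathcal{A}(U,V)\, V^{1/2}\|_F^2 = \mbox{\rm Tr}\big(V^{1/2}\mathcal{A}(U,V)^2 V^{1/2}\big) = \mbox{\rm Tr}\big(\mathcal{A}(U,V)\, V\, \mathcal{A}(U,V)\big),
\]
and the identical expression arises from $\|V^{1/2}\mathcal{A}(U,V)\|_F^2$ by the same manipulation. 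This already yields the first equality in the statement, so it remains only to evaluate the single scalar $\mbox{\rm Tr}(\mathcal{A}^2 V)$.

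Next I would expand, using $U^{1/2}U^{1/2}=U$,
\[
\mathcal{A}^2 = Id - 2\, U^{1/2} M^{-1} U^{1/2} + U^{1/2} M^{-1} U M^{-1} U^{1/2},
\]
and distribute the trace against $V$. The crucial ingredient is the defining relation $U^{1/2} V U^{1/2} = M^2$. For the linear term, cyclicity gives $\mbox{\rm Tr}(U^{1/2} M^{-1} U^{1/2} V) = \mbox{\rm Tr}(M^{-1}\, U^{1/2} V U^{1/2}) = \mbox{\rm Tr}(M^{-1}M^2) = \mbox{\rm Tr}(M)$. For the quadratic term, rotating $U^{1/2}$ to the back and using the same substitution followed by $M^{-1}M^2 = M$ collapses it to $\mbox{\rm Tr}(M^{-1} U M) = \mbox{\rm Tr}(U)$. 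Combining the three contributions,
\[
\mbox{\rm Tr}(\mathcal{A}^2 V) = \mbox{\rm Tr}(V) - 2\,\mbox{\rm Tr}(M) + \mbox{\rm Tr}(U) = \mbox{\rm Tr}\big[U + V - 2(U^{1/2} V U^{1/2})^{1/2}\big] = d^2(U,V),
\]
which is exactly the squared Bures distance in~\eqref{eqn:bures_metric_covmat}.

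The argument is essentially bookkeeping with cyclic trace identities, so I do not anticipate a genuine obstacle. The only two points requiring care are (i) justifying that $M$ is invertible so that $\mathcal{A}(U,V)$ is well defined, which is consistent with the model since $T_v = \max_k \mbox{\rm Tr}[(V^{(k)})^{-1}]$ is assumed finite; and (ii) applying the correct cyclic rotation in the quadratic term so that both the linear and quadratic pieces reduce cleanly to $\mbox{\rm Tr}(M)$ and $\mbox{\rm Tr}(U)$ respectively, which is where the cancellation that produces the Bures formula takes place.
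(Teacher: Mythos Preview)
Your proof is correct. The argument via symmetry of $\mathcal{A}(U,V)$, expansion of $\mathcal{A}^2$, and cyclic trace identities is clean and complete; each of the three trace computations checks out, and the reduction $\mbox{\rm Tr}(M^{-1}UM^{-1}U^{1/2}VU^{1/2})=\mbox{\rm Tr}(M^{-1}UM)=\mbox{\rm Tr}(U)$ is exactly the cancellation that produces the Bures formula.

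The paper takes a different route: it introduces the SVD $U^{1/2}V^{1/2}=Q_1^\top\Sigma Q_2$, from which $(U^{1/2}VU^{1/2})^{-1/2}=Q_1^\top\Sigma^{-1}Q_1$ and hence $\mathcal{A}(U,V)V^{1/2}=V^{1/2}-U^{1/2}Q_1^\top Q_2$. Expanding the Frobenius norm of this difference and recognizing $\mbox{\rm Tr}(V^{1/2}U^{1/2}Q_1^\top Q_2)=\mbox{\rm Tr}(\Sigma)=\mbox{\rm Tr}(M)$ gives the result. So the paper's proof exposes the polar-factor structure of the map (the orthogonal part $Q_1^\top Q_2$), while yours stays purely at the level of scalar trace identities and never needs a matrix decomposition. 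Your approach is slightly more self-contained and handles the equality $\|\mathcal{A}V^{1/2}\|_F=\|V^{1/2}\mathcal{A}\|_F$ explicitly, which the paper leaves implicit; the paper's approach is a bit more geometric and might generalize more naturally if one later needed finer information about $\mathcal{A}(U,V)V^{1/2}$ itself rather than just its norm.
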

~\\
\emph{Proof.} Suppose we have the SVD
\[
U^{1/2}V^{1/2}=Q^T_1 \Sigma Q_2,
\]
then we have
\begin{align*}
\mathcal{A}(U,V)\cdot V^{1/2}&=(I-U^{1/2}(U^{1/2}VU^{1/2})^{-1/2})V^{1/2}\\
&=V^{1/2}-U^{1/2}Q_1^TQ_2,
\end{align*}
which implies that
\begin{align*}
\|\mathcal{A}(U,V)\cdot V^{1/2}\|_F^2&=\mbox{\rm Tr}(V)+\mbox{\rm Tr}(U)-2\mbox{\rm Tr}(V^{1/2}U^{1/2}Q_1^TQ_2 )\\
&=\mbox{\rm Tr}(V)+\mbox{\rm Tr}(U)-2\mbox{\rm Tr}(Q_2^T \Sigma Q_2 )\\
&=\mbox{\rm Tr}(V)+\mbox{\rm Tr}(U)-2\mbox{\rm Tr}(\sqrt{U^{1/2}VU^{1/2}} ).
\end{align*}
\qed

\begin{lem}[\bf Lower bound for $L$]
\label{lem:Lower_bound_for_L}
Recall that $L$ equals
\[
\frac{1}{2}\left[\frac{1}{n_l^2}\sum_{s,r\in G_l} d^2(V_s,V_r)- \frac{1}{n_k^2}\sum_{s,r\in G_k} d^2(V_s,V_r)\right]+\left[\frac{1}{n_k}\sum_{r\in G_k} d^2(V_j,V_r)- \frac{1}{n_l}\sum_{r\in G_l} d^2(V_j,V_r)\right],
\]
we have 
\begin{align*}
L\ge d^2(V^{(l)},V^{(k)})-d(V^{(l)},V^{(k)})K_1-K_2,
\end{align*}
w.p. at least ($1-c/n^2$), where
\begin{align*}
&K_1= C\sqrt{\log n} t \mathcal{V}^{1/2}+Ct^2(p+\log \bar{n})\mathcal{V}Tv^{1/2} ,\\
&K_2= Ct^2 p^2\log n \mathcal{V},
\end{align*}
for some constant $C,c$.
\end{lem}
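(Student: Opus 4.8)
The plan is to isolate the leading between-cluster signal $d^2(V^{(l)},V^{(k)})$ inside $L$ and to show that everything else is a lower-order fluctuation controlled by $d(V^{(l)},V^{(k)})K_1+K_2$. Among the four groups of terms making up $L$, only the cross term $\frac{1}{n_k}\sum_{r\in G_k}d^2(V_j,V_r)$ is of leading order; the within-cluster-$l$ pairwise average $\frac{1}{2n_l^2}\sum_{s,r\in G_l}d^2(V_s,V_r)$ is nonnegative and may simply be discarded for a lower bound, while the subtracted terms $\frac{1}{n_l}\sum_{r\in G_l}d^2(V_j,V_r)$ and $\frac{1}{2n_k^2}\sum_{s,r\in G_k}d^2(V_s,V_r)$ are $O(t^2)$ ``variance'' quantities that I would bound from above and absorb into $K_2$. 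For the leading term I would apply the two expansions of Corollary~\ref{cor:Decomposition_of_ws} in succession: first $\frac{1}{n_k}\sum_{r\in G_k}d^2(V_j,V_r)=d^2(V_j,V^{(k)})+\langle\mathcal{A}(V_j,V^{(k)}),\frac{1}{n_k}\sum_{r\in G_k}(V_r-V^{(k)})\rangle-\Delta_1$, then $d^2(V_j,V^{(k)})=d^2(V^{(l)},V^{(k)})+\langle\mathcal{A}(V^{(l)},V^{(k)}),V_j-V^{(l)}\rangle-\Delta_0$. Because the second-order remainders $\Delta_0,\Delta_1$ are negative semidefinite, the terms $-\Delta_0,-\Delta_1\ge0$ are favorable and I would drop them, leaving $d^2(V^{(l)},V^{(k)})$ plus the two inner-product fluctuations.

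Bounding these fluctuations supplies $K_1$. The dominant one, $t\langle\mathcal{A}(V^{(l)},V^{(k)}),X_jV^{(l)}+V^{(l)}X_j\rangle$, is (since $\mathcal{A}$ is symmetric) a centred Gaussian linear form in the entries of $X_j$; its standard deviation is set by $\|\mathcal{A}(V^{(l)},V^{(k)})(V^{(k)})^{1/2}\|_F=d(V^{(l)},V^{(k)})$ from Lemma~\ref{lem:norm_for_operator_A}, with one extra factor $\mathcal{V}^{1/2}$ carried by $V^{(l)}$, so a sub-Gaussian tail bound and a union bound over the $O(n)$ indices $j$ and the $O(1)$ cluster pairs yield a deviation of order $d(V^{(l)},V^{(k)})\,t\sqrt{\log n}\,\mathcal{V}^{1/2}$, matching the first term of $K_1$. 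The remaining $t^2$ contribution $t^2\langle\mathcal{A}(V^{(l)},V^{(k)}),X_jV^{(l)}X_j\rangle$ and the averaged analogue from $\frac{1}{n_k}\sum_{r\in G_k}(V_r-V^{(k)})$ I would control through a uniform operator-norm bound $\max_i\|X_i\|_{\op}\lesssim\sqrt{p}+\sqrt{\log\bar{n}}$ together with $\|\mathcal{A}(V^{(l)},V^{(k)})\|_F\le\|\mathcal{A}(V^{(l)},V^{(k)})(V^{(k)})^{1/2}\|_{\op}\,\|(V^{(k)})^{-1/2}\|_F\le d(V^{(l)},V^{(k)})\,T_v^{1/2}$, which is exactly what introduces the $T_v^{1/2}$ factor and the second term of $K_1$.

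Finally, the $d$-free ``variance'' terms — the subtracted own-cluster distances $\frac{1}{n_l}\sum_{r\in G_l}d^2(V_j,V_r)$ and the within-cluster-$k$ pairwise average, both $O(t^2)$ with no $\mathcal{A}$-via-$d$ structure — I would bound using Frobenius norms $\|X_i\|_F^2=O(p^2)$ and Hanson--Wright concentration, producing the $K_2=Ct^2p^2\log n\,\mathcal{V}$ contribution. I expect the main obstacle to be precisely the second-order power-counting: cleanly separating, in each quadratic form, the part that scales with $d(V^{(l)},V^{(k)})$ (destined for the $d\cdot K_1$ slot) from the $d$-free part (destined for $K_2$), and obtaining the advertised powers $(p+\log\bar{n})$ versus $p^2$ — which hinges on using operator-norm rather than Frobenius-norm estimates of $X_j$ in the right places — while also handling the concentration of the non-centred matrix average $\frac{t^2}{n_k}\sum_{r\in G_k}X_rV^{(k)}X_r$. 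One must then check that the cluster-size assumption $\underline{n}\ge C\log^2 n$ and the prescribed range of $t$ keep all these averages small and that the union bounds over indices deliver the stated probability $1-c/n^2$; the number of clusters $K$ is treated as fixed throughout.
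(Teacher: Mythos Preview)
Your proposal is correct and follows essentially the same route as the paper's proof: the paper decomposes $L=U_0+U_1-U_2+U_3$ (your four ``groups of terms''), applies Corollary~\ref{cor:Decomposition_of_ws} to $U_1$ and $U_3$ exactly as you describe (dropping the nonpositive remainders), bounds $U_0,U_2$ by triangle inequality and the within-cluster variance $d^2(V_r,V^{(k)})=t^2\mbox{\rm Tr}(X_r^2V^{(k)})$, and then controls the two inner-product fluctuations via the Gaussian tail plus Lemma~\ref{lem:norm_for_operator_A} for the linear part and the uniform operator-norm bound $\|X_r\|_{\op}\lesssim\sqrt{p}+\sqrt{\log n}$ for the quadratic part, with $T_v^{1/2}$ entering through $\|(V^{(k)})^{-1/2}\|_F$ just as you say. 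The only small step you do not spell out is that the $U_1$ fluctuation is governed by $d(V_j,V^{(k)})$ rather than $d(V^{(l)},V^{(k)})$; the paper closes this with one more triangle inequality $d(V_j,V^{(k)})\le d(V^{(l)},V^{(k)})+d(V_j,V^{(l)})$, feeding the extra $d(V_j,V^{(l)})=O(t\sqrt{p\log n}\,\mathcal{V}^{1/2})$ into $K_2$.
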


\emph{Proof.} First note that we can decompose the term into three terms:
\begin{align*}
\frac{1}{n_k}\sum_{r\in G_k} d^2(V_j,V_r)- \frac{1}{n_l}\sum_{r\in G_l} d^2(V_j,V_r)=U_1-U_2+U_3,
\end{align*}
where
\begin{align*}
&U_1:=\frac{1}{n_k}\sum_{r\in G_k} d^2(V_j,V_r)- d^2(V_j,V^{(k)}),\\
&U_2:=\frac{1}{n_l}\sum_{r\in G_l} d^2(V_j,V_r)- d^2(V_j,V^{(l)})\\
&U_3:=d^2(V_j,V^{(k)})- d^2(V_j,V^{(l)}).
\end{align*}

If we further define $U_0:=\frac{1}{2}\left[\frac{1}{n_l^2}\sum_{s,r\in G_l} d^2(V_s,V_r)- \frac{1}{n_k^2}\sum_{s,r\in G_k} d^2(V_s,V_r)\right],$ then we have
\[
L=U_0+U_1-U_2+U_3.
\]
From Corollary~\ref{cor:Decomposition_of_ws} we know $U_1$ and $U_2$ can be lower bounded by throwing out the remainders $\Delta_1,\Delta_2$, i.e.,

\begin{align*}
U_1=&\frac{1}{n_k}\sum_{r\in G_k} d^2(V_j,V_r)-d^2(V_j,V^{(k)})\\
\ge&\left<\mathcal{A}(V_j,V^{(k)}),\frac{1}{n_k}\sum_{r\in G_k} t(X_rV^{(k)}+V^{(k)}X_r)+t^2X_rV^{(k)}X_r\right>,
\end{align*}

\begin{align*}
U_3=&d^2(V_j,V^{(k)})-d^2(V_j,V^{(l)})\\
\ge&d^2(V^{(l)},V^{(k)})+\left<\mathcal{A}(V^{(l)},V^{(k)}), t(X_jV^{(l)}+V^{(l)}X_j)+t^2X_jV^{(l)}X_j\right>\\
-&d^2(V_j,V^{(l)}).
\end{align*}
As for the $U_0$ and $U_3$, we choose to use triangle inequality to get a rough bound, i.e., by noting $d(V_j,V_r)\le d(V_j,V^{(l)}) +d(V_r,V^{(l)}),$ we have
\begin{align*}
U_2=& \frac{1}{n_l}\sum_{r\in G_l} d^2(V_j,V_r)-d^2(V_j,V^{(l)})\\
\le& \frac{1}{n_l}\sum_{r\in G_l} d^2(V^{(l)},V_r)+\frac{2}{n_l} d(V^{(l)},V_r) \sum_{r\in G_l} d(V_j,V^{(l)}).
\end{align*}
And
\begin{align*}
U_0=& \frac{1}{2}\left[\frac{1}{n_l^2}\sum_{s,r\in G_l} d^2(V_s,V_r)- \frac{1}{n_k^2}\sum_{s,r\in G_k} d^2(V_s,V_r)\right]\\
\ge& -\frac{1}{2}\frac{1}{n_l^2}\sum_{s,r\in G_k} (d(V_s,V^{(k)})+d(V_r,V^{(k)}))^2\\
&\ge -\frac{2}{n_l}\sum_{r\in G_k} d^2(V^{(k)},V_r).
\end{align*}
For the RHS of the inequality for $U_1$, it can be divided into two parts.
\[
Z_1^1:=\left<\mathcal{A}(V_j,V^{(k)}),\frac{1}{n_k}\sum_{r\in G_k} t(X_rV^{(k)}+V^{(k)}X_r)\right>
\]
and
\[
Z_2^1:=\left<\mathcal{A}(V_j,V^{(k)}),t^2\frac{1}{n_k}\sum_{r\in G_k}X_rV^{(k)}X_r\right>.
\]
 the first part is a Gaussian distribution whose variance can be bounded by $c_1 t^2\|\mathcal{A}(V_j,V^{(k)})V^{(k)}\|_F^2/n_k$, for some constant $c_1.$ By Gaussian tail bound $P(|N(0,1)|>u)\le e^{-u^2/2}, \forall u>0 $ and Lemma~\ref{lem:norm_for_operator_A}, we have
\begin{align*}
 |Z_1^1|&\le c_2 t \sqrt{\log n}  \|V^{(k)} \|^{1/2}/\sqrt{n_k}\cdot d(V_j,V^{(k)})\\
 &\le c_2 t\sqrt{\log n}   \mathcal{V}^{1/2}\cdot d(V_j,V^{(k)}),
\end{align*}
w.p. $\ge 1-c_3/n^2$, for some constant $c_2,c_3$. On the other hand, 
\begin{align*}
 |Z_2^1|&= t^2\left|\left<\mathcal{A}(V_j,V^{(k)})(V^{(k)})^{1/2},\frac{1}{n_k}\sum_{r\in G_k}X_rV^{(k)}X_r(V^{(k)})^{-1/2}\right>\right|\\
 &\le t^2\left\|\frac{1}{n_k}\sum_{r\in G_k}X_rV^{(k)}X_r(V^{(k)})^{-1/2}\right\|_F   \cdot d(V_j,V^{(k)})\\
 &\le t^2 \frac{1}{n_k}\sum_{r\in G_k}\left\| X_r \right\|^2 \|V^{(k)}\| \left\|(V^{(k)})^{-1/2}\right\|_F \cdot d(V_j,V^{(k)})\\
 &\le t^2  \max_{r\in G_k } \left\| X_r \right\|^2 \mathcal{V} Tv^{1/2}\cdot d(V_j,V^{(k)})\\
 &\le c_4t^2  (p+\log n) \mathcal{V} Tv^{1/2}\cdot d(V_j,V^{(k)}),
\end{align*}
w.p. $\ge 1-c_5/n^2$, for some constant $c_4,c_5$. The last inequality can be implied from union bound and Corollary 4.4.8 in~\cite{vershynin_2018}:
\[
\|X_r\|\le C(\sqrt{p}+u),\ \ \text{w.p.}\ge 1-4e^{-u^2}.
\]
Now by combining $Z_1^1,Z_2^1$ we have
\begin{align*}
U_1&\ge Z_1^1+Z_2^1\\
&\ge-\left[c_2 t\sqrt{\log n}   \mathcal{V}^{1/2}  +  c_4t^2  (p+\log n) \mathcal{V} Tv^{1/2}\right]\cdot d(V_j,V^{(k)}),
\end{align*}
w.p.  $\ge 1-(c_3+c_5)/n^2.$ 

~\\
For $U_0$, we have
\begin{align*}
U_0&\ge -\frac{2}{n_k}\sum_{r\in G_k} d^2(V^{(k)},V_r)\\
&=-\frac{2t^2}{n_k}\sum_{r\in G_k} \mbox{\rm Tr}(X_rV^{(k)}X_r)\\
&\ge -2t^2 \mathcal{V}\frac{1}{n_k}\sum_{r\in G_k}\mbox{\rm Tr}(X_r^2)\\
&\ge -c_6t^2 \mathcal{V}p^2,\\
\end{align*}
w.p. $\ge 1-c_7/n^2$ for some constant $c_6,c_7.$ The equation is a direct result by definition of Wasserstein distance for Gaussians:
\[ 
d^2(V^{(k)},V_r)=\mbox{\rm Tr}(V^{(k)})+\mbox{\rm Tr}(V_r)-2\mbox{\rm Tr}(\sqrt{(V^{(k)})^{1/2}V_r(V^{(k)})^{1/2}}).
\]
Note here
\begin{align*}
\sqrt{(V^{(k)})^{1/2}V_r(V^{(k)})^{1/2}}&=\sqrt{(V^{(k)})^{1/2}(I+tX_r)V^{(k)}(I+tX_r)(V^{(k)})^{1/2}}\\
&=(V^{(k)})^{1/2}(I+tX_r)(V^{(k)})^{1/2}.
\end{align*}
The last inequality can be derived through Bernstein’s inequality (Theorem 2.8.2 in~\cite{vershynin_2018}) by noting that $\mbox{\rm Tr}(X_r^2)$ is sub-exponential with mean $\mathbb{E}(\mbox{\rm Tr}(X_r^2))=p^2.$
Similar to the argument for $U_0, U_1$, after we apply high-dimensional bound for sub-Gaussian or sub-exponential distributions we can get bound for $U_2,U_3$:
\begin{align*}
U_2&\le \frac{1}{n_l}\sum_{r\in G_l} d^2(V^{(l)},V_r)+\frac{2}{n_l}  \sum_{r\in G_l} d(V_r,V^{(l)})d(V^{(l)},V_j)\\
&\le c_8 t^2 \mathcal{V} p^2\log n,
\end{align*}
w.p.  $\ge 1-c_9/n^2$, for some constant $c_8,c_9$. 
\begin{align*}
U_3&\ge d^2(V^{(l)},V^{(k)})+\left<\mathcal{A}(V^{(l)},V^{(k)}), t(X_jV^{(l)}+V^{(l)}X_j)+t^2X_jV^{(l)}X_j\right>\\
&-d^2(V_j,V^{(l)})\\
&\ge d^2(V^{(l)},V^{(k)})-\left[c_2 t\sqrt{\log n}   \mathcal{V}^{1/2}  +  c_4t^2  (p+\log \bar{n}) \mathcal{V} Tv^{1/2}\right]\cdot d(V^{(l)},V^{(k)})\\
&-c_{10}t^2(p+\log n)p\mathcal{V},
\end{align*}
w.p.  $\ge 1-c_{11}/n^2.$, for some constant $c_{10},c_{11}$. Lastly, by noting $d(V_j,V^{(k)})\le d(V^{(l)},V^{(k)})+d(V_j,V^{(l)})$ in $U_1,$ and combine them together we have
\begin{align*}
L&=U_0+U_1-U_2+U_3\\
&\ge d^2(V^{(l)},V^{(k)})-d(V^{(l)},V^{(k)})K_1-K_2,
\end{align*}
w.p. at least ($1-c/n^2$), where
\begin{align*}
&K_1= C\sqrt{\log n} t \mathcal{V}^{1/2}+Ct^2(p+\log \bar{n})\mathcal{V}Tv^{1/2} ,\\
&K_2= Ct^2 p^2\log n \mathcal{V},
\end{align*}
for some constant $C,c$.\qed

\begin{lem}[\bf  $\tilde{E}_{1,j}^{(k,l)}$ upper bound]
\label{lem:e1_upper_bound}

Suppose $v\in \Gamma_K:= \text{span}\{1_{G_k}:k\in[K]\}^\bot, \|v\|=1.$ Let
\[
E_{1,j}^{(k,l)}=\frac{1}{n_k}\sum_{r\in G_k} d^2(V^{(k)},V_r)+\left[\frac{2}{n_k}\sum_{r\in G_k} d(V^{(k)},V_r) d(V_j,V^{(k)})\right],
\]
and $\tilde{E}_{1,j}^{(k,l)}=E_{1,j}^{(k,l)}/d(V^{(l)},V^{(k)}).$ Then w.p. $\ge 1-n^{-2},$ we have
\[
\sum_{j\in G_l}\tilde{E}_{1,j}^{(k,l)}|v_j|\le C \mathcal{V}^{1/2} p t  \sqrt{n_l} \left(\sum_{j\in G_l} v_j^2 \right)^{1/2},
\]
\end{lem}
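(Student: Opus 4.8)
The plan is to exploit the fact that $E_{1,j}^{(k,l)}$ depends on the index $j$ \emph{only} through the single scalar $d(V_j,V^{(k)})$, so that the whole quantity can be bounded uniformly over $j\in G_l$; once that is done, the sum collapses by a single Cauchy--Schwarz step. Writing $A:=\frac1{n_k}\sum_{r\in G_k}d^2(V^{(k)},V_r)$ and $B:=\frac2{n_k}\sum_{r\in G_k}d(V^{(k)},V_r)$, both of which are independent of $j$, we have $E_{1,j}^{(k,l)}=A+B\,d(V_j,V^{(k)})$.

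First I would record the exact algebraic identity underlying everything. Since $V_r=(I+tX_r)V^{(k)}(I+tX_r)$ and $I+tX_r\succeq0$ with high probability under the hypothesis on $t$, the square-root simplification $\sqrt{(V^{(k)})^{1/2}V_r(V^{(k)})^{1/2}}=(V^{(k)})^{1/2}(I+tX_r)(V^{(k)})^{1/2}$ (already used in the proof of Lemma~\ref{lem:Lower_bound_for_L}) gives $d^2(V^{(k)},V_r)=t^2\mbox{\rm Tr}(X_rV^{(k)}X_r)\le t^2\mathcal{V}\,\mbox{\rm Tr}(X_r^2)$. Next I would invoke sub-exponential concentration of $\mbox{\rm Tr}(X_r^2)=\|X_r\|_F^2$ around its mean $p^2$: Bernstein's inequality plus a union bound over the at most $n$ relevant matrices yields $\max_r\|X_r\|_F^2\le Cp^2$ with probability at least $1-n^{-2}$. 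Averaging then controls the two $j$-free quantities, $A\le Ct^2\mathcal{V}p^2$ and (by Jensen) $B\le Ct\mathcal{V}^{1/2}p$, and the same estimate applied to $j\in G_l$ bounds $d(V_j,V^{(l)})=t\sqrt{\mbox{\rm Tr}(X_jV^{(l)}X_j)}\le Ct\mathcal{V}^{1/2}p$ uniformly.

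The crux is then to bound $\tilde E_{1,j}^{(k,l)}=(A+B\,d(V_j,V^{(k)}))/d(V^{(k)},V^{(l)})$ uniformly in $j$. By the triangle inequality $d(V_j,V^{(k)})\le d(V_j,V^{(l)})+d(V^{(k)},V^{(l)})$, so
\[
\tilde E_{1,j}^{(k,l)}\le\frac{A}{d(V^{(k)},V^{(l)})}+B\Big(\frac{d(V_j,V^{(l)})}{d(V^{(k)},V^{(l)})}+1\Big).
\]
Here $B\le Ct\mathcal{V}^{1/2}p$ is already of the target order, so the only remaining task is to show the two ``correction'' ratios are $O(1)$. This is precisely where the separation hypothesis of Theorem~\ref{thm:separation_upper_bound_gaussians} enters: using $d(V^{(k)},V^{(l)})\ge\Delta\gtrsim tp\mathcal{V}^{1/2}\sqrt{\log n}$ gives $A/d(V^{(k)},V^{(l)})\le Ct\mathcal{V}^{1/2}p/\sqrt{\log n}$ and $d(V_j,V^{(l)})/d(V^{(k)},V^{(l)})\le C/\sqrt{\log n}$, so that $\tilde E_{1,j}^{(k,l)}\le Ct\mathcal{V}^{1/2}p$ uniformly over $j\in G_l$.

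Finally I would conclude with one application of Cauchy--Schwarz, $\sum_{j\in G_l}\tilde E_{1,j}^{(k,l)}|v_j|\le Ct\mathcal{V}^{1/2}p\sum_{j\in G_l}|v_j|\le Ct\mathcal{V}^{1/2}p\,\sqrt{n_l}\big(\sum_{j\in G_l}v_j^2\big)^{1/2}$, which is exactly the claimed bound (note that the constraints $v\in\Gamma_K$ and $\|v\|=1$ are not actually needed here and are retained only for consistency with the main proof). I expect the only genuine obstacle to be the bookkeeping in the uniform bound---specifically, making the division by $d(V^{(k)},V^{(l)})$ harmless---which is controlled entirely by the separation lower bound, so that no estimate beyond the concentration of $\|X_r\|_F^2$ is required.
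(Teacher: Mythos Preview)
Your proposal is correct and follows essentially the same route as the paper's proof: bound the two $j$-free averages $A=\frac{1}{n_k}\sum_{r\in G_k}d^2(V^{(k)},V_r)\le Ct^2\mathcal{V}p^2$ and $B=\frac{2}{n_k}\sum_{r\in G_k}d(V^{(k)},V_r)\le Ct\mathcal{V}^{1/2}p$ via concentration of $\mbox{\rm Tr}(X_r^2)$, use the triangle inequality $d(V_j,V^{(k)})\le d(V_j,V^{(l)})+d(V^{(l)},V^{(k)})$ together with the separation hypothesis $\Delta\gtrsim tp\mathcal{V}^{1/2}\sqrt{\log n}$ to make $\tilde E_{1,j}^{(k,l)}\le Ct\mathcal{V}^{1/2}p$ uniform in $j$, and finish with Cauchy--Schwarz. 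The only cosmetic differences are that the paper bounds the averages $A,B$ directly (rather than through $\max_r\|X_r\|_F^2$) and carries an extra $\sqrt{\log n}$ in the uniform bound $d(V_j,V^{(l)})\le Ctp\sqrt{\log n}\,\mathcal{V}^{1/2}$---which you should also retain, since the union bound over $j\in G_l$ gives $\max_j\mbox{\rm Tr}(X_j^2)\le p^2+C(p\sqrt{\log n}\vee\log n)$ rather than $Cp^2$; this harmless factor is in any case absorbed when dividing by $d(V^{(k)},V^{(l)})$.
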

\emph{Proof.}
Note $E(\mbox{\rm Tr}(X_r^2))=p^2,E(\sqrt{\mbox{\rm Tr}(X_r^2)})\le \sqrt{E(\mbox{\rm Tr}(X_r^2))}= p$ by Jensen's inequality.  From high-dimension bound for sub-exponential and sub-Gaussian (Hoeffding's inequality and Bernstein’s inequality) we have that w.p. $\ge 1-c/n^2$,
\[
\frac{1}{n_k}\sum_{r\in G_k} d^2(V^{(k)},V_r)=\frac{1}{n_k}\sum_{r\in G_k}\mbox{\rm Tr}(X_r^2 V^{(k)})\le C \mathcal{V} p^2 t^2,
\]
\[
\frac{1}{n_k}\sum_{r\in G_k} d(V^{(k)},V_r)=\frac{1}{n_k}\sum_{r\in G_k}\sqrt{\mbox{\rm Tr}(X_r^2 V^{(k)})}\le C\mathcal{V}^{1/2} p t,
\]
for some constants $C,c$. Suppose that $d(V^{(l)},V^{(k)})\ge C_0t\mathcal{V}^{1/2}\sqrt{\log n}p,$ for some fixed constant $C_0.$  Then we have w.p. $\ge 1-c/n^2$
\[
 d(V_j,V^{(k)})\le d(V_j,V^{(l)})+d(V^{(l)},V^{(k)})\le Ctp\sqrt{\log n}\mathcal{V}^{1/2}+d(V^{(l)},V^{(k)}) \le C d(V^{(l)},V^{(k)}) ,
\]
for some large constant $C$. So we have w.p. $\ge 1-c/n^2$
\[
\sum_{j\in G_l}\tilde{E}_{1,j}^{(k,l)}|v_j|\le C \mathcal{V}^{1/2} p t \sum_{j\in G_l} |v_j|\le C \mathcal{V}^{1/2} p t  \sqrt{n_l} \left(\sum_{j\in G_l} v_j^2 \right)^{1/2},
\]
where  $\mathcal{V}=\max_k \|V^{(k)} \|$, for some large constant $C$.\qed

\begin{lem}[\bf  $\tilde{E}_{2,j}^{(k,l)}$ upper bound]
\label{lem:e2_upper_bound}

Suppose $v\in \Gamma_K:= \text{span}\{1_{G_k}:k\in[K]\}^\bot, \|v\|=1.$ Let
\[
E_{2,j}^{(k,l)}=d^2(V^{(l)},V_j)+2d(V^{(l)},V_j)d(V^{(l)},V^{(k)}).
\]
and $\tilde{E}_{2,j}^{(k,l)}=E_{2,j}^{(k,l)}/d(V^{(l)},V^{(k)}).$ Then w.p. $\ge 1-n^{-1},$ we have
\[
\sum_{j\in G_l}\tilde{E}_{2,j}^{(k,l)}|v_j|\le Ct  \mathcal{V}^{1/2}p(\sqrt{n_l}+\log^2(n))\left(\sum_{j\in G_l} v_j^2 \right)^{1/2},
\]
\end{lem}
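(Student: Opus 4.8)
The plan is to split $\tilde E_{2,j}^{(k,l)}$ into its two natural pieces and bound each contribution to $\sum_{j\in G_l}\tilde E_{2,j}^{(k,l)}|v_j|$ separately. By definition,
\[
\tilde E_{2,j}^{(k,l)}=\frac{d^2(V^{(l)},V_j)}{d(V^{(l)},V^{(k)})}+2\,d(V^{(l)},V_j),
\]
so it suffices to control the quadratic piece (A) $\frac{1}{d(V^{(l)},V^{(k)})}\sum_{j\in G_l}d^2(V^{(l)},V_j)|v_j|$ and the linear piece (B) $2\sum_{j\in G_l}d(V^{(l)},V_j)|v_j|$. The key reduction is that for $j\in G_l$ the matrices $V_j$ and $V^{(l)}$ lie on the same geodesic, $V_j=(I+tX_j)V^{(l)}(I+tX_j)$, so the exact Bures computation already carried out in the proof of Lemma~\ref{lem:Lower_bound_for_L} gives $d^2(V^{(l)},V_j)=t^2\,\mathrm{Tr}(X_jV^{(l)}X_j)\le t^2\mathcal V\,\mathrm{Tr}(X_j^2)=t^2\mathcal V\|X_j\|_F^2$, using $X_j^2\succeq0$ and $\|V^{(l)}\|_\op\le\mathcal V$. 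Every summand is thereby reduced to a tail statement about $\mathrm{Tr}(X_j^2)=\|X_j\|_F^2$, a sub-exponential random variable with mean $p^2$.

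For the linear piece (B) I would apply Cauchy--Schwarz across $G_l$,
\[
\sum_{j\in G_l}d(V^{(l)},V_j)|v_j|\le t\mathcal V^{1/2}\Big(\sum_{j\in G_l}\|X_j\|_F^2\Big)^{1/2}\Big(\sum_{j\in G_l}v_j^2\Big)^{1/2},
\]
and then concentrate $\sum_{j\in G_l}\|X_j\|_F^2$, a sum of $n_l$ independent sub-exponential variables of mean $p^2$, by Bernstein's inequality (Theorem~2.8.2 in~\cite{vershynin_2018}), obtaining $\sum_{j\in G_l}\|X_j\|_F^2\le Cn_lp^2$ with probability at least $1-cn^{-2}$. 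This produces the dominant contribution $Ct\mathcal V^{1/2}p\sqrt{n_l}\big(\sum_{j\in G_l}v_j^2\big)^{1/2}$, i.e.\ the $\sqrt{n_l}$ part of the claimed bound.

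For the quadratic piece (A) I would first invoke the separation lower bound $d(V^{(l)},V^{(k)})\ge C_0\,t\mathcal V^{1/2}\sqrt{\log n}\,p$, available here exactly as in the proof of Lemma~\ref{lem:e1_upper_bound} (it follows from the theorem's hypothesis $\Delta\ge Ctp\sqrt{\log n}\,\mathcal V^{1/2}/(1-\beta)$). Then Cauchy--Schwarz gives $\sum_{j\in G_l}d^2(V^{(l)},V_j)|v_j|\le t^2\mathcal V\big(\sum_{j\in G_l}\|X_j\|_F^4\big)^{1/2}\big(\sum_{j\in G_l}v_j^2\big)^{1/2}$, and the point is that $\|X_j\|_F^4=\mathrm{Tr}(X_j^2)^2$ is only sub-Weibull of order $1/2$; concentrating its sum forces the inclusion of the maximal term $\max_{j\in G_l}\|X_j\|_F^4$, whose control over $n$ samples (via the operator-norm deviation $\|X_j\|_\op\le C(\sqrt p+\sqrt{\log n})$, Corollary~4.4.8 in~\cite{vershynin_2018}, and a union bound) is of scale $\log^2 n$ and simultaneously degrades the probability to $1-n^{-1}$. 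After dividing by the $\sqrt{\log n}\,p$ denominator this yields the residual $Ct\mathcal V^{1/2}p\log^2 n\big(\sum_{j\in G_l}v_j^2\big)^{1/2}$ term; summing (A) and (B) gives the stated inequality.

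The hard part will be piece (A): in contrast to the \emph{averaged} quantities $\frac1{n_k}\sum_{r\in G_k}d^2(\cdot,V_r)$ appearing in Lemmas~\ref{lem:e1_upper_bound} and~\ref{lem:Lower_bound_for_L}, here each summand carries a single un-averaged, heavier-tailed factor $\mathrm{Tr}(X_j^2)^2$, so concentration about the mean alone does not suffice and one must control the worst-case fluctuation across the entire cluster. Tracking how this maximum, the sub-Weibull scale, the separation denominator, and the union bound over the $n$ matrices interact is the delicate bookkeeping that pins down both the weaker $1-n^{-1}$ probability and the $\log^2 n$ slack (which is harmless, since it is subsequently absorbed into $\sqrt{n_l}$ through the cluster-size condition $\underline n\gtrsim\log^2 n$ in Step~2).
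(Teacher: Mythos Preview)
Your decomposition into pieces (A) and (B), the reduction $d^2(V^{(l)},V_j)=t^2\,\mathrm{Tr}(X_jV^{(l)}X_j)\le t^2\mathcal V\,\mathrm{Tr}(X_j^2)$, and the use of the separation lower bound $d(V^{(l)},V^{(k)})\ge C_0 t\mathcal V^{1/2}p\sqrt{\log n}$ all match the paper's structure. The routes diverge in the concentration step. For (B) the paper applies general Hoeffding (Theorem~2.6.3 in~\cite{vershynin_2018}) directly to the weighted sum $\sum_j\|X_j\|_F\,|v_j|$, using that $\|X_j\|_F-\mathbb E\|X_j\|_F$ is $O(1)$-sub-Gaussian; your Cauchy--Schwarz followed by Bernstein on $\sum_j\|X_j\|_F^2$ is equally valid and arguably cleaner. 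For (A) the difference is more substantial: the paper centers $\mathrm{Tr}(X_j^2)$ and controls $\sup_{v}\big|\sum_j(\mathrm{Tr}(X_j^2)-p^2)|v_j|\big|$ via Adamczak's concentration inequality for suprema of unbounded empirical processes (Theorem~4 in~\cite{10.1214/EJP.v13-521}) together with the maximal inequality $\|M_1\|_{\psi_1}\le C\log(n_l)\,p^2$; the resulting choice $s=Cp^2\log^2 n$ is precisely what produces both the additive $\log^2 n$ term and the weaker $1-n^{-1}$ probability. Your Cauchy--Schwarz route, reducing everything to a bound on $(\sum_j\|X_j\|_F^4)^{1/2}$, sidesteps the empirical-process machinery entirely and is more elementary, with the added benefit that the bound is automatically uniform in $v$ since the random factor no longer depends on it. One caveat: your account of where $\log^2 n$ enters is off---$\max_j\|X_j\|_F^4$ is of scale $p^4$, not $\log^2 n$; what your approach actually delivers (e.g.\ via $\sum_j\|X_j\|_F^4\le\max_j\|X_j\|_F^2\cdot\sum_j\|X_j\|_F^2$ and Bernstein) is a bound of order $p^2\sqrt{n_l}$ up to lower-order log corrections, which is no worse than the paper's $p^2(\sqrt{n_l}+\log^2 n)$ and fully sufficient for the use in Step~2.
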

\emph{Proof.} First we make the following claim:
\begin{cla}
\label{cla:e2_upper_bound}
Following the above setting, w.p. $\ge 1-cn^{-1},$ we have

\begin{equation}
\label{eqn:e2_upper_bound_1}
\sum_{j\in G_l} d^2(V^{(l)},V_j)|v_j|\le Ct^2\mathcal{V} p^2(\sqrt{n_l}+\log(n)^2)\left(\sum_{j\in G_l} v_j^2 \right)^{1/2}, 
\end{equation}
\begin{equation}
\label{eqn:e2_upper_bound_2}
\sum_{j\in G_l} d(V^{(l)},V_j)|v_j|\le Ct\mathcal{V}^{1/2} p\sqrt{n_l}\left(\sum_{j\in G_l} v_j^2 \right)^{1/2},
\end{equation}
for some large constant $C$.
\end{cla}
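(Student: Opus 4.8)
The plan is to reduce both inequalities to concentration estimates for the single scalar quadratic form $q_j := \mbox{\rm Tr}(X_j V^{(l)} X_j)$, $j \in G_l$, which are i.i.d.\ across $j$. The starting point is the exact identity $d^2(V^{(l)}, V_j) = t^2 q_j$. To see this, recall that $V_j = (I+tX_j)V^{(l)}(I+tX_j)$, so on the event that $I + tX_j \succeq 0$ (which holds for all $j$ simultaneously with probability at least $1-cn^{-1}$ under the assumed bound on $t$, exactly as justified in the proof of Lemma~\ref{lem:Lower_bound_for_L} and the remark following the main proof) one has $\sqrt{(V^{(l)})^{1/2}V_j(V^{(l)})^{1/2}} = (V^{(l)})^{1/2}(I+tX_j)(V^{(l)})^{1/2}$. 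Substituting this into the Bures formula $d^2(V^{(l)},V_j) = \mbox{\rm Tr}(V^{(l)}) + \mbox{\rm Tr}(V_j) - 2\mbox{\rm Tr}\big[\sqrt{(V^{(l)})^{1/2}V_j(V^{(l)})^{1/2}}\big]$ and cancelling the linear-in-$X_j$ terms leaves precisely $d^2(V^{(l)},V_j) = t^2\mbox{\rm Tr}(X_jV^{(l)}X_j) = t^2 q_j$. Since $\mathbb{E}[X_j^2] = pI$, the mean is $\mathbb{E}q_j = p\,\mbox{\rm Tr}(V^{(l)}) \le \mathcal{V}p^2$, and $q_j$ is a sub-exponential quadratic form in the Gaussian entries of $X_j$ (second Wiener chaos), with $q_j \le \mathcal{V}\mbox{\rm Tr}(X_j^2)$ pointwise.

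For inequality~\eqref{eqn:e2_upper_bound_2} I would simply apply Cauchy--Schwarz, $\sum_{j\in G_l} d(V^{(l)},V_j)|v_j| = t\sum_{j\in G_l}\sqrt{q_j}\,|v_j| \le t\,(\sum_{j\in G_l} q_j)^{1/2}(\sum_{j\in G_l}v_j^2)^{1/2}$, and then control $\sum_{j\in G_l}q_j \le \mathcal{V}\sum_{j\in G_l}\mbox{\rm Tr}(X_j^2)$. Because $\mbox{\rm Tr}(X_j^2)$ are i.i.d.\ sub-exponential with mean $p^2$, Bernstein's inequality (Theorem~2.8.2 in~\cite{vershynin_2018}) gives $\sum_{j\in G_l}\mbox{\rm Tr}(X_j^2) \le C n_l p^2$ with probability at least $1-cn^{-1}$, using $n_l \ge \underline{n} \ge C\log^2 n$ so that the mean dominates the deviation term. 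This yields $(\sum_{j\in G_l}q_j)^{1/2}\le C\mathcal{V}^{1/2}p\sqrt{n_l}$ and hence the claimed bound.

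For inequality~\eqref{eqn:e2_upper_bound_1} I would split $q_j = \mathbb{E}q_j + (q_j - \mathbb{E}q_j)$ to separate the two parts of the target bound. The mean part is elementary: $\mathbb{E}q_j\sum_{j\in G_l}|v_j| \le \mathcal{V}p^2\sqrt{n_l}(\sum_{j\in G_l}v_j^2)^{1/2}$, producing the $\sqrt{n_l}$ contribution. For the fluctuation part, Cauchy--Schwarz gives $\sum_{j\in G_l}|q_j-\mathbb{E}q_j|\,|v_j| \le (\sum_{j\in G_l}(q_j-\mathbb{E}q_j)^2)^{1/2}(\sum_{j\in G_l}v_j^2)^{1/2}$, so it remains to bound $\sum_{j\in G_l}(q_j-\mathbb{E}q_j)^2$. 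Here the bulk is of order $n_l\,\mathrm{Var}(q_j) \lesssim n_l p^2\mathcal{V}^2$ (the variance estimate $\mathrm{Var}(q_j)\lesssim \mathcal{V}^2 p^2$ coming from a Hanson--Wright computation), while the heavy right tail of the squared sub-exponentials $(q_j-\mathbb{E}q_j)^2$ is governed by the maximal term. A union bound over $j\in G_l$ together with the operator-norm tail $\max_j\|X_j\|_{\op}\lesssim \sqrt{p}+\sqrt{\log n}$ (Corollary~4.4.8 in~\cite{vershynin_2018}) shows $\max_{j}|q_j-\mathbb{E}q_j| \lesssim \mathcal{V}p\log n$, so the maximal-term contribution to $\sum_{j\in G_l}(q_j-\mathbb{E}q_j)^2$ is at most $C\mathcal{V}^2 p^2\log^2 n$. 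Combining, $(\sum_{j\in G_l}(q_j-\mathbb{E}q_j)^2)^{1/2}\le C\mathcal{V}p(\sqrt{n_l}+\log n)$, which after multiplying by $t^2$ and folding into the mean part gives~\eqref{eqn:e2_upper_bound_1} with room to spare in the power of $\log n$.

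The main obstacle I anticipate is the second inequality, specifically the bookkeeping of the concentration of $\sum_{j\in G_l}(q_j-\mathbb{E}q_j)^2$: the summands are squares of sub-exponential variables, hence only sub-Weibull of order $1/2$, so their sum is not controlled by a naive Bernstein bound and one must isolate the contribution of the largest order statistic, which is exactly where the extra logarithmic factor enters. Tracking the correct powers of $p$, $n_l$ and $\log n$ through the Hanson--Wright variance estimate and the operator-norm tail bound for the symmetric Gaussian matrices $X_j$ is the delicate part; by contrast, the reduction to $q_j$ in the first paragraph and the Cauchy--Schwarz step for~\eqref{eqn:e2_upper_bound_2} are routine once the positive-definiteness event $I+tX_j\succeq 0$ has been secured.
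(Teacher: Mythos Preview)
Your approach is correct and genuinely different from the paper's. Both start from the identity $d^2(V^{(l)},V_j)=t^2\,\mbox{\rm Tr}(X_jV^{(l)}X_j)\le t^2\mathcal V\,\mbox{\rm Tr}(X_j^2)$, but diverge from there. For \eqref{eqn:e2_upper_bound_2} the paper applies a general Hoeffding inequality directly to the weighted sum $\sum_{j}\sqrt{\mbox{\rm Tr}(X_j^2)}\,|v_j|$, treating $\|X_j\|_F$ as sub-Gaussian; your Cauchy--Schwarz followed by Bernstein on the unweighted sum $\sum_j\mbox{\rm Tr}(X_j^2)$ is cleaner and, importantly, yields a bound that is automatically uniform in $v$ because the random factor no longer depends on $v$. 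For \eqref{eqn:e2_upper_bound_1} the paper goes through a Talagrand-type concentration inequality for suprema of empirical processes with sub-exponential envelopes (Adamczak's Theorem~4), combined with a maximal inequality to control the $\psi_1$-norm of the largest summand; this machinery handles the supremum over $v\in\mathbb V$ directly. Your route---split off the mean, then Cauchy--Schwarz the fluctuation to reduce to bounding $\big(\sum_j(q_j-\E q_j)^2\big)^{1/2}$---again decouples $v$ and avoids the empirical-process tool entirely, at the price of needing a high-probability bound on a sum of squared sub-exponentials. The ``bulk plus maximum'' heuristic you sketch is not a theorem as written, but it can be made rigorous (for instance via $\sum_j Y_j^2\le\max_j|Y_j|\cdot\sum_j|Y_j|$, bounding the first factor by Hanson--Wright plus union bound and the second by Bernstein), and you have correctly flagged this as the only non-routine step. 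In short: the paper trades elementary steps for one heavy black box; you trade the black box for a slightly more delicate tail computation, and your final bound is actually a bit sharper in the logarithmic factor.
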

If the claim holds, by plugging in the lower bound for $\Delta$ in the assumption, we have
\begin{align*}
\sum_{j\in G_l}\tilde{E}_{2,j}^{(k,l)}|v_j|
&\le \frac{ Ct^2\mathcal{V} p^2(\sqrt{n_l}+\log(n)^2)}{C_0t\mathcal{V}^{1/2}p\sqrt{\log n} }\left(\sum_{j\in G_l} v_j^2 \right)^{1/2}+Ct\mathcal{V}^{1/2} p\sqrt{n_l}\left(\sum_{j\in G_l} v_j^2 \right)^{1/2}\\
&\le Ct  \mathcal{V}^{1/2}p(\sqrt{n_l}+\log^2(n))\left(\sum_{j\in G_l} v_j^2 \right)^{1/2}\\
\end{align*}
\emph{Proof of the claim.} 
First we look at (\ref{eqn:e2_upper_bound_2}):
\[
\sum_{j\in G_l} d(V^{(l)},V_j)|v_j|\le t \mathcal{V}^{1/2} \sum_{j\in G_l}\sqrt{\mbox{\rm Tr}(X_j^2 )}|v_j|.
\]
By Theorem 2.6.3 (General Hoeffding’s inequality) in~\cite{vershynin_2018} we have w.p.$\ge 1-c/n^2,$
\[
\sum_{j\in G_l}\sqrt{\mbox{\rm Tr}(X_j^2 )}|v_j|\le p\sum_{j\in G_l}|v_j|+Cp\sqrt{n_l} \left(\sum_{j\in G_l} v_j^2 \right)^{1/2},
\]
for some constant $C.$ i.e.,  w.p.$\ge 1-c/n^2,$
\[
\sum_{j\in G_l} d(V^{(l)},V_j)|v_j|\le  Ct \mathcal{V}^{1/2}p\sqrt{n_l}\left(\sum_{j\in G_l} v_j^2 \right)^{1/2},
\]
for some constant $C.$ Next we will show (\ref{eqn:e2_upper_bound_1}). First note that $ d^2(V^{(l)},V_j)\le t^2\mathcal{V} \mbox{\rm Tr}(X_j^2 ),$ let
$$G_1(v)=\left|\sum_{j\in G_l}[ \mbox{\rm Tr}(X_j^2 )-\mathbb{E}\mbox{\rm Tr}(X_j^2 )]|v_j|\right|,$$ then
 \[
\sum_{j\in G_l} d^2(V^{(l)},V_j)|v_j|\le t^2 \mathcal{V} G_1(v)+ t^2 \mathcal{V}p^2 \sqrt{n_l}\left(\sum_{j\in G_l} v_j^2 \right)^{1/2}.
\]
W.O.L.G., we may assume $v\in \mathbb{V}:=\{ v\in\Gamma_K: \|v\|=1 \},\ \|G_1 \|_{\mathbb{V}}:=\sup_{v\in \mathbb{V}} | G_1(v)|.$ Then by Theorem 4 in~\cite{10.1214/EJP.v13-521} we know
\[
\mathbb{P}(\|G_1 \|_{\mathbb{V}}\ge 2\mathbb{E} \|G_1 \|_{\mathbb{V}}+s )\le \exp\left( -\frac{s^2}{3\tau_1^2}\right)+3\exp\left( -\frac{s}{3\|M_1\|_{\psi_1}}\right),
\]
where
\[
\tau_1^2=\sup_{v\in \mathbb{V}} \sum_{j\in G_l}  v_j^2 \mathbb{E} [ \mbox{\rm Tr}(X_j^2 )-\mathbb{E} \mbox{\rm Tr}(X_j^2 )]^2\le \mathbb{E}  [\mbox{\rm Tr}(X_j^2 )]^2\le p^4 ,
\]
\[
M_1=\max_{j\in G_l, v\in \mathbb{V}} \left|v_j[ \mbox{\rm Tr}(X_j^2 )-\mathbb{E} \mbox{\rm Tr}(X_j^2 )] \right|\le \max_{j\in G_l} \left|[\mbox{\rm Tr}(X_j^2 )-\mathbb{E} \mbox{\rm Tr}(X_j^2 )] \right|.
\]
By maximal inequality (Lemma 2.2.2 in~\cite{vanderVaart1996}) we have 
\[
\|M_1\|_{\psi_1}\le C \log(n_l)\max_{j\in G_l} \left\|[\mbox{\rm Tr}(X_j^2 )-\mathbb{E} \mbox{\rm Tr}(X_j^2 )] \right\|_{\psi_1}\le C \log(n_l) p^2.
\]
So by choosing $s=C \log^2(n) p^2 $, we have w.p. $\ge 1-c/n,$
\[
G_1(v)\le 2\mathbb{E} \|G_1 \|_{\mathbb{V}}+C \log^2(n) p^2,
\]
for some $C,c.$ On the hand,
\begin{align*}
\mathbb{E} \|G_1 \|_{\mathbb{V}}&= \mathbb{E} \left|\sum_{j\in G_l}[ \mbox{\rm Tr}(X_j^2 )-\mathbb{E}\mbox{\rm Tr}(X_j^2 )]|v_j|\right|\\
&\le \sum_{j\in G_l}\mathbb{E} |\mbox{\rm Tr}(X_j^2 )-\mathbb{E}\mbox{\rm Tr}(X_j^2 )||v_j|\\
&\le 2\mathbb{E} |\mbox{\rm Tr}(X_1^2 )| \sqrt{n_l} \left(\sum_{j\in G_l} v_j^2 \right)^{1/2} \\
&=2p^2 \sqrt{n_l} \left(\sum_{j\in G_l} v_j^2 \right)^{1/2}.
\end{align*}
So w.p. $\ge 1-cn^{-1},$ we have
\[
\sum_{j\in G_l} d^2(V^{(l)},V_j)|v_j|\le Ct^2\mathcal{V} p^2(\sqrt{n_l}+\log(n)^2)\left(\sum_{j\in G_l} v_j^2 \right)^{1/2}.
\]\qed

\end{document}